\theoremstyle{plain}
\newtheorem{theorem}{Theorem}[section]
\newtheorem{prop}[theorem]{Proposition}
\newtheorem{lemma}[theorem]{Lemma}
\newtheorem{corollary}[theorem]{Corollary}
\theoremstyle{definition}
\newtheorem{defn}[theorem]{Definition}
\newtheorem{assumption}[theorem]{Assumption}
\theoremstyle{remark}
\newtheorem{remark}[theorem]{Remark}
\begin{document}
\icmlsetsymbol{equal}{*}
\twocolumn[\icmltitle{Score Matching with Missing Data}

\begin{icmlauthorlist}
    \icmlauthor{Josh Givens}{bristol}
    \icmlauthor{Song Liu}{bristol}
    \icmlauthor{Henry W J Reeve}{nanjing}
\end{icmlauthorlist}

\icmlaffiliation{bristol}{School of Mathematics, University of Bristol, Bristol, UK}
\icmlaffiliation{nanjing}{School of Artificial Intelligence, Nanjing University, China}

\icmlcorrespondingauthor{Josh Givens}{josh.givens@bristol.ac.uk}
\vskip 0.3in
]
% You may provide any keywords that you
% find helpful for describing your paper; these are used to populate
% the "keywords" metadata in the PDF but will not be shown in the document
\icmlkeywords{Score Matching, Missing Data, Gaussian Graphical Models, Unsupervised Learning}
\printAffiliationsAndNotice{}

\begin{abstract}
    Score matching is a vital tool for learning the distribution of data with applications across many areas including diffusion processes, energy based modelling, and graphical model estimation. Despite all these applications, little work explores its use when data is incomplete. We address this by adapting score matching (and its major extensions) to work with missing data in a flexible setting where data can be partially missing over any subset of the coordinates. We provide two separate score matching variations for general use, an importance weighting (IW) approach, and a variational approach. We provide finite sample bounds for our IW approach in finite domain settings and show it to have especially strong performance in small sample lower dimensional cases. Complementing this, we show our variational approach to be strongest in more complex high-dimensional settings which we demonstrate on graphical model estimation tasks on both real and simulated data.
\end{abstract}

\section{Introduction}
%% Set-up Area
Over the last decade, score matching has established itself as a powerful tool with downstream use in many areas of machine learning. Examples include: energy based modelling \citep{Swersky2011,Bao2020,Li2019a}, mode-seeking clustering \citep{Sasaki2014}, and perhaps most prominently of all Diffusion processes \cite{Song2019,Song2021a, Tashiro2021, Song2021b, Huang2021}.
Score matching aims to learn the score of a distribution which is the gradient of the log of the probability density function (PDF) ($\score(x)=\nabla_{\bm x} \log p(\bm x)$). 
In contrast to modelling the density directly, the score does not need to integrate to one meaning there is no need to calculate a normalising constant. This allows it to be much more more flexibly modelled than the density itself. Furthermore, the validity of the score matching objective itself requires only very mild assumptions of the family of proposed scores further ensuring this flexibility. Alongside the classical method \citep{Hyvarinen2005}, various adaptations of score matching have arisen to improve performance, decrease computational cost, and extend the approach to a wider range of settings \cite{Hyvarinen2007, Vincent2011, Song2020a,Liu2022}.

In this work, we extend the score matching framework to handle missing data at training time. Specifically, we learn the full score function from partially missing multidimensional input data, a paradigm we term \emph{missing score matching}. Crucially, our approach is compatible with any parameterised score model, enabling its application to both explicit score formulations and more general approaches such as neural networks (NNs).
We propose two methods to adapt the original score matching method as well as its popular adaptations, truncated, sliced, and denoising score matching \citep{Hyvarinen2007,Vincent2011,Liu2017b,Song2020a}. These two distinct but closely related methods complement each other allowing for a wide range of problems to be tackled. The first method is a simpler importance weighting (IW) approach which we refer to as marginal IW score matching. For this method we obtain finite sample bounds in the bounded domain setting under certain conditions. We also provide experimental results demonstrating its efficacy in lower dimensional settings and where less data is available. Our second approach is a more computationally sophisticated variational approach which we refer to as marginal variational score matching. We demonstrate the efficacy of this approach in more complex, high dimensional settings by applying it to the problem of graphical model estimation with both real and synthetic datasets.

In section \ref{sec:related_works} we discuss relevant works for score matching and related fields. In section \ref{sec:problem} we will introduce our problem more formally including score matching and any notation used. Section \ref{sec:miss_score} will be used to introduce our methods.
Section \ref{sec:results} will present results on some real and simulated datasets. In Section \ref{sec:conclusion} we give our conclusion. 

\section{Related Works}\label{sec:related_works}
While there has been some work which utilises score matching with missing data, these approaches mostly do so exclusively through the lens of diffusion models. Specifically works such as MissDiff \citep{Ouyang2023} and Ambient Diffusion  \citep{Daras2023a} require the score function itself to take the form of a neural network (NN) which learns the scores of the fully-observed and corrupted scores simultaneously. This prohibits their use in situations where our model for the score is some explicit parameterisation whose parameters we want to learn as is the case in settings such as energy based modelling \citet{Li2023a,Bao2020,Salimans2021} and Gaussian graphical models \citep{Lin2016, Yu2018}. Ambient Diffusion also requires the data to be further artificially corrupted in order to create a pseudo-supervised learning paradigm making both Ambient Diffusion and MissDiff subject to various levels of out of sample learning without specific adjustments for this phenomenon.

Looking more generally at distribution estimation with missing data, multiple works in the field of generative modelling have looked to tackle the problem of providing a generative model for a distribution given corrupted samples from it. Prominent among these are MisGAN \cite{Li2019}, which presents a marginalised GAN framework and MCFlow \cite{Richardson2020} , which presents a EM like normalising flow framework. Neither of these approaches allow for flexible specification of a parametric density estimate however with MCFlow requiring the density to be a normalising flow and MisGAN having no model for the density whatsoever.

To our knowledge, the only approach which seems to adapt score matching to missing data in a parameter preserving manner is presented in \cite{Uehara2020} using an iterative EM-like procedure. However they themselves admit that there is little intuitive understanding of when this approach will converge. Additionally, due to the nature of the score matching objective, the expectation step cannot be directly approximated using Monte Carlo estimation and instead requires fractional importance weighting, a method which employs nested Monte Carlo estimates introducing bias into the training objective.

Parallel to this, some papers have looked to extend score matching to the latent variable setting, an area with much commonality to missing data \citep{Vertes2016,Bao2020, Bao2021}. Latent variable modelling differs in two crucial aspects from missing score matching. Firstly the components which are unobserved (the latent variables) remain constant between samples, and secondly there is not necessarily a notion of a ground truth for the unobserved components in when data is corrupted. Additionally each of these works has limitations; \citet{Vertes2016} only applies to exponential families, \citet{Bao2020} requires a gradient unrolling step in its optimisation which is computationally expensive and can lead to errors in the optimisation procedure (as acknowledged in their follow on work), and \citet{Bao2021} is only given for denoising score matching, not for classical or sliced score matching.
\section{Setting}\label{sec:problem}
 % Define Symbols
 % Define Notions/concepts
 % State Problem 
 % Mention Links of problem to other works 
\subsection{Notation}
For $n\in\mathbb{N}$ let $[n]\coloneqq\{1,\dotsc, n\}$. For a random variable $Z$ we use $\supp(Z)$ for the support of $Z$. For $f:\R^d\rightarrow\R$ we write $\partial_jf(\bm x)\coloneqq\frac{\partial f}{\partial x_j}$ where $\bm x=\lrbr{x_1,\dotsc,x_d}^\T$ and $\nabla_{\bm x} f(\bm x)\coloneqq\lrbr{\partial_j f(\bm x),\dotsc,\partial_d f(\bm x)}^\T\!\!,$ the gradient of $f$. For $\bm f:\R^d\rightarrow\R^d$ take $\bm f(\bm x)_j$ as the $j$\textsuperscript{th} component of $\bm f(\bm x)$ and write $\nabla_{\bm x}\cdot \bm f(\bm x)\coloneqq\partial_1 \bm f(\bm x)_1+\cdots+\partial_d \bm f(\bm x)_d$. Finally for $\bm a,\bm b\in\R^d$, take $\bm a\circ\bm b$ to be the Hadamard product. 

We now introduce some indexing notation which we will be using for RVs and functions throughout. This will prove useful when identifying the missing non-missing components of our data. Let $Z$ be a random variable taking values in $\R^\datadim$. We use $Z_j$ to refer to the $j$\textsuperscript{th} component $Z$ and for $\obsind\subseteq[d]$ take $Z_{\obsind}=\{Z_j\}_{j\in\obsind}$.
We use negation in indexing to mean the complementing coordinates. More precisely we let $-j$ denote $[d]\setminus\{j\}$ and let $\missind$ denote $[d]\setminus\obsind$. We typically use $Z^{(i)}$ to denote an independent copy of $Z$. 
For a function $f:\cal{X}\rightarrow\cal Y$ and $\bm x_{\obsind}\in\cal X_\obsind,\bm x'_{\missind}\in\cal X_{\missind}$, we take $f(\bm x_{\obsind},\bm x'_{\missind})$ to be $f(\bm z)$ where 
\begin{align*}
    z_j\coloneqq\begin{cases}
        x_j \quad\text{if }j\in\obsind\\
        x'_j\quad\text{if }j\in\missind
    \end{cases}.
\end{align*} 
We will take $X$ to be a RV taking values in $\cal X\subseteq\R^{\datadim}$ representing our original dataset and $X'$ to be a RV representing some generative/variational/importance weighting distribution. i.e., the ``artificial distributions'' we will utilise in our method. Similarly, we take $\E,\E'$ to be expectations with respect to (w.r.t.) $X,X'$ respectively. 

Throughout we take $p$ to be the pdf of the RV, $X$, and $p_{\theta}$ to be a model therein. We let $q$ represent an unnormalised density (i.e. $N^{-1}\cdot q=p$ for some normalising constant $N>0$.) We will write marginalisations/conditionings for both true and model densities implicitly with $p(\bm x_{\obsind})\coloneqq\int_{\cal X}p(\bm x)\diff\bm x_{\missind}$ and $p(\bm x_{\obsind}|\bm x_\missind)$ being the conditional density of $X_{\obsind}|X_{\missind}=\bm x_{\missind}$ for example.

Now that we have introduced our notation we can move onto the key area of focus for our work, score matching.
\subsection{Score Matching}
First proposed by \cite{Hyvarinen2005}, score matching aims to learn the gradient of the log-density (score). The advantage of this framework over full density approaches such as maximum likelihood estimation (MLE) is that we are not restricted to parametric models which integrate to $1$. This allows us to be much more flexible in how we parameterise in turn making high dimensional distribution modelling more feasible. We now introduce the approach.

Let $X$ be a RV over $\R^d$ with PDF $p$. We say that $q$ is the unnormalised density of $X$ if  $N^{-1}\cdot \udensity(\bm x)=p(\bm x)$ where $p$ is the PDF of $X$ and $N$ is the normalising constant of $\udensity$. Define the score, of $X$ to be 
\begin{align*}
    \score(\bm x)\coloneqq\nabla_{\bm x} \log p(\bm x) =\nabla_{\bm x} \log \udensity(x).
\end{align*}
The aim of score matching is to learn $\score$ from a collection of IID copies of $X$ which we denote $\data\coloneqq\{X^{(i)}\}_{i=1}^n$. Following \citet{Hyvarinen2005}, we introduce a generic parameterised proposal score $\score_\param$ for $\param\in\parameterset\subseteq\R^\paramdim$ and aim to minimise the \emph{Fisher Divergence} between the true distribution and our proposal distribution which is given by
\begin{align*}
    \fisherdiv(\param)\coloneqq\E[\|\score(X)-\score_{\param}(X)\|^2].
\end{align*}
The key result from \citet{Hyvarinen2005} which enables us to practically implement score matching is that under certain (fairly minimal) regularity conditions, which we provide in Appendix \ref{app:classic_score_matching}, we have
\begin{align}\label{eq:score_equiv}
   \scoreloss(\param)&\coloneqq \E\lrbrs{2\nabla_{X}\cdot\score_\param(X)+\|\score_\param(X)\|^2}=\fisherdiv(\param)-C
\end{align}

where here and throughout, we take $C$ to represent any constant which does not depend upon $\param$. Crucially, $\scoreloss(\param)$ is now an expectation of observable random variables. Hence we can now approximate this with our data and take $\hat\param$ as
\begin{align*}
\hat\param\coloneqq\argmin_{\param}\inv{n}\sum_{i=1}^n\lrbrs{2\nabla_{X^{(i)}}\cdot\score_\param(X^{(i)})+\|\score_\param(X^{(i)})\|^2}.
\end{align*}
\subsubsection*{Truncated Score Matching}
A limitation of standard score matching is that it requires $\lim_{x_i\rightarrow\infty}p(\bm x)=0$ for all $x_{i}\in\R$. Thus it cannot be used for many distributions with compact support if the density does not converge to zero at the (topological) boundary. Initial work to adapt score matching to truncated distributions was presented in \cite{Hyvarinen2007} for distributions on $[0,\infty)$ then further expanded in \cite{Liu2022,yu2022generalized} to general compact spaces $\cal X$. For our compact space $\cal X\subseteq\R^d$ we use $\partial \cal X$ to denote the (topological) boundary. We now minimise some weighted version of the Fisher divergence whose weights go to zero at the boundary. Specifically let $\truncfunc:\cal X\rightarrow \R$ be a function satisfying $\lim_{\bm x\rightarrow \bm x'} \truncfunc(\bm x)_\dimind=0$
for any $\bm x'\in\partial \cal X,~\dimind\in[\datadim]$. Our objective is then
\begin{align*}
    \fisherdiv_{\trunc}(\param)\coloneqq\E\lrbrs{\left\|\truncfunc^{\half}(X)\circ\lrbr{\score_{\param}(X)-\score(X)}\right\|^2}.
\end{align*}
Just as in classical score matching we obtain an equivalence (though this time via Green's theorem rather than simple integration by parts) giving us that
under certain regularity conditions on $\truncfunc,\score$, and $\cal X$,
\begin{align*}
\scoreloss_{\trunc}(\param)\coloneqq&\E\lrbrs{\sum_{\dimind\in\datadim}\truncfunc(X)_\dimind\lrbr{2\partial_j\score_{\param}(X)_j+\score_{\param}(X)_\dimind^2}}\\
&+~\E\lrbrs{\sum_{\dimind\in\datadim}\partial_{\dimind}\truncfunc(X)_{\dimind}\score_{\param}(X)_\dimind}=\fisherdiv_{\trunc}(\param)-C.
\end{align*}
This can again be approximated via data using standard Monte Carlo approximation.
Full details on the conditions required for this approach alongside the proof can be found in \cite{Liu2022}. Two other key extensions of score matching are sliced score matching \citep{Song2020a} and denoising score matching \citep{Vincent2011}. We introduce these extensions in Appendix \ref{app:score_ext} with our corresponding adaptations to missing data given in Appendix \ref{app:miss_score_ext}. Now, 
% that we have introduced score matching and its various extensions, 
we give our missing data scenario.
\subsection{Missing Data Scenario}
Instead of observing samples from $X$ we assume that we observe samples from the corrupted version of the RV given by $\tilde X$. To define $\tilde X$ we introduce a mask RV $M$ over $\{0,1\}^\datadim$ and then define $\tilde X$ by
\begin{align*}
    \tilde X_j = \begin{cases} X_j &\text{if }M_j=1\\
    \NaN &\text{if }M_j=0
    \end{cases}
\end{align*}
where $\tilde{X}_j=\NaN$ represents that coordinate being missing.
We will be focussing on the missing completely at random scenario where $M\!\!\perp\!\!X$. However, we do provide an extension to missing not at random data in Appendix \ref{app:MNAR_results}. We introduce the RV $\Obsind$ on $\powerset([\datadim])$ defined by $\Obsind\coloneqq\{i\in[\datadim]|M_i=1\}$ so that $\Obsind$ gives the non-corrupted coordinates of $\tilde X$ and take $\lambda$ to be a sample of $\Lambda$. Crucially given samples from $\tilde X$, we also have samples from $X_{\Obsind}$.

Our aim is to adapt the score matching objective to estimate the full score $\score$ by a parameterised score $\score_{\param}$ using samples from the corrupted data $\tilde D\coloneqq \{\tilde X^{(i)}\}_{i=1}^n\equiv \{ X^{(i)}_{\Obsind_i}\}_{i=1}^n$.

\section{Marginal Score Matching}\label{sec:miss_score}
To motivate our approach we look at how we might use MLE in the case where the normalising constant and conditional normalising constants were calculable. For $p_{\param}$ our parametric model of the density, we would choose $\hat\param$ to be
\begin{align*}
    \hat\param\coloneqq\argmax_{\param}\sum_{i=1}^n \log\tilde p_{\param}(\tilde X^{(i)})
\end{align*}
where $\tilde p_{\param}$ is the associated corrupted data density when $X\sim p_{\param}$. As our data is missing completely at random this is actually equivalent to maximising
\begin{align*}
    \sum_{i=1}^n \log p_{\param;\Obsind_i}(X_{\Obsind_i}^{(i)})\text{, where }
    p_{\param;\obsind}(\bm x_{\obsind})\coloneqq\int_{\cal X_{\missind}} p_{\param} (\bm x)\diff \bm x_{\missind}.
\end{align*}
For notational simplicity we will thus reframe our problem as working with marginal samples $\{X^{(i)}_{\Obsind_i}\}_{i=1}^\nsamples$.
\subsection{Marginal Score Matching}
Our approach is to directly alter the score matching objective similarly.
Just as densities have associated \emph{marginal densities} so do scores have associated \emph{marginal scores}.

\begin{defn}[Marginal Score function]
    Let $\score$ be a score function with $\score(\bm x)=\nabla_{\bm x}\log \udensity(\bm x)$ for $q$ an unnormalised PDF. Then the associated \emph{marginal score} function is 
    \begin{align}\label{eq:marginal_score}
        \score_{\obsind}(\bm x_{\obsind})\coloneqq\nabla_{\bm x_{\obsind}}\log \int_{\R^{d-|\Obsind|}} \udensity(\bm x)\diff \bm x_{\missind}.
    \end{align}
\end{defn}

This definition of marginal scores restricts $\score$ to a genuine score function. For this reason we will also want $\score_\param$ to always be a genuine score function or at least to have an anti-derivative. The simplest way to achieve this is to work with $\udensity_\param:\cal{X}\rightarrow(0,\infty)$ as our baseline and define $\score_{\param}(\bm x)\coloneqq\nabla_{\bm x}\log \udensity_\param (\bm x)$.
We will also take $p_{\param}(\bm x)\coloneqq \lrbr{\int_{\cal X}\udensity_\param(\bm x)\diff \bm x}^{-1}\udensity_\param(\bm x)$ which we assume to be unknown. 

With this notion of a marginal score we can define our marginal Fisher divergence to be 
\begin{align}\label{eq:marg_fisherdiv}
    \fisherdiv_{\marg}(\param)&\coloneqq\E[\|\score_{\Obsind}(X_{\Obsind})-\score_{\Obsind;\param}(X_{\Obsind})\|^2]
\end{align}
where $\score_{\obsind;\param}$ is defined analogously to $\score_{\obsind}$. 
As with normal score matching can relate this objective to one involving no terms of $\score_{\obsind}$. We first need the following assumptions.
\begin{assumption}\label{assump:unique_theta}
For any $\param>0, \obsind\in\supp(\Obsind)$:
\begin{enumerate}[(a)]
    \item $p_{\param}$ is well defined, i.e. $\int_{\cal X}\udensity_\param(x)\diff x<\infty$;
    \item $\E[\|\score_\obsind(X_{\obsind})\|^2],~\E[\|\score_{\obsind;\param}(X_{\obsind})\|^2]<\infty$;
    \item $p_\obsind(\bm x)$ is differentiable and $\udensity_{\obsind;\param}$ is twice differentiable;
    \item $p_\obsind(\bm x_{\obsind})\score_{\obsind;\param}(\bm x_{\obsind}) \conv 0$ as $\|\bm x\|\conv\infty$;
    \item $p_{\obsind;\param}(X_{\obsind})=p_{\obsind}(X_{\obsind})$ 
almost surely (a.s.) for all $\obsind\in\supp(\Obsind)$, implies that $p_{\param}(X)= p(X)$ a.s..
\end{enumerate}
\end{assumption}

Assumption (a) ensures that our proposal unnormalised density is always a genuine unnormalised density. Assumptions (b)-(d) are similar to the standard assumptions given for standard score matching. Assumption (e) is an identifiability assumption which is required to be feasibly able to learn the true data distribution from our corrupted data.

\begin{prop}\label{prop:marg_score_population}
    Given Assumptions \ref{assump:unique_theta}(a)-(d) hold
    \begin{align}
        \scoreloss_{\marg}(\param)\coloneqq&\E[2\nabla_{X_\Obsind}\cdot\score_{\Obsind;\param}(X_{\Obsind})+\|\score_{\Obsind;\param}(X_{\Obsind})\|^2]\label{eq:marg_obj}\\
        =&\fisherdiv_{\marg}(\theta)-C.\nonumber
    \end{align}
    If (e) also holds and there exists some $\param^*$ such that $\score_{\param^*}(X)=\score(X)$ a.s.. Then if $\tilde\param$ is a minimiser of $L_\marg(\param)$ we have that $\udensity_{\tilde\param}(X)=N p(X)$ a.s. for some constant $N$, i.e. the minimiser is the true unnormalised density.
\end{prop}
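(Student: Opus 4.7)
The plan is to split the proof into two pieces: a Hyv\"arinen-style identity (conditional on the mask) and an identifiability deduction. For the identity, I would begin by conditioning on $\Obsind$. Using $\Obsind\perp X$ (which follows from $M\perp X$),
\begin{align*}
\fisherdiv_\marg(\param)=\sum_{\obsind\in\supp(\Obsind)}\!P(\Obsind{=}\obsind)\,\E\lrbrs{\|\score_\obsind(X_\obsind)-\score_{\obsind;\param}(X_\obsind)\|^2},
\end{align*}
so it suffices to apply the classical Hyv\"arinen identity to each marginal pair $(p_\obsind,\score_{\obsind;\param})$ individually and then aggregate.

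Fix $\obsind$ and expand the square. The $\|\score_\obsind(X_\obsind)\|^2$ contribution depends only on $p$ and is absorbed into $C$. For the cross term, I would use $p_\obsind(x_\obsind)\,\score_\obsind(x_\obsind)=\nabla_{x_\obsind}p_\obsind(x_\obsind)$ and integrate by parts coordinate-wise over each $j\in\obsind$: the boundary term vanishes by assumption (d), while (b)-(c) make the integrals finite and legitimise swapping derivative and integral. This converts the cross term $-2\E\lrbrs{\langle\score_\obsind(X_\obsind),\score_{\obsind;\param}(X_\obsind)\rangle}$ into $2\E\lrbrs{\nabla_{X_\obsind}\cdot\score_{\obsind;\param}(X_\obsind)}$. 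Combining with $\E\lrbrs{\|\score_{\obsind;\param}(X_\obsind)\|^2}$ and summing over $\obsind$ reassembles $\scoreloss_\marg(\param)$.

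For the second assertion, the key step is that $\score_{\param^*}(X)=\score(X)$ a.s.\ gives $\nabla\log(p_{\param^*}/p)=0$ on $\supp(X)$; assuming this support is connected, normalisation of both densities forces $p_{\param^*}=p$, hence $p_{\obsind;\param^*}=p_\obsind$ and $\score_{\obsind;\param^*}=\score_\obsind$. Therefore $\fisherdiv_\marg(\param^*)=0$, and any $\tilde\param$ minimising $\scoreloss_\marg$ must (by part 1) satisfy $\fisherdiv_\marg(\tilde\param)=0$, which forces $\score_{\obsind;\tilde\param}(X_\obsind)=\score_\obsind(X_\obsind)$ a.s.\ for every $\obsind\in\supp(\Obsind)$. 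Integrating these gradient equalities and invoking normalisation pins $p_{\obsind;\tilde\param}(X_\obsind)=p_\obsind(X_\obsind)$ a.s.\ for all such $\obsind$; assumption (e) then lifts this to $p_{\tilde\param}(X)=p(X)$ a.s., equivalently $\udensity_{\tilde\param}(X)=Np(X)$ with $N=\int\udensity_{\tilde\param}(x)\,dx$.

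The main technical obstacle is the integration by parts: although assumptions (b)-(d) are phrased directly for the marginal objects, one should verify that $\udensity_{\obsind;\param}=\int\udensity_\param\,dx_\missind$ inherits the needed smoothness and decay uniformly over $\obsind\in\supp(\Obsind)$, so that the divergence theorem applies to each marginal with no boundary contribution. A secondary subtlety is the passage from equal gradients to equal densities at the end, which implicitly relies on connectedness of $\supp(X)$ and each $\supp(X_\obsind)$; if these fail, the argument must be run on each connected component and the normalisation constants reconciled before invoking (e).
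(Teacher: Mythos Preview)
Your proposal is correct and follows essentially the same route as the paper: condition on $\Obsind$, expand the square, apply coordinate-wise integration by parts to the cross term with the boundary contribution killed by assumption (d), and then for the second claim use $\fisherdiv_\marg(\tilde\param)=0$ together with (e). You are in fact more careful than the paper in two places: you spell out why the minimum value is zero (via $\param^*$ propagating to the marginals) and you flag the connectedness needed to pass from equal marginal scores to equal marginal densities before invoking (e), whereas the paper simply asserts the last implication.
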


Through this result we have shown, much like with standard score matching, that under certain regularity conditions our objective is uniquely minimised by the true unnormalised density.
We then approximate this objective by
\begin{align*}
    \estscoreloss_{\marg;n}(\param)\coloneqq\inv{n}\sum_{i=1}^n\nabla_{X_{\Obsind_i}^{(i)}}\cdot \score_{\Obsind_i,\param}(X^{(i)}_{\Obsind_i})+\|\score_{\Obsind_i;\param}(X^{(i)}_{\Obsind_i})\|^2
\end{align*}and choose $\hat\param=\argmin_{\param} \estscoreloss_{\marg;n}(\param)$.

Unfortunately this approach in its current state is practically infeasible as the integrals involved in deriving the marginal scores for any non-trivial problem will be intractable. Hence, we must devise a way to estimate the marginal scores without having to compute the integrals.
We tackle this issue in Section \ref{sec:iw}, but first we provide a similar result for the case of truncated score matching.

\subsubsection{Truncated Score Matching}
Truncated score matching can be adapted similarly to standard score matching by simply having marginal weighting functions $\truncfunc_\obsind:\cal X_{\obsind}\rightarrow[0,\infty)$ for each subset $\obsind\in\supp(\Obsind)$ and taking the marginal truncated Fisher divergence to be
\begin{align*}
    \fisherdiv_{\truncmarg}(\param)\!\!\coloneqq \E\lrbrs{\left\|\truncfunc_\Obsind(X_{\Obsind})^{\half}\circ\lrbr{\score_{\Obsind}(X_{\Obsind})-\score_{\Obsind;\param}(X_{\Obsind})}\right\|^2}.
\end{align*}
using integration by parts gives the following equivalence
\begin{align}
    \scoreloss_{\truncmarg}(\param)\!\!\coloneqq&\E\!\!\lrbrs{\sum_{j\in\Obsind}\truncfunc_\Obsind(X_{\Obsind})_j\!\lrbr{\score_{\Obsind;\param}(X_{\Obsind})^2_j\!+\!2 \partial_j \score_{\Obsind;\param}(X_{\Obsind})_j}}\nonumber\\
    &+~\E\!\!\lrbrs{\sum_{\dimind\in\Obsind}2\partial_j \truncfunc_\Obsind(X_{\Obsind})_j \score_{\Obsind;\param}(X_{\Obsind})_j}\label{eq:truncmarg_obj}\\
    =&\fisherdiv_{\truncmarg}(\param)-C.\nonumber
\end{align}
Proof given in Appendix \ref{app:trunc_marg_score_proof}. We then take $\estscoreloss_{\truncmarg;n}$ as the Monte-Carlo estimate of $\scoreloss_{\truncmarg}$. We also construct similar objectives from sliced and denoising score matching as well as a similar result for missing not at random data in Appendix \ref{app:miss_score_ext}. We now move to the task of estimating the marginal scores in these objectives.

\subsection{Importance Weighting}\label{sec:iw}
Our first proposal is an importance weighting approach. Let $\iwdensity$ be a density over $R^{d-|\obsind|}$ which we can both evaluate and sample from then
\begin{align}\label{eq:marginal_density_expectation}
    \int_{\R^{d-|\obsind|}}\udensity_{\param}(\bm x)\diff \bm x_{\missind}&=\E_{X'_{\obsind}\sim \iwdensity}\lrbrs{\frac{\udensity_\param(\bm x_{\obsind},X'_{\missind})}{\iwdensity(X'_{\missind})}}.
\end{align}
This allows us to define our \emph{marginal score estimate}.

\begin{defn}[Marginal Score Estimate]\label{def:est_marg_score}
    For a given $\obsind\in\supp(\Obsind)~, \bm x_{\obsind}\in\cal X_{\obsind}$, score model, $\score_{\param}$, and $\nimputesamples\in\N$ we take our estimate of $\score_{\theta;\obsind,\nimputesamples}(\bm x_{\obsind})$ to be
\begin{align}\label{eq:iw_marginal_score}
    \hat \score_{\obsind,\nimputesamples;\param}\coloneqq\nabla_{\bm x_\obsind}\log\lrbr{\inv{r}\sum_{k=1}^r\frac{\udensity_\param(\bm x_{\obsind},X^{\prime(k)}_{\missind})}{\iwdensity(X^{'(k)}_{\missind})}}
\end{align}
where $X^{\prime(1)}_{\missind},\dotsc,X^{\prime(r)}_{\missind}$ are IID copies of $X'_{\missind}\sim \iwdensity$.
\end{defn}

\subsubsection{IW Sample Objective}
We can now plug these marginal score estimates into our sample objective for either normal or truncated score matching. We use $\marg/\truncmarg$ to denote analogous definitions and results for both marginal and truncated marginal score matching. Let $\{X^{(i)}_{\Obsind_i}\}_{i=1}^n$ be our samples from $X_{\Obsind}$. We then take our IW sample objective to be
as $\estscoreloss_{\marg/\truncmarg;n}(\param)$ but with $\estscore_{\Obsind_i,\nimputesamples;\param}(X_{\Obsind_i}^{(i)})$ replacing $\score_{\Obsind_i;\param}(X_{\Obsind_i}^{(i)})$. The full objective is given in Appendix \ref{app:obj_marg_iw} 
We refer to this sample objective as $\iwestscoreloss_{\marg/\truncmarg;n,\nimputesamples}(\param)$ and take our estimate to be 
\begin{align*}
    \hat\param\coloneqq \argmin_{\param} \iwestscoreloss_{\marg/\truncmarg;n,\nimputesamples}(\param).
\end{align*}
Algorithm \ref{alg:IW_scorematching} gives our high level estimation algorithm.

\begin{algorithm}[tb]
   \caption{Marginal IW Score Matching}
   \label{alg:IW_scorematching}
\begin{algorithmic}
   \STATE {\bfseries Input:} $\{X^{(i)}_{\Obsind_i}\}_{i\in[n]}$, $\udensity_{\param}$, $\iwdensity$, $\param_0$, $r\in\N$.
   \STATE Set $\param=\param_0$.
   \REPEAT
   \FOR{i=1 {\bfseries to} n}
   \STATE Sample $\{X^{\prime(i,k)}\}_{k\in r}$ from $\iwdensity(.|X_{\Obsind_i}^{(i)})$.
   \STATE Use $X^{(i)}_{\Obsind_i}, \{X^{\prime(i,k)}_{\Missind_i}\}_{k\in[r]}$ to get Monte-Carlo estimates, $\estscore_{\Obsind_i,r;\param}(X^{(i)}_{\Obsind_i})$, of the marginal scores by \eqref{eq:iw_marginal_score}.
   \ENDFOR
   \STATE Use $\estscore_{\Obsind_i,r;\param}(X^{(i)}_{\Obsind_i})$ to obtain $\iwestscoreloss_{\marg/\truncmarg;n,r}(\param)$ by \eqref{eq:marg_obj}.
   \STATE Compute $\nabla_\param\iwestscoreloss_{\marg/\truncmarg;n,r}(\param)$ and update the value of $\param$.
   \UNTIL Maximum iteration reached.
\end{algorithmic}
\end{algorithm}
\begin{remark}
    Algorithm \ref{alg:IW_scorematching} can directly be applied to both sliced and denoised score matching by replacing equation \eqref{eq:marg_obj} by equations \eqref{eq:marg_sliced_score} and \eqref{eq:marg_denoised_score} respectively.
\end{remark}

\subsubsection{Finite Sample Bounds}
A benefit of truncated score matching is that it allows us to work on distributions with densities bounded below which enables us to give finite sample bounds for the error of our estimated score w.r.t. our marginal objective. We briefly present these now with more detail given in Appendix \ref{app:trunc-miss_score_finite}.

\begin{theorem}\label{thm:trunc-miss_score_finite_short}
    Suppose assumption \ref{assump:unique_theta} alongside assumptions \ref{assump:miss_trunc}, \ref{assump:bdd_f}, \ref{assump:pw_lipschitz} from the Appendix  hold and let $\param_{n,\nimputesamples}\in\parameterset$ be the minimiser of $\iwestscoreloss_{\truncmarg;\nsamples,\nimputesamples}(\param)$.
    If $\parameterset\subseteq\R^\paramdim$ with $\diam(\parameterset)=A$ then for sufficiently large $n,\nimputesamples$ 
    \begin{align*}
    \prob\Bigg(\fisherdiv_\truncmarg(\param_{n,\nimputesamples})\geq&\beta_1\sqrt{\frac{\paramdim\log(dn \nimputesamples A/\delta)}{\min\{\nimputesamples, \nsamples\}}}\Bigg)<\delta.
    \end{align*}
\end{theorem}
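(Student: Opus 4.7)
The plan is to combine a standard empirical risk minimization argument with a uniform concentration bound that simultaneously controls the outer Monte Carlo error (over the $n$ observed data points) and the inner importance-sampling error (over the $r$ imputation samples). First I reduce the problem to uniform control of the empirical objective. By Proposition \ref{prop:marg_score_population}, $\scoreloss_\truncmarg(\param)=\fisherdiv_\truncmarg(\param)-C$, and the identifiability hypothesis \ref{assump:unique_theta}(e) together with the existence of $\param^{*}$ satisfying $\score_{\param^{*}}=\score$ a.s.\ yields $\fisherdiv_\truncmarg(\param^{*})=0$. Writing $\hat\param=\param_{n,r}$ and using its optimality for $\iwestscoreloss_{\truncmarg;n,r}$,
\begin{align*}
\fisherdiv_\truncmarg(\hat\param)=\scoreloss_\truncmarg(\hat\param)-\scoreloss_\truncmarg(\param^{*})\le 2\sup_{\param\in\parameterset}\bigl|\iwestscoreloss_{\truncmarg;n,r}(\param)-\scoreloss_\truncmarg(\param)\bigr|,
\end{align*}
so it suffices to bound this supremum in probability.

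I would then decompose each deviation into an outer Monte Carlo term and an inner importance-weighting term,
\begin{align*}
\bigl|\iwestscoreloss_{\truncmarg;n,r}(\param)-\scoreloss_\truncmarg(\param)\bigr|&\le\bigl|\iwestscoreloss_{\truncmarg;n,r}(\param)-\E\iwestscoreloss_{\truncmarg;n,r}(\param)\bigr|\\
&\quad+\bigl|\E\iwestscoreloss_{\truncmarg;n,r}(\param)-\scoreloss_\truncmarg(\param)\bigr|.
\end{align*}
The first piece is an empirical process of $n$ IID summands which, by the uniform boundedness provided by Assumption \ref{assump:bdd_f}, concentrate around their mean via Hoeffding's inequality at pointwise rate $\sqrt{\log(1/\delta)/n}$. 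The second piece is the bias introduced by replacing $\score_{\Obsind;\param}$ by $\estscore_{\Obsind,r;\param}$. Rewriting the IW estimate in self-normalised form as a weighted average of gradients $\nabla_{\bm x_\obsind}\log\udensity_\param$ evaluated at the imputation samples and applying a standard ratio-estimator expansion (valid because Assumption \ref{assump:bdd_f} keeps the denominator bounded away from zero) gives $\E\estscore_{\Obsind,r;\param}-\score_{\Obsind;\param}=O(1/r)$, and similarly for its divergence. Since $\scoreloss_\truncmarg$ is quadratic in the score, this propagates to a bias of order $1/r\le 1/\sqrt{r}$.

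To turn these pointwise bounds into uniform ones over $\parameterset$ I would use an $\epsilon$-covering argument: since $\parameterset\subseteq\R^\paramdim$ has diameter $A$, it admits an $\epsilon$-net of cardinality at most $(3A/\epsilon)^\paramdim$, so a union bound costs $\paramdim\log(3A/\epsilon)$. The pointwise Lipschitz hypothesis \ref{assump:pw_lipschitz} translates covering radius into functional deviation, and choosing $\epsilon$ polynomially small in $(n,r,d,A,1/\delta)$ absorbs the Lipschitz slack into the logarithmic factor. Combining the resulting uniform rates $\sqrt{\paramdim\log(dnrA/\delta)/n}$ and $1/r$ then yields the advertised $\sqrt{\paramdim\log(dnrA/\delta)/\min\{n,r\}}$ bound after absorbing constants into $\beta_1$.

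The hardest step is controlling the inner bias term. Because $\estscore_{\Obsind,r;\param}$ is a nonlinear functional of the imputation samples (the gradient of a log of a sample mean), one cannot simply invoke unbiasedness and appeal to standard concentration; instead one must linearise the log-sum, which is where the lower bound on $\udensity_\param/\iwdensity$ from Assumption \ref{assump:bdd_f} does the real work. Some additional care is needed to pass bounds on $\estscore-\score$ through the quadratic-in-score structure of $\scoreloss_\truncmarg$ without losing sharpness in $r$. Once this nonlinearity is tamed, the ERM reduction and the outer Hoeffding-plus-covering argument are both routine.
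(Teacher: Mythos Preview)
Your approach is correct but takes a genuinely different route from the paper. The paper decomposes as
\[
\bigl|\iwestscoreloss_{\truncmarg;n,r}(\param)-\scoreloss_\truncmarg(\param)\bigr|
\le
\bigl|\iwestscoreloss_{\truncmarg;n,r}(\param)-\estscoreloss_{\truncmarg;n}(\param)\bigr|
+\bigl|\estscoreloss_{\truncmarg;n}(\param)-\scoreloss_\truncmarg(\param)\bigr|,
\]
where $\estscoreloss_{\truncmarg;n}$ is the empirical average using the \emph{exact} marginal scores. The first piece is then controlled sample-by-sample conditional on the data (Hoeffding over the $r$ imputation samples, giving a $\sqrt{1/r}$ rate for each summand, then a union bound over the $n$ data points), and the second by a standard Hoeffding-plus-covering argument. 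Your decomposition instead separates the total stochastic fluctuation $|\iwestscoreloss-\E\iwestscoreloss|$ (which is an average of $n$ IID bounded terms and concentrates at $\sqrt{1/n}$) from a \emph{deterministic} bias $|\E\iwestscoreloss-\scoreloss_\truncmarg|$ coming from the nonlinearity of the ratio estimator. Your route is arguably cleaner and in fact yields $O(1/r)$ rather than $O(1/\sqrt{r})$ for the inner-error contribution; both subsume into the stated $\sqrt{1/\min\{n,r\}}$ bound.

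One point you underplay: the Lipschitz constants in Assumption~\ref{assump:pw_lipschitz} are not fixed numbers but functions $M_k(\bm x)$ with sub-Gaussian tails, so the Lipschitz constant of $\iwestscoreloss_{\truncmarg;n,r}(\cdot)$ in $\param$ is itself random. The paper handles this explicitly via nested sub-Gaussian concentration lemmas (controlling the empirical Lipschitz constant $C_{n,r}$ around its mean), and this is what produces the extra $\beta_3$ term in the full Theorem~\ref{thm:trunc-miss_score_finite}. Your covering step needs the same ingredient; as written you implicitly treat the Lipschitz slack as deterministic, which is not quite licensed by the assumption.
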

Note that $\nimputesamples$ is the number of importance weighting samples for each data sample and therefore is something we can choose ourself. This means that with this approach we can achieve approximately $\sqrt{n}$ convergence rates. A downside however is that to achieve this we need $\nimputesamples$ to be of order at least $\nsamples$ which would lead to an $O(n^2)$ computational cost. In practice we find relatively strong performance choosing $\nimputesamples$ small. Setting it at $\nimputesamples=10$ in our experiments.

\begin{remark}
     The error presented is measured with respect to our Marginal Fisher Divergence, rather than the full Fisher Divergence (which would be the preferred accuracy metric). Relating these two quantities requires connecting the fully observed distribution to its marginals, a task that depends on the specific form of the distribution. Investigating the assumptions and conditions under which this connection can be made offers an interesting and valuable direction for future research.
\end{remark}

\subsection{Gradient First Approach}
A key limitation with an IW approach is that it will struggle in higher dimensional scenarios. Additionally the importance weighting is embedded inside other functions which leads to the same nested expectation issue as the EM approach of \citet{Uehara2020}, causing bias in our estimator.
As an alternative to this we build upon a variational approach initially discussed in the context of latent variable models in \citet{Vertes2016, Bao2020, Bao2021}.

The core idea is to start with $\scoreloss_{\marg}$ as before and then take gradients w.r.t. our parameters before then writing our objective in terms of expectations over $X_{\missind|\obsind;\theta}$. As we don't then need to take gradients of these expectations w.r.t. $\param$, we can estimate them with any black-box method we desire, opening the door for variational approximation to be used.
This approach has been explored for exponential family distributions \citep{Vertes2016} and for denoising score matching \citep{Bao2021} however we provide the most general version of this result which can be applied to any of the score matching methods and any model class. We first introduce the following key Lemma.

\begin{lemma}\label{lemma:gradfirst_identities}
    Fix $\obsind\subseteq[d],\bm x_{\obsind}\in\cal{X}_\obsind$. We have that for any function $\genfunc_\theta:\cal X\rightarrow \R$. 
    \begin{align}
        \score_{\theta;\obsind}(\bm x_{\obsind})\!=&\innerexp[\score_{\param}(\bm x_{\obsind}, X'_{\missind})_\obsind]\label{eq:mcmc_score_equiv}\\
        \nabla\innerexp[\genfunc_\theta(\bm x_{\obsind},X'_{\missind})]\!=&\innerexp[\nabla \genfunc_\theta(\bm x_{\obsind},X'_{\missind})]\label{eq:mcmc_gradscore_equiv}\\
        &\!+\!\innercov(\score_\theta(\bm x_{\obsind},X'_{\missind}),\genfunc_\theta(\bm x_{\obsind},X'_{\missind}))\nonumber
    \end{align}
    where $\nabla$ represents the gradient w.r.t. either $\bm x_{\obsind}$ or $\theta$ and here $\innerexp,~\innercov$ are w.r.t. ${X'_{\missind}|X_{\obsind}=\bm x_{\obsind}\sim p_\theta(.|\bm x_{\obsind}})$.
\end{lemma}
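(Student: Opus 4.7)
Both identities are instances of the log-derivative (REINFORCE) trick applied to the conditional density $p_\theta(\bm x'_\missind \mid \bm x_\obsind) = \udensity_\theta(\bm x_\obsind, \bm x'_\missind)/\int \udensity_\theta(\bm x_\obsind, \bm x'_\missind)\,\diff \bm x'_\missind$, together with an interchange of differentiation and integration that is licensed by Assumption \ref{assump:unique_theta}(c). I would prove \eqref{eq:mcmc_score_equiv} first and then reuse it inside the proof of \eqref{eq:mcmc_gradscore_equiv}. For \eqref{eq:mcmc_score_equiv}, the plan is to start from the definition $\score_{\theta;\obsind}(\bm x_\obsind) = \nabla_{\bm x_\obsind} \log \int \udensity_\theta(\bm x_\obsind, \bm x'_\missind)\,\diff \bm x'_\missind$, pull $\nabla_{\bm x_\obsind}$ inside the integral, and multiply and divide the integrand by $\udensity_\theta(\bm x_\obsind, \bm x'_\missind)$. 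This exposes the factor $\nabla_{\bm x_\obsind} \log \udensity_\theta(\bm x_\obsind, \bm x'_\missind) = \score_\theta(\bm x_\obsind, \bm x'_\missind)_\obsind$ weighted by $p_\theta(\bm x'_\missind \mid \bm x_\obsind)$, which is exactly the claimed conditional expectation.

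For \eqref{eq:mcmc_gradscore_equiv} I would expand $\innerexp[\genfunc_\theta(\bm x_\obsind, X'_\missind)]$ as an integral against $p_\theta(\cdot \mid \bm x_\obsind)$, differentiate through via the product rule, and use $\nabla p_\theta(\bm x'_\missind \mid \bm x_\obsind) = p_\theta(\bm x'_\missind \mid \bm x_\obsind) \nabla \log p_\theta(\bm x'_\missind \mid \bm x_\obsind)$. The decisive step is the factorisation $\nabla \log p_\theta(\bm x'_\missind \mid \bm x_\obsind) = \nabla \log p_\theta(\bm x_\obsind, \bm x'_\missind) - \nabla \log p_{\theta;\obsind}(\bm x_\obsind)$; the second summand is constant in $\bm x'_\missind$, and by the very identity \eqref{eq:mcmc_score_equiv} it equals $\innerexp[\nabla \log p_\theta(\bm x_\obsind, X'_\missind)]$. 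Substituting back, the cross term $\innerexp[\genfunc_\theta \cdot \nabla \log p_\theta(\bm x'_\missind \mid \bm x_\obsind)]$ collapses into the centered expression $\innerexp[\genfunc_\theta \cdot \nabla \log p_\theta] - \innerexp[\genfunc_\theta]\,\innerexp[\nabla \log p_\theta]$, which is precisely the covariance in \eqref{eq:mcmc_gradscore_equiv}, while the remaining piece is $\innerexp[\nabla \genfunc_\theta]$.

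\textbf{Main obstacle.} The calculation is essentially routine; the genuine subtlety is notational. The symbol $\nabla$ in \eqref{eq:mcmc_gradscore_equiv} is overloaded to mean either $\nabla_{\bm x_\obsind}$ or $\nabla_\theta$, and the object appearing inside the covariance must be interpreted accordingly. For $\nabla=\nabla_{\bm x_\obsind}$ it is literally $\score_\theta(\bm x_\obsind, X'_\missind)_\obsind$ and the centering identity is supplied by \eqref{eq:mcmc_score_equiv} directly. For $\nabla=\nabla_\theta$ the ``score'' must instead be read as $\nabla_\theta \log p_\theta(\bm x_\obsind, X'_\missind)$, and one needs the parallel identity $\nabla_\theta \log p_{\theta;\obsind}(\bm x_\obsind) = \innerexp[\nabla_\theta \log p_\theta(\bm x_\obsind, X'_\missind)]$, which is proved by the same log-derivative manipulation with $\nabla_\theta$ replacing $\nabla_{\bm x_\obsind}$ throughout. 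In either interpretation a single unified calculation delivers both identities.
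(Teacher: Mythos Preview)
Your proposal is correct and follows essentially the same route as the paper's proof: for \eqref{eq:mcmc_score_equiv} the paper also starts from the definition of the marginal score, pushes $\nabla_{\bm x_\obsind}$ through the integral, and rewrites the integrand as $p_\theta(\bm x'_\missind\mid\bm x_\obsind)\,\score_\theta(\bm x_\obsind,\bm x'_\missind)_\obsind$; for \eqref{eq:mcmc_gradscore_equiv} it likewise applies the product rule, uses the log-derivative identity on $\nabla p_\theta(\bm x'_\missind\mid\bm x_\obsind)$, splits $\nabla\log p_\theta(\bm x'_\missind\mid\bm x_\obsind)=\nabla\log p_\theta(\bm x_\obsind,\bm x'_\missind)-\nabla\log p_{\theta;\obsind}(\bm x_\obsind)$, and invokes \eqref{eq:mcmc_score_equiv} to convert the second piece into $\innerexp[\nabla\log p_\theta]$, yielding the covariance. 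Your remark that the $\nabla_\theta$ case requires the analogous identity $\nabla_\theta\log p_{\theta;\obsind}(\bm x_\obsind)=\innerexp[\nabla_\theta\log p_\theta(\bm x_\obsind,X'_\missind)]$ is exactly right, and the paper handles both cases with the same unified calculation you describe.
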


This results allows us to obtain our alternative objective.
\begin{corollary}\label{cor:grad_first_gradient}
    Let $\scoreloss_{\marg}$ be defined as in \eqref{eq:marg_obj}.
    We have that 
    \begin{align}
        \nabla_\theta \scoreloss_{\marg}(\theta)=&\E\bigg[2\sum_{j\in\Obsind}\Big(\gradexpfunc_{\Obsind}\big(\score_{\param}(.)_j^2
        +\partial_j \score_{\param}(.)_j\big)\label{eq:marg_obj_grad}\\
        &\quad~~-\innerexp[\score_{\param}(X_{\Obsind}, X'_{\Missind})_j]\gradexpfunc_{\Obsind}( \score_\param(.)_j)\Big)\bigg]\nonumber
    \end{align}
    where for any function $\genfunc_\theta:\R^d\rightarrow\R$, $\obsind\subseteq[d]$,
    \begin{align*}
        \gradexpfunc_{\Obsind}(\genfunc_\theta)=&\innerexp[\nabla_\theta \genfunc_\theta(X_{\Obsind},X'_{\Missind})]\\
        &+\innercov\lrbr{\nabla_\theta \ldensity_\theta(X_{\Obsind}, X'_{\Missind}), \genfunc_\theta(X_{\Obsind},X'_{\Missind})}
    \end{align*}
    and $\innerexp,\innercov$ are w.r.t. $X'_{\Missind}|X_{\Obsind}\sim p_{\param}(.|X_{\Obsind})$ with $\E$ being w.r.t. $X_{\Obsind}\sim p$.
    \end{corollary}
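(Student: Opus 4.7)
The plan is to begin from \eqref{eq:marg_obj}, pull $\nabla_\theta$ through the outer $\E$ (which is over the data distribution $p$ and hence parameter-free), and reduce the problem to computing $\nabla_\theta(2\partial_j S_j + S_j^2)$ pointwise in $X_\Obsind$ for each $j \in \Obsind$, where I abbreviate $S_j \coloneqq \score_{\Obsind;\theta}(X_\Obsind)_j$ and $s_j \coloneqq \score_\theta(X_\Obsind, X'_\Missind)_j$. The key idea is to apply Lemma~\ref{lemma:gradfirst_identities} in two guises: once with $\nabla = \partial_j$ to eliminate the derivative of the marginal score, and once with $\nabla = \nabla_\theta$ to handle the parameter derivative of a $\theta$-dependent inner expectation.

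By \eqref{eq:mcmc_score_equiv}, $S_j = \innerexp[s_j]$. Applying \eqref{eq:mcmc_gradscore_equiv} with $\nabla = \partial_j$ and $\genfunc_\theta = s_j$ yields $\partial_j S_j = \innerexp[\partial_j s_j] + \innercov(s_j, s_j)$; the crucial observation is that the covariance collapses to the variance $\innerexp[s_j^2] - S_j^2$ because the score factor inserted by the identity is exactly $\score_\theta(\cdot)_j = s_j$ itself. Substituting into the per-coordinate integrand and simplifying gives the clean rewriting
\[ 2\partial_j S_j + S_j^2 = 2\innerexp[s_j^2 + \partial_j s_j] - S_j^2, \]
where one of the two $S_j^2$ factors has now been eliminated in favour of inner expectations involving only the full-model score.

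Differentiating this in $\theta$ is then routine. The first block, $2\nabla_\theta \innerexp[s_j^2 + \partial_j s_j]$, is exactly $2\gradexpfunc_\Obsind(s_j^2 + \partial_j s_j)$, which is just \eqref{eq:mcmc_gradscore_equiv} with $\nabla = \nabla_\theta$ applied to $\genfunc_\theta = s_j^2 + \partial_j s_j$ combined with linearity of $\gradexpfunc_\Obsind$. For the remaining $-S_j^2$, the chain rule together with \eqref{eq:mcmc_gradscore_equiv} applied to $\genfunc_\theta = s_j$ gives $\nabla_\theta S_j = \gradexpfunc_\Obsind(s_j)$, so $\nabla_\theta S_j^2 = 2\innerexp[s_j]\gradexpfunc_\Obsind(s_j)$. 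Summing over $j \in \Obsind$ and restoring the outer expectation reproduces \eqref{eq:marg_obj_grad}.

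The main obstacle I anticipate is spotting the cancellation: the $-S_j^2$ produced by the log-derivative-style correction in Lemma~\ref{lemma:gradfirst_identities} (under $\nabla = \partial_j$) is precisely what neutralises the $S_j^2$ already present in the score matching loss, and it is this cancellation that lets the final answer be written purely through $\gradexpfunc_\Obsind$ applied to full-model score quantities, with no residual variance terms. Beyond this, the only remaining care is justifying the interchange of $\nabla_\theta$ with both the outer data expectation and the inner conditional expectations, which is standard under the differentiability and integrability hypotheses in Assumption~\ref{assump:unique_theta}.
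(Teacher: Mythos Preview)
Your proposal is correct and follows essentially the same route as the paper's proof: rewrite each summand via \eqref{eq:mcmc_score_equiv} and \eqref{eq:mcmc_gradscore_equiv} with $\nabla=\partial_j$ to obtain $2\innerexp[s_j^2+\partial_j s_j]-\innerexp[s_j]^2$, then apply \eqref{eq:mcmc_gradscore_equiv} with $\nabla=\nabla_\theta$ (together with the chain rule on the squared term) to produce the $\gradexpfunc_\Obsind$ expressions. Your explicit emphasis on the variance cancellation is exactly the mechanism the paper uses, just stated more cleanly.
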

Proofs for both results are given in Appendix \ref{app:grad_first_proofs}.

Crucially $\innerexp,\innercov$ can be estimated freely. This allows us to use variational inference to approximate $p_\theta(\bm x_{\missind}|\bm x_{\obsind})$ and in turn the expectations and covariances in \eqref{eq:marg_obj_grad}. 
\begin{remark}
    We provide additional implementation details for computing this gradient estimate in Appendix \ref{app:var_pseudo_loss}.
    We also discuss equivalences between this objective and our marginal IW objective in \ref{app:iw_var_equivalence}. 
\end{remark}
We explore estimation of $\innerexp,\innercov$ in Section \ref{sec:variational_approach} but first we provide a similar result for truncated score matching.
\subsubsection{Truncated Score Matching}
We define a similar objective for truncated score matching.

\begin{corollary}\label{cor:truncmargscore_gradient}
    With $\scoreloss_{\truncmarg}$ defined as in \eqref{eq:truncmarg_obj} we have that
    \begin{align}
    \nabla_\theta \scoreloss_{\truncmarg}(\param)\!=&\E\Bigg[2\!\sum_{j\in\Obsind}\bigg(\truncfunc_{\Obsind}(X_{\Obsind})_j\Big\{
    \gradexpfunc_{\Obsind}(\score_{\param}(.)_j^2+\partial_j \score_{\param}(.)_j)\nonumber\\
    &\quad~~-\innerexp[\score_{\param}(X_{\Obsind},X'_{\Missind})_j] \gradexpfunc_{\Obsind}(\score_{\param}(.)_j)\Big\}\nonumber\\
    &\quad~~+~\partial_j \truncfunc_\Obsind(X_{\Obsind})_j\gradexpfunc_{\Obsind}(\score_{\param}(.)_j)\bigg)\Bigg]\label{eq:truncmarg_obj_grad}
\end{align}
with $\gradexpfunc_{\Obsind}$ and $\innerexp$ defined as in Corollary \ref{cor:grad_first_gradient}.
\end{corollary}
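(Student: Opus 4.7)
The plan is to reduce Corollary \ref{cor:truncmargscore_gradient} to Corollary \ref{cor:grad_first_gradient} together with the key identities of Lemma \ref{lemma:gradfirst_identities}, exploiting the fact that the weighting function $\truncfunc_\Obsind$ does not depend on $\theta$. I would start from the expression \eqref{eq:truncmarg_obj} for $\scoreloss_{\truncmarg}(\param)$ and split it into two pieces: the ``weighted Fisher part'' $\E[\sum_{j\in\Obsind}\truncfunc_\Obsind(X_\Obsind)_j(\score_{\Obsind;\param}(X_\Obsind)_j^2+2\partial_j\score_{\Obsind;\param}(X_\Obsind)_j)]$ and the ``boundary part'' $\E[\sum_{j\in\Obsind}2\partial_j\truncfunc_\Obsind(X_\Obsind)_j\,\score_{\Obsind;\param}(X_\Obsind)_j]$. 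Since $\truncfunc_\Obsind$ and $\partial_j\truncfunc_\Obsind$ are $\theta$-independent functions of $X_\Obsind$ only, after justifying the interchange of $\nabla_\theta$ and the outer expectation (using exactly the same dominated-convergence style argument as for Corollary \ref{cor:grad_first_gradient}) the gradient moves inside and hits only the $\score_{\Obsind;\param}$-terms.

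For the weighted Fisher piece I would argue pointwise in $X_\Obsind$: the integrand is of the form $\truncfunc_\Obsind(X_\Obsind)_j\cdot\bigl(\score_{\Obsind;\param}(X_\Obsind)_j^2+2\partial_j\score_{\Obsind;\param}(X_\Obsind)_j\bigr)$, which is $\truncfunc_\Obsind(X_\Obsind)_j$ times the very integrand differentiated in the proof of Corollary \ref{cor:grad_first_gradient}. So the same identity applies conditionally on $X_\Obsind$, yielding
\begin{align*}
\nabla_\theta\!\bigl(\score_{\Obsind;\param}(X_\Obsind)_j^2+2\partial_j\score_{\Obsind;\param}(X_\Obsind)_j\bigr)
=2\gradexpfunc_{\Obsind}\!\bigl(\score_\param(.)_j^2+\partial_j\score_\param(.)_j\bigr)
-2\innerexp[\score_\param(X_\Obsind,X'_\Missind)_j]\,\gradexpfunc_{\Obsind}(\score_\param(.)_j).
\end{align*}
Multiplying by $\truncfunc_\Obsind(X_\Obsind)_j$, summing over $j\in\Obsind$, and taking the outer expectation reproduces the first bracketed group in \eqref{eq:truncmarg_obj_grad}.

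For the boundary piece I would apply \eqref{eq:mcmc_score_equiv} to rewrite $\score_{\Obsind;\param}(X_\Obsind)_j=\innerexp[\score_\param(X_\Obsind,X'_\Missind)_j]$ and then apply \eqref{eq:mcmc_gradscore_equiv} with $\genfunc_\theta=\score_\param(\cdot)_j$ and $\nabla=\nabla_\theta$, giving exactly $\nabla_\theta\score_{\Obsind;\param}(X_\Obsind)_j=\gradexpfunc_{\Obsind}(\score_\param(.)_j)$. Multiplying by the $\theta$-independent factor $2\partial_j\truncfunc_\Obsind(X_\Obsind)_j$, summing over $j\in\Obsind$, and taking the outer expectation yields the second term of \eqref{eq:truncmarg_obj_grad}. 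Adding the two contributions and pulling out the overall factor of $2$ gives the stated formula.

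The work is essentially bookkeeping: the only non-trivial point is justifying the swap of $\nabla_\theta$ and $\E$ for both summands, and the swap of $\nabla_\theta$ with the inner conditional expectation $\innerexp$ (which is precisely what produces the covariance term in $\gradexpfunc_\Obsind$ via Lemma \ref{lemma:gradfirst_identities}). These regularity conditions are inherited from the hypotheses already imposed on $\score_\param$, $p_\theta$, and $\truncfunc$ in Assumption \ref{assump:unique_theta} and in the truncated score matching setup, so no new assumptions are needed. I expect the only mildly delicate step to be verifying integrability of the dominating function for the boundary term, but this is immediate whenever $\partial_j \truncfunc_\Obsind$ is bounded on the support (which is standard in the truncated score matching framework of \citet{Liu2022}).
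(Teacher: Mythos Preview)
Your proposal is correct and follows essentially the same approach as the paper: rewrite the marginal-score terms in $\scoreloss_{\truncmarg}$ via Lemma \ref{lemma:gradfirst_identities}, then differentiate in $\theta$ and apply the lemma again to convert each $\nabla_\theta\innerexp[\cdot]$ into the $\gradexpfunc_\Obsind$ form. The only cosmetic difference is that you invoke Corollary \ref{cor:grad_first_gradient} as a black box for the weighted Fisher piece, whereas the paper simply re-expands that computation inline; the underlying steps are identical.
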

Proof given in Appendix \ref{app:trunc_marg_score_proof}. Similar results for sliced and denoising score matching are given in Appendix \ref{app:miss_score_ext}.

\subsubsection{Variational Approximation}\label{sec:variational_approach}
We can now use variational approximation to estimate the expectations and covariances in Corollaries \ref{cor:grad_first_gradient} \& \ref{cor:truncmargscore_gradient}. Specifically, let $\vardensity(\bm x_\missind| \bm x_{\obsind})$ be some generative conditional distribution dependent upon parameter $\varparam$. We want to train $\vardensity$ to approximate $p_\param$. We may write $\varparam(\param)$ to highlight the dependence on our current parameter estimate however we will omit this for brevities sake. The following proposition from \citet{Bao2020} shows us how to train $\varparam$.
\begin{prop}[\citet{Bao2020}]
    For distributions $p',p$ let $\fisherdiv(p'|p)$ and $\KL(p'|p)$ be the Fisher and KL divergences between $p'$ and $p$. We have that for any $\obsind\subseteq[d],\bm x_{\obsind}\in\cal X_{\obsind}$
    \begin{align*}
        &\KL(\vardensity(.|\bm x_{\obsind})|p_\param(.|\bm x_{\obsind}))=\innerexp\Bigg[\!\log \lrbr{\frac{\vardensity(X'_{\missind}|\bm x_{\obsind})}{\udensity_{\param}(\bm x_{\obsind},X'_{\missind})}}\Bigg]\!+\!B\\
        &\fisherdiv(\vardensity(.|\bm x_{\obsind})|p_\param(.|\bm x_{\obsind}))=\innerexp\bigg[\Big\|\nabla_{X'_\missind} \log\lrbr{ \vardensity(X'_{\missind}|\bm x_{\obsind})}\\
        &\qquad\qquad\qquad\qquad\qquad\qquad~~-~\score_{\param}(\bm x_{\obsind},X'_{\missind})_{\missind}\Big\|^2\bigg]
    \end{align*}
    where expectations are w.r.t. $X'_{\Missind}\sim \vardensity(.|\bm x_{\obsind})$ and $B$ is a constant not depending upon $\varparam$ (but will depend on $\param$.) In other words we can fit to the conditional density $p_{\param}(.|\bm x_\obsind)$ given only the unconditional unnormalised density $\udensity_\param(\bm x_\obsind,.)$ or full score $\score_{\param}(\bm x_\obsind,.)$.
\end{prop}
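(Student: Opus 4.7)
The plan is to unpack the definitions of the KL and Fisher divergences and exploit that the normalising constant of $\udensity_\param$ and the marginal factor $p_\param(\bm x_\obsind)$ are both constants in $\bm x_\missind$ and carry no $\varparam$-dependence. Write the conditional density as $p_\param(\bm x_\missind | \bm x_\obsind) = \udensity_\param(\bm x_\obsind, \bm x_\missind)/(N \cdot p_\param(\bm x_\obsind))$, where $N$ is the normalising constant of $\udensity_\param$, so that
\begin{equation*}
    \log p_\param(\bm x_\missind | \bm x_\obsind) = \log \udensity_\param(\bm x_\obsind, \bm x_\missind) - \log N - \log p_\param(\bm x_\obsind).
\end{equation*}

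For the KL identity, substitute this into the definition $\KL(\vardensity(\cdot|\bm x_\obsind) \,|\, p_\param(\cdot|\bm x_\obsind)) = \innerexp[\log \vardensity(X'_\missind|\bm x_\obsind) - \log p_\param(X'_\missind|\bm x_\obsind)]$ with $X'_\missind \sim \vardensity(\cdot|\bm x_\obsind)$. The term $\log N + \log p_\param(\bm x_\obsind)$ depends neither on $X'_\missind$ nor on $\varparam$, so it pulls out of the expectation and absorbs into the constant $B$, yielding the stated expression (note that $B$ is permitted to depend on $\param$ and $\bm x_\obsind$).

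For the Fisher identity, differentiate the display above with respect to $\bm x_\missind$. Both $\log N$ and $\log p_\param(\bm x_\obsind)$ vanish under $\nabla_{\bm x_\missind}$, leaving
\begin{equation*}
    \nabla_{\bm x_\missind} \log p_\param(\bm x_\missind|\bm x_\obsind) = \nabla_{\bm x_\missind} \log \udensity_\param(\bm x_\obsind, \bm x_\missind) = \score_\param(\bm x_\obsind, \bm x_\missind)_\missind,
\end{equation*}
i.e.\ precisely the missing-coordinate block of the full score. Plugging into $\fisherdiv(\vardensity(\cdot|\bm x_\obsind)\,|\,p_\param(\cdot|\bm x_\obsind)) = \innerexp[\|\nabla_{X'_\missind}\log\vardensity(X'_\missind|\bm x_\obsind) - \nabla_{X'_\missind}\log p_\param(X'_\missind|\bm x_\obsind)\|^2]$ gives the claimed expression.

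There is no real obstacle here: the whole argument rests on the observation that passing from the joint to the conditional only divides by factors independent of $\bm x_\missind$, so both taking logs modulo $\varparam$-independent constants and taking $\bm x_\missind$-gradients sidestep the two intractable normalisers $N$ and $p_\param(\bm x_\obsind)$. The only minor point of care is to verify that the $\bm x_\missind$-gradient of the log joint unnormalised density really agrees with the $\missind$-components of the full score $\score_\param$ evaluated at $(\bm x_\obsind, \bm x_\missind)$, which is immediate from the definition $\score_\param = \nabla_{\bm x}\log \udensity_\param$ together with the block structure of the gradient.
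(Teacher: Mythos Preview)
Your argument is correct and is exactly the natural verification: factor the conditional as joint over marginal, observe that the two normalisers $N$ and $p_\param(\bm x_\obsind)$ are $\bm x_\missind$-free and $\varparam$-free, so they drop into the constant $B$ for the KL and vanish under $\nabla_{\bm x_\missind}$ for the Fisher. The paper does not supply its own proof of this proposition; it is quoted from \citet{Bao2020}, and your derivation matches the standard one underlying that reference.
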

This allows us to train $\vardensity(.|\bm x_{\obsind})$ to approximate the conditional density, $p_\param(.|\bm x_{\obsind})$.
In our case we won't be learning this variational model for a fixed $\bm x_{\obsind}$ or even fixed observed coordinates $\obsind$. Hence we take our objective to be one of
\begin{align*}
    \klvarloss(\varparam,\theta)&\coloneqq \E\lrbrs{\log \lrbr{\frac{\vardensity(X'_{\Missind}|X_{\Obsind})}{\udensity_{\param}(X_{\Obsind},X'_{\Missind})}}}\\
    \fishervarloss(\varparam,\theta)&\coloneqq \E\lrbrs{\left\| \nabla_{X'_\Missind}\log\lrbr{\frac{\vardensity(X'_{\Missind}|X_{\Obsind})}{\udensity_{\param}(X_{\Obsind},X'_{\Missind})}}\right\|^2}
\end{align*}
with $(X_{\Obsind},X'_\Missind)\sim\vardensity(X'_\Missind|X_{\Obsind})p(X_{\Obsind})$. We then take
and $\estfishervarloss,\estfishervarloss$ to be the Monte-Carlo approximations with samples $(X_{\Obsind},X'_{\Missind})$ from the same distribution.
\begin{remark}
    $\fishervarloss$ has the advantage of not needing to know the normalising constant of $q'_{\varparam}=N_{\varparam}\cdot\vardensity$ either.
\end{remark}
\begin{remark}
    As $\varparam$ depends upon $\param$, we need to update it each time we update $\param$. In practice we find taking 10 gradient steps of $\varparam$ for each gradient step of $\param$ to work well.  
\end{remark}
With this, we define $\varestscoreloss_{\marg/\truncmarg}(\param)$ to be the Monte-Carlo estimate of \eqref{eq:marg_obj_grad}/\eqref{eq:truncmarg_obj_grad} with samples $\{(X_{\Obsind_i}^{(i)},X^{\prime(i,\imputeind)}_{\Missind_i})\}_{(i,\imputeind)\in[n]\times[\nimputesamples]}$ where $X^{(i)}_\Obsind$ are our original corrupted data samples from $p$ and $X^{\prime(i,\imputeind)}_{\Missind}$ are our variational samples from $\vardensity(.|X^{(i)}_{\Obsind_i})$. We can now state our full variational approach which is given in Algorithm \ref{alg:var_scorematching}.

\begin{algorithm}[tb]
   \caption{Marginal Variational Score Matching}
   \label{alg:var_scorematching}
\begin{algorithmic}
   \STATE {\bfseries Input:} $\{X^{(i)}_{\Obsind_i}\}_{i\in[n]}$, $\udensity_{\param}$, $\vardensity$, $\param_0,~\varparam_0$, $L\in\N$, $r\in\N$.
   \STATE Set $\param=\param_0$, $\varparam=\varparam_0$.
   \REPEAT
   \FOR{$l=1$ {\bfseries to} $L$}
   \STATE For $i\in[n]$ sample $X^{\prime(i)}$ from $\vardensity(.|X_{\Obsind_i}^{(i)})$.
   \STATE Use $\{(X^{(i)}_{\Obsind_i}, X^{\prime(i)}_{\Missind_i})\}_{i\in[n]}$ to get Monte-Carlo approximates of $\klfishervarloss(\varparam,\param)$ given by $\estklfishervarloss(\varparam,\param)$.
   \STATE Compute $\nabla_\varparam\estklfishervarloss(\varparam,\param)$ and update $\varparam$.
   \ENDFOR
   \STATE For $i\in[n]$ sample $\{X^{\prime(i,k)}\}_{k\in r}$ from $\vardensity(.|X_{\Obsind_i}^{(i)})$.
   \STATE Use $\{(X^{(i)}_{\Obsind_i}, X^{\prime(i,\imputeind)}_{\Missind_i})\}_{(i,\imputeind)\in[n]\times[\nimputesamples]}$ to get our Monte-Carlo estimate, $\varestscoreloss_{\marg/\truncmarg}(\param)$ using equation \eqref{eq:marg_obj_grad}/\eqref{eq:truncmarg_obj_grad}.
   \STATE Use this gradient estimate to update $\param$.
   \UNTIL Maximum iterations reached.
   \end{algorithmic}
\end{algorithm}
\begin{remark}
    Algorithm \ref{alg:var_scorematching} can directly be applied to both sliced and denoised score matching by replacing equation \eqref{eq:marg_obj_grad} by equations \eqref{eq:marg_sliced_gradscore} and \eqref{eq:marg_denoised_gradscore} respectively.
\end{remark}

\section{Results}\label{sec:results}
Here we go through simulated results comparing our IW approach (Marg-IW) in Algorithm \ref{alg:IW_scorematching} and our variational approach (Marg-Var) in Algorithm \ref{alg:var_scorematching} to the EM approach of \citet{Uehara2020}. We also compare to a naive marginalisation approach involving zeroing out the missing dimensions and only taking the observed output dimensions of the score, which we call Zeroed Score Matching. This approach is the natural adaptation of MissDiff from \citet{Ouyang2023} away from NN to explicitly parameterised models. We describe Zeroed Score Matching and its relation to MissDiff in Appendix \ref{app:missdiff}. In our experiments, we highlight a unique strength of our methods by applying them to explicitly parameterised score models. We could however, equally apply them to more complex, noninterpretable models such as NNs. More implementation details can be found in Appendix \ref{app:experiment_details}. \footnote{All code and data for the experiments presented can also be found at \url{https://github.com/joshgivens/ScoreMatchingwithMissingData}}
\subsection{Parameter Estimation}
\label{sec:sim_results}
\subsubsection{Truncated Gaussian Model}\label{sec:experiment_truncated_normal}
In this experiment a 10-dim normal distribution is set up with fixed mean and random covariance before being truncated on the first 3 dimensions. 1000 samples are taken and corrupted independently on each coordinate with probability 0.2. 
For each of our methods a Gaussian score is fit and the Fisher divergence between this score and the truth computed. This is repeated 200 times with the mean Fisher divergence alongside 95\% C.I.s then presented in figure \ref{fig:10dim_normal_trunc}. More details in Appendix \ref{app:experiment_details_normal}.
\begin{figure}[ht]
    \centering
    \includegraphics[width=0.8\linewidth]{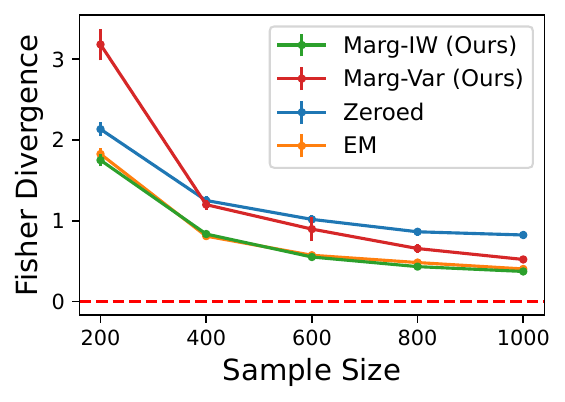}
    \caption{Average Fisher Divergence for Gaussian score estimates alongside 95\% C.I.s Lower is better.}
    \label{fig:10dim_normal_trunc}
\end{figure}
Marg-IW and EM perform best with Marg-Var approaching asymptotically. We see the effect of Zeroed's naive marginalisation as it does not converge, a phenomenon we discuss more in Appendix \ref{app:missdiff}. In Appendix \ref{app:experiment_truncated_normal} we present the average mean and precision estimation error for this experiment.
In Appendix \ref{app:experiment_untruncated_normal} we present the untruncated results and illustrate how the naive marginalisation poorly models strong relationship between dimensions 1 and 10.

\subsubsection{Non-Gaussian Model}\label{sec:ICA_experiments}
For this experiment we tested our parameter estimation for a an ICA inspired unnormalisable model of the form
\begin{align*}
    p(\bm x)\mypropto\exp{\sum_{i,j}\param^*_{i,j}x_i^2x_j^2}.
\end{align*}
Here we parameterise our model identically with the aim of estimating $\param^*$. We vary the dimension of $X$ and plot the estimation error with a sample size of 1,000 and each coordinate missing independently with probability 0.5. The results are presented in Figure \ref{fig:ICA_varydim}.

\begin{figure}[ht]
    \centering
    \includegraphics[width=0.8\linewidth]{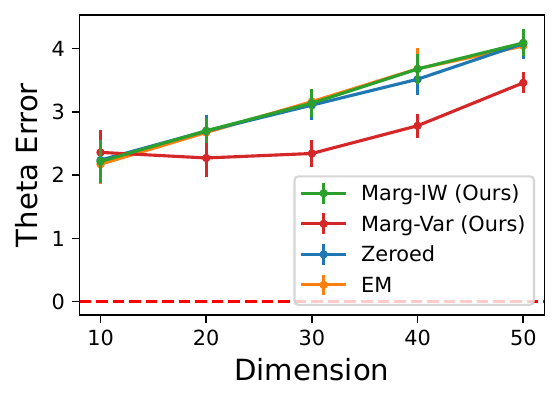}
    \caption{Average Fisher Divergence for Gaussian score estimates alongside 95\% C.I.s. Lower is better.}
    \label{fig:ICA_varydim}
\end{figure}

Our variational method (Marg-Var) consistently yields the lowest error. Moreover, as the dimensionality increases, the performance gap between Marg-Var and the other methods widens. This supports the notion that our approach is more accurately able to capture complex marginalisations than the competing approaches which fail as the dimension grows.
We note that all other methods perform comparably with the performance of EM and Marg-IW being indistinguishable, a pattern we observe throughout our experiments. This similarity is unsurprising both approaches use self normalised importance weighting to approximate conditional expectations with respect to our current score estimate while being broadly motivated by fitting to the marginal scores. Nevertheless, the precise mechanism for this similarity remains unclear and warrants further exploration.
Additional experiments exploring the effect of sample size and missingness probability on estimation accuracy are given in appendix \ref{app:ICA_experiments}.

\subsection{Gaussian Graphical Model Estimation}
Gaussian graphical models (GGM) are a popular way of modelling dependence between dimensions of data. Let us assume that the underlying data follows a Gaussian distribution with mean $\bm\mu\in\R^d$ and precision $\prec\in\R^{d\times d}$. In this setting, a Bayesian network (BN) can represent the dependencies between the dimensions of $X$ with the (undirected) edges of the BN exactly being the non-zero off-diagonal entries of the precision, $\prec$. Hence estimating the precision matrix P gives the BN. Score matching has been shown to be an effective way of achieving this with L1-regularisation on the off-diagonal of $\prec$ to push terms to $0$ \citep{Lin2016, Yu2018}.
Decreasing the level of L1-regularisation then gives a range of classifiers with increasing True and False positive rates (TPR/FPR) as the level of regularisation decreases. Score matching can also be applied to truncated GGMs where we aim to learn the original BN but only observe the samples inside some truncated region.

We apply our methods to learn GGMs and truncated GGMs with missing data as well. We use varying levels of L1 regularisation on our objective via proximal stochastic gradient descent in our optimisation \citep{Beck2017}.
\subsubsection{Star Shaped Truncated Graphical Model}\label{sec:GGM_star}
Here we create a star shaped GGM in which one node has a high probability of being connected with each other node independently and all other connections have probability 0. We truncate the data along a random hyperplane such that 20\% of the distribution lies outside of the truncation boundary. Each coordinate is then MCAR independently with the same probability. We run multiple experiments with this probability ranging from 0.2 to 0.9 and present the results in figure \ref{fig:ggm_starprec_varymiss}.
As we can see here Marg-Var performs best with all other approaches performing comparably. For illustrative purposes, we plot individual ROC curves from this experiment in Appendix \ref{app:GGM_individual_roc_plots}.
\begin{figure}[t]
    \centering
    \includegraphics[width=0.8\linewidth]{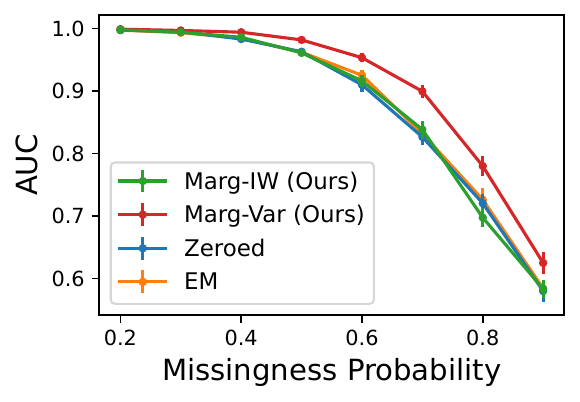}
    \caption{Mean AUC of star graph edge detection with\hphantom{g} varying missingness alongside 95\% C.I.s. Higher is better.}
    \label{fig:ggm_starprec_varymiss}
\end{figure}
\subsubsection{Unstructured Dense Graphical Model}
Here we create a GGM by making each edge occur independently with probability 0.5. The rest of the experiment was constructed as before. Results are given in Figure \ref{fig:ggm_one_varymiss}.
\begin{figure}[h]
    \centering
    \includegraphics[width=0.8\linewidth]{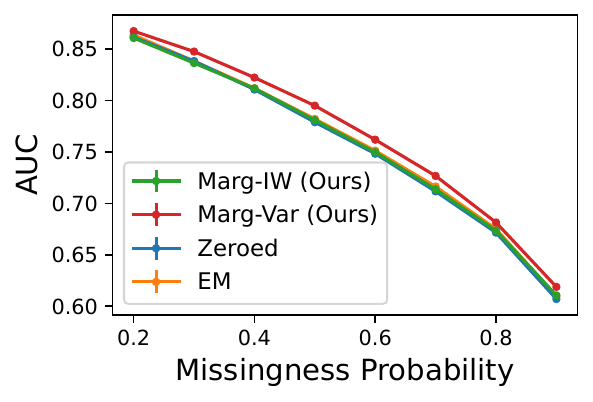}
    \caption{Mean AUC of dense graph edge detection with varying missingness alongside 95\% C.I.s. Higher is better.}
    \label{fig:ggm_one_varymiss}
\end{figure}
 Again we can see that our variational approach performs best though not as clearly as in the previous example. We believe this to be because for more unstructured problems, naive marginalisation performs moderately well.
 \subsubsection{Increasing Number of Stars}
 To explore this further, we construct and experiment where we vary the number of star centres (high degree nodes) while keeping the edge density constant. We present the results in Figure \ref{fig:GGM_vary_centres}.
\begin{figure}[h]
    \centering
    \includegraphics[width=0.8\linewidth]{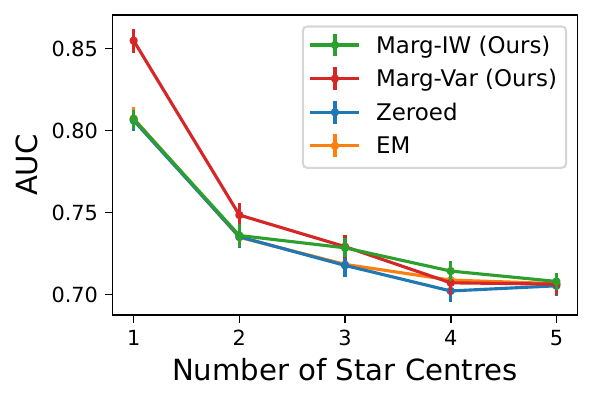}
    \caption{Mean AUC with 95\% C.I.s for edge detection as number of star centres in graph varies. Higher is better.}
    \label{fig:GGM_vary_centres}
\end{figure}
As we increase the number of star centres, Marg-Var no longer noticeably outperforms the other approaches. This is because as the number of stars increases, (i.e. the structure of the graph decreases) naive marginalisation is a better approximation. This is illustrated on the marginal precisions themselves in Appendix \ref{app:experiment_vary_stars}.
\subsubsection{S\&P 100}
Here we took closing price data over 5 years for the 100 stocks in the S\&P 100 with  each stock being a dimension and each day being a sample. Gaussian graphical models with various levels of connectivity were then constructed using standard score matching on the fully observed data. The data was then artificially corrupted and each missing score matching approach applied. The AUC was then calculated for each method taking the GGM from fully observed score matching as the ground truth. More details given in appendix \ref{app:real_world_data_details}. The results are shown in figure \ref{fig:snp100_varymiss_AUC}.
\begin{figure}[ht]
    \centering
    \includegraphics[width=0.8\linewidth]{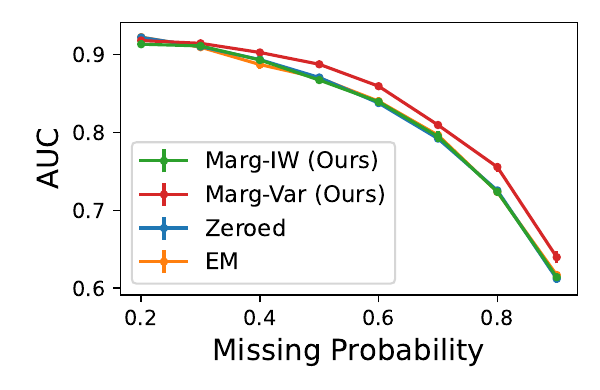}
    \caption{Mean AUC of various methods when compared to non-corrupted score matching with 95\% confidence intervals on stocks in S\&P 100. Higher is better.}
    \label{fig:snp100_varymiss_AUC}
\end{figure}
As we can see Marg-Var clearly out performs all the other approaches which appear to perform equivalently.

\subsubsection{Yeast Data}
Here data first introduced in \citet{Brem2005} is used consisting of readings of expression for 7086 genes/ORFs across 262 yeast segregants. Each gene represents a dimension with each segregant representing a sample. We subset the data to take the 106 genes present in at least 95\% of the samples with the aim of learning the relationship between them. 
\begin{figure}[ht]
    \centering
    \includegraphics[width=0.8\linewidth]{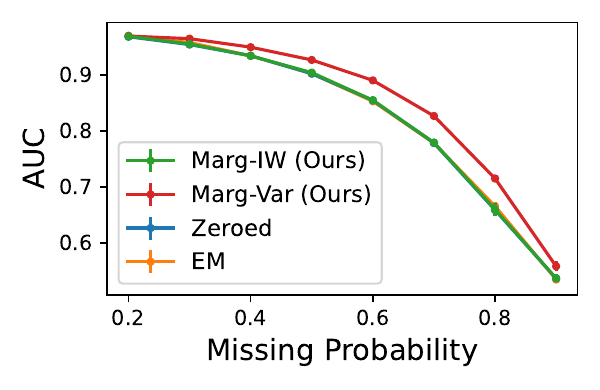}
    \caption{Mean AUC of various methods when compared to non-corrupted score matching with 95\% confidence intervals on genetic yeast data. Higher is better.}
    \label{fig:yeast_varymiss_AUC}
\end{figure}
The same approach as the previous section is applied with the results shown in figure \ref{fig:yeast_varymiss_AUC}.
Again Marg-Var clearly outperforms the other approaches which all perform comparably.

\section{Conclusion}\label{sec:conclusion}
To conclude, score matching is a versatile method whose applications at the heart of modern machine learning problems. In this work we have tackled the problem of adapting score matching to partially missing data. We have presented two separate but related approaches to this method, one using importance weighting and another using variational approximation. We have also provided extensions of these methods to truncated score matching, sliced and denoising score matching.
For truncated score matching with our IW approach we have provided finite sample bounds on the accuracy of the estimated score in terms of the marginal truncated Fisher divergence. 

We have provided several simulated and real world experiments demonstrating our methods' efficacy for both parameter estimation and downstream GGM edge detection. We have shown the benefits and drawbacks of each approach with IW performing best in lower dimensional settings with less data and the variational approach performing best in more complicated higher dimensional settings.

There is, however still much work to be done in this area. From a theoretical perspective, while we have finite sample bound on the error of our loss, marginal nature of the loss makes it unclear exactly how this translates to parameter or general score model accuracy, leaving room for further theoretical exploration. From an implementation perspective, 
variational inference in the presence of missing data  requires accounting for the randomness of ``latent'' and ``observed'' variables. The standard variational inference technique can be further refined to accommodate this setting.  
% more exploration of our variational technique is needed as performing variational inference in this setting requires a non-standard approach due to the flexible nature of which variables play the role of the "latent" and "observed". 
Finally, since our method is compatible with denoised score matching, it can naturally be extended to diffusion-based model. This paves the way for future work on applying our approach to generative modelling with diffusion processes in the presence of missing data.

% Acknowledgements should only appear in the accepted version.
\section*{Acknowledgements}
Josh Givens was supported by a PhD studentship from the EPSRC Centre for Doctoral Training in
Computational Statistics and Data Science (COMPASS).
\section*{Impact Statement}
This paper presents work whose goal is to advance the field of 
Machine Learning. There are many potential societal consequences 
of our work, none which we feel must be specifically highlighted here.

\bibliographystyle{icml2025}
\bibliography{ScoreMatching}
\newpage
\onecolumn
\appendix

\section{Additional Theoretical Results}\label{sec:add_theory}
Here we present some interesting results which we feel help further build up the landscape of our method but were unable to fit within the main body of the paper.

\subsection{Additional Methods}\label{app:miss_score_ext}
Firstly some additional adaptations of score matching. Most of these are relatively immediate adaptations following our framework for missing score matching although there are some important aspects and caveats which make them worth officially documenting. Missing and sliced score matching are introduced in detail in Appendix \ref{app:score_ext}.

\subsubsection{Truncated Score Matching}\label{app:truncscore_matching}
We have already presented truncated score matching in the paper however we present it in more details alongside its assumptions here.
\begin{assumption}\label{assump:miss_trunc}
For any $\obsind\in\supp(\Obsind),\param\in\Theta$:
\begin{itemize}
    \item $\cal X_{\obsind}$ is connected, open and Lipschitz;
    \item $p_\obsind,g_\obsind,\udensity_{\obsind;\param}\in H^1(\cal X_{\obsind})$;
    \item $p_\obsind,g_\obsind$ are continuously differentiable and $\udensity_{\param;\obsind}$ is twice continuously differentiable;
    \item for any $\bm x_{\obsind}'\in\partial \cal X_{\obsind}$, and $j\in\obsind$ we have
    $$
        \lim_{\bm x_{\obsind}\conv \bm x'_\obsind}\score_{\obsind;\param}(\bm x_{\obsind})_j p_\obsind(\bm x_{\obsind})g_\obsind(\bm x_{\obsind})v_j(\bm x'_\obsind)=0.
    $$
    where $v(\bm x'_\obsind)$ is the normal vector to the boundary $\delta X_{\obsind}$.
\end{itemize} 
\end{assumption}

This now leads us to our proposition on the validity of our population objective. 
\begin{prop}\label{prop:miss_trunc}
    Suppose that assumptions \ref{assump:unique_theta} \& \ref{assump:miss_trunc} hold. Then we have
    \begin{align}
       J_{\truncmarg}(\param)\coloneqq\E\lrbrc{g_\Obsind(X)\|\score_{\Obsind;\param}(X_{\Obsind})-\score_{\Obsind}(X_{\Obsind})\|^2}=L_{\truncmarg}(\param)-C
    \end{align}
    where $C$ is does not depend upon $\param$.
    As a direct result for $\tilde\param$ a minimiser of $L_{\truncmarg}(\param)$ we have that $\score_{\tilde\param}(X)=\score(X)$ a.s..
\end{prop}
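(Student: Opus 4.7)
The plan is to extend the standard truncated-score-matching derivation (a single application of Green's theorem) to the marginal setting by exploiting the MCAR assumption $X\perp\Obsind$, which lets the outer expectation factor as $\E_{\Obsind}\E_{X_{\Obsind}}$ and reduces the identity to one that must hold for each fixed $\obsind\in\supp(\Obsind)$.

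First, I would expand the squared norm in $J_{\truncmarg}(\param)$ into three pieces: a $\param$-free remainder $\E[\sum_{j\in\Obsind}\truncfunc_\Obsind(X_{\Obsind})_j\score_{\Obsind}(X_{\Obsind})_j^2]$ that folds into $C$; the self-term $\E[\sum_{j\in\Obsind}\truncfunc_\Obsind(X_{\Obsind})_j\score_{\Obsind;\param}(X_{\Obsind})_j^2]$ which is already present in $L_{\truncmarg}(\param)$; and the cross-term $-2\E[\sum_{j\in\Obsind}\truncfunc_\Obsind(X_{\Obsind})_j\score_{\Obsind}(X_{\Obsind})_j\score_{\Obsind;\param}(X_{\Obsind})_j]$, which carries all the work. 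Conditioning on $\Obsind=\obsind$ and writing out the expectation as an integral against $p_\obsind$ on $\cal X_\obsind$ is the natural next move.

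The central step is to use the marginal-score identity $\score_\obsind(\bm x_\obsind)_j\,p_\obsind(\bm x_\obsind)=\partial_j p_\obsind(\bm x_\obsind)$ to eliminate the unknown $\score_\obsind$ from the cross-term, turning it into $-2\int_{\cal X_\obsind}\truncfunc_\obsind(\bm x_\obsind)_j\score_{\obsind;\param}(\bm x_\obsind)_j\,\partial_j p_\obsind(\bm x_\obsind)\,\dd\bm x_\obsind$. Green's theorem (valid on the Lipschitz domain $\cal X_\obsind$ thanks to the $H^1$ regularity supplied by Assumption \ref{assump:miss_trunc}) transfers the derivative onto the product $\truncfunc_\obsind(\cdot)_j\score_{\obsind;\param}(\cdot)_j$, generating a boundary integral over $\partial\cal X_\obsind$ with integrand proportional to $\truncfunc_\obsind(\bm x'_\obsind)_j\,p_\obsind(\bm x'_\obsind)\,\score_{\obsind;\param}(\bm x'_\obsind)_j v_j(\bm x'_\obsind)$, which is precisely the quantity that the boundary-vanishing hypothesis in Assumption \ref{assump:miss_trunc} kills. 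Applying the product rule to the remaining volume integral and then re-absorbing the $p_\obsind$ factor into an expectation yields exactly the two surviving pieces of $L_{\truncmarg}(\param)$, namely $2\E[\sum_{j\in\Obsind}\truncfunc_\Obsind(X_{\Obsind})_j\,\partial_j\score_{\Obsind;\param}(X_{\Obsind})_j]$ and $2\E[\sum_{j\in\Obsind}\partial_j\truncfunc_\Obsind(X_{\Obsind})_j\,\score_{\Obsind;\param}(X_{\Obsind})_j]$.

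For the identifiability conclusion, a minimiser $\tilde\param$ of $L_{\truncmarg}$ also minimises $J_{\truncmarg}$ by the equivalence; since $J_{\truncmarg}\ge 0$ and the realising $\param^*$ drives it to $0$, we get $\truncfunc_\Obsind(X_\Obsind)\circ(\score_{\Obsind;\tilde\param}(X_\Obsind)-\score_\Obsind(X_\Obsind))^{\circ 2}=0$ a.s. Using that $\truncfunc_\obsind$ is positive on the interior of $\cal X_\obsind$ (its only zeros are at the boundary, which is a null set under $p_\obsind$), one concludes $\score_{\obsind;\tilde\param}=\score_\obsind$ a.s. for every $\obsind\in\supp(\Obsind)$. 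Connectedness of $\cal X_\obsind$ upgrades this gradient identity to equality of the (normalised) densities $p_{\obsind;\tilde\param}=p_\obsind$ a.s.\ on each $\obsind$, and Assumption \ref{assump:unique_theta}(e) then lifts the equality to the joint density, giving $\score_{\tilde\param}(X)=\score(X)$ a.s.

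The main obstacle is verifying that the integration-by-parts / Green's theorem step goes through rigorously: one must check that the $H^1$ regularity and Lipschitz-boundary assumptions on $\cal X_\obsind$ are enough to invoke Green's theorem in its trace-sense form, and that the precise statement of the boundary condition in Assumption \ref{assump:miss_trunc} (including the weight $v_j$) is the exact quantity produced by the surface integral. Everything else is bookkeeping, product rule, and linearity of expectation.
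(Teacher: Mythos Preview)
Your proposal is correct and follows essentially the same approach as the paper: expand the weighted squared norm, condition on $\Obsind=\obsind$, replace $\score_\obsind\, p_\obsind$ by $\partial_j p_\obsind$, apply Green's theorem on $\cal X_\obsind$ and kill the boundary term via the limiting hypothesis in Assumption \ref{assump:miss_trunc}, then use the product rule to recover the two remaining pieces of $L_{\truncmarg}$. Your identifiability argument (positivity of $\truncfunc_\obsind$ off the null boundary, connectedness to pass from score to density, then Assumption \ref{assump:unique_theta}(e) to lift to the joint) is actually spelled out more carefully than the paper's, which simply asserts uniqueness of the minimiser ``by our conditions''.
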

\begin{proof}
Proof given in Appendix \ref{app:trunc_marg_score_proof}    
\end{proof}

\subsubsection{Missing Sliced Score Matching}\label{app:miss_sliced_score_matching}
For readers who are unfamiliar with sliced score matching we provide a brief introduction in Appendix \ref{app:sliced_score_matching}.
For sliced score matching the only adaptations we need to make is to use our marginal scores and now alter our projection vectors to be over the appropriate subspace. Thus our objective becomes
\begin{align*}
    \scoreloss_{\slicedmarg}(\param)&\coloneqq\E[2\lrbrc{\nabla_{X_\Obsind}(V_{\Obsind}^\T \score_{\Obsind;\param}(X_{\Obsind}))}^\T V_{\Obsind}+V_{\Obsind}^\T\score_{\Obsind;\param}(X_{\Obsind})]\\
    &=\fisherdiv_{\slicedmarg}(\param)-C
\end{align*}
where for any $\obsind\in\supp(\Obsind)$, $V_\obsind$ is a RV on $\R^{|\obsind|}$ satisfying $\E[V_\obsind C_\obsind^\T]$ positive definite and $\E[\|V_\obsind\|^2]<\infty$.

To write this and it's gradient in terms of the full score, $\score_\param$ we can again use Lemma \ref{lemma:gradfirst_identities}.

This gives the following results
\begin{prop}
\begin{align}
    \scoreloss_{\slicedmarg}(\param)=&2\E\lrbrs{
        \innerexp\lrbrs{\nabla_{X_\Obsind}\lrbr{ V_{\Obsind}^\T \score_{\param}(X_{\Obsind}, X'_{\Missind}}^\T V_\Obsind)} 
        +\innerexp[(V^\T\score_\param(X_{\Obsind},X'_{\Missind}))^2]-\innerexp[(V^\T\score_\param(X_{\Obsind},X'_{\Missind}))]^2}\label{eq:marg_sliced_score}\\
       \nabla_{\param}\scoreloss_{\slicedmarg}(\param)=&2\E\bigg[
        \gradexpfunc_{\Obsind}\lrbr{\nabla_{X_\Obsind}\lrbr{ V_{\Obsind}^\T \score_{\param}(.)}^\T V_\Obsind)} 
        +\gradexpfunc_{\Obsind}\lrbr{V^\T\score_\param(.))^2}-\innerexp[V^\T\score_\param(X_{\Obsind},X'_{\Missind})]\gradexpfunc_{\Obsind}\lrbr{V^\T\score_\param(.)}\bigg]\label{eq:marg_sliced_gradscore}
    \end{align}
    where for any function $\genfunc_\theta:\R^d\rightarrow\R$,
    \begin{align*}
        \gradexpfunc_{\Obsind}(\genfunc_\theta)=&\innerexp[\nabla_\theta \genfunc_\theta(X_{\Obsind},X'_{\Missind})]+\innercov\lrbr{\nabla_\theta \ldensity_\theta(X_{\Obsind}, X'_{\Missind}), \genfunc_\theta(X_{\Obsind},X'_{\Missind})}
    \end{align*}
    and $\innerexp,\innercov$ are w.r.t. $X'_{\Missind}|X_{\Obsind}\sim p_{\param}(.|X_{\Obsind})$ with $\E$ being w.r.t. $X_{\Obsind}\sim p$.
\end{prop}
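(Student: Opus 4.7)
The strategy is to apply Lemma \ref{lemma:gradfirst_identities} in two successive stages. First, I would use identity (\ref{eq:mcmc_score_equiv}) and its derivative companion (\ref{eq:mcmc_gradscore_equiv}) to eliminate every occurrence of $\score_{\Obsind;\param}$ from the sliced-marginal population objective. Then, for the second formula, I would commute $\nabla_\param$ with the resulting conditional expectations using the same identity but with $\nabla$ interpreted as $\nabla_\param$ — which is precisely the role of the $\gradexpfunc_\Obsind$ operator from Corollary \ref{cor:grad_first_gradient}.

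For the first claim, starting from the definition of $\scoreloss_{\slicedmarg}$, I would use (\ref{eq:mcmc_score_equiv}) to substitute $V_\Obsind^\T \score_{\Obsind;\param}(X_\Obsind) = \innerexp[V_\Obsind^\T \score_\param(X_\Obsind, X'_\Missind)_\Obsind]$ into each copy of the marginal score. For the Hessian-like term $\{\nabla_{X_\Obsind}(V_\Obsind^\T \score_{\Obsind;\param})\}^\T V_\Obsind$, I would then apply (\ref{eq:mcmc_gradscore_equiv}) coordinate-by-coordinate with $\nabla = \nabla_{X_\Obsind}$ and $\genfunc_\param = V_\Obsind^\T \score_\param(\cdot)_\Obsind$: the expected-gradient piece returns $\innerexp[\{\nabla_{X_\Obsind}(V^\T \score_\param)\}^\T V]$, while contracting the covariance piece against $V_\Obsind$ collapses it to the scalar conditional variance $\innercov(V^\T \score_\param(\cdot)_\Obsind,\, V^\T \score_\param(\cdot)_\Obsind) = \innerexp[(V^\T \score_\param)^2] - \innerexp[V^\T \score_\param]^2$. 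Combining these with the squared marginal-score contribution, which by (\ref{eq:mcmc_score_equiv}) is simply $\innerexp[V^\T \score_\param]^2$, yields the first displayed identity.

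For the gradient formula I would differentiate the representation just obtained in $\param$. Every inner $\innerexp[\cdot]$ depends on $\param$ both through its integrand and through the conditioning density $p_\param(\cdot|X_\Obsind)$, so applying (\ref{eq:mcmc_gradscore_equiv}) once more — now with $\nabla = \nabla_\param$ — to $\genfunc_\param \in \{\{\nabla_{X_\Obsind}(V^\T \score_\param)\}^\T V,\; (V^\T \score_\param)^2,\; V^\T \score_\param\}$ in turn produces exactly the operator $\gradexpfunc_\Obsind$. For the remaining $-\innerexp[V^\T \score_\param]^2$ summand I would combine the chain rule $\nabla_\param (\innerexp[V^\T \score_\param])^2 = 2\,\innerexp[V^\T \score_\param]\,\nabla_\param \innerexp[V^\T \score_\param]$ with the same identity, producing the product $\innerexp[V^\T \score_\param(X_\Obsind,X'_\Missind)]\,\gradexpfunc_\Obsind(V^\T \score_\param(\cdot))$ that appears in the stated formula.

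The main obstacle is bookkeeping: one must track carefully \emph{which} score appears in each covariance term produced by (\ref{eq:mcmc_gradscore_equiv}) — the observed-coordinate projection $\score_\param(\cdot)_\Obsind$ when differentiating in $X_\Obsind$ versus the parameter-score $\nabla_\param \ldensity_\param$ when differentiating in $\param$ — and verify that after contraction with $V_\Obsind$ the former vector-valued covariance indeed collapses into the scalar variance written in the proposition. Justifying the interchanges of $\nabla_\param$ with $\innerexp$ requires regularity of the flavour already imposed in Assumption \ref{assump:unique_theta}, but a standard dominated-convergence argument suffices once the integrands are pinned down.
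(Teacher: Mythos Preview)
Your proposal is correct and follows essentially the same route as the paper: substitute the marginal score via (\ref{eq:mcmc_score_equiv}), expand the Hessian-like term coordinatewise and apply (\ref{eq:mcmc_gradscore_equiv}) with $\nabla=\nabla_{X_\Obsind}$ so that the covariance contracted against $V_\Obsind$ becomes the scalar variance of $V^\T\score_\param$, then combine with the squared term; for the gradient the paper likewise just says ``apply Lemma \ref{lemma:gradfirst_identities} again alongside the chain rule,'' which is exactly your second stage with $\nabla=\nabla_\param$ producing the $\gradexpfunc_\Obsind$ operator.
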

\begin{proof}
    We first have that
    \begin{align*}
        \scoreloss_{\slicedmarg}(\param)&=\E\lrbrs{
        2\lrbrc{\nabla_{X_\Obsind}(V_{\Obsind}^\T \innerexp[\score_{\param}(X_{\Obsind}, X'_{\Missind})_\Obsind])}^\T V_{\Obsind}+(V_{\Obsind}^\T\innerexp[\score_{\param}(X_{\Obsind}, X'_{\Missind})_\Obsind)^2]
        }\\
        &=\E\lrbrs{
        \lrbr{2\sum_{\dimind\in\Obsind}V_{\dimind} \nabla_{X_\Obsind}\innerexp[\score_{\param}(X_{\Obsind}, X'_{\Missind})_j]^\T V_{\Obsind}}+(V_{\Obsind}^\T\innerexp[\score_{\param}(X_{\Obsind}, X'_{\Missind})_\Obsind])^2
        }\\
        &=\E\bigg[
        \lrbr{2\sum_{\dimind\in\Obsind}V_{\dimind} \lrbr{\innerexp[\nabla_{X_\Obsind}\score_{\param}(X_{\Obsind}, X'_{\Missind})_j]+\cov(\score_\param(X_{\Obsind},X'_{\Missind}),\score_{\param}(X_{\Obsind}, X'_{\Missind})_j)}^\T V_{\Obsind}}\\
        &\qquad~~+(V_{\Obsind}^\T\innerexp[\score_{\param}(X_{\Obsind}, X'_{\Missind})_\Obsind])^2
        \bigg]\\
        &=\E\bigg[
        \lrbr{2\innerexp\lrbrs{\nabla_{X_\Obsind}\lrbr{ V_{\Obsind}^\T \score_{\param}(X_{\Obsind}, X'_{\Missind}}^\T V_\Obsind)} 
        +\innerexp[(V^\T\score_\param(X_{\Obsind},X'_{\Missind}))^2]-\innerexp[(V^\T\score_\param(X_{\Obsind},X'_{\Missind}))]^2}\\
        &\qquad~~+\innerexp[V_{\Obsind}^\T\score_{\param}(X_{\Obsind}, X'_{\Missind})_\Obsind]^2\bigg]\\
        &=2\E\lrbrs{
        \innerexp\lrbrs{\nabla_{X_\Obsind}\lrbr{ V_{\Obsind}^\T \score_{\param}(X_{\Obsind}, X'_{\Missind}}^\T V_\Obsind)} 
        +\innerexp[(V^\T\score_\param(X_{\Obsind},X'_{\Missind}))^2]-\innerexp[(V^\T\score_\param(X_{\Obsind},X'_{\Missind}))]^2}\\
    \end{align*}
    The second results directly from applying Lemma \ref{lemma:gradfirst_identities} again alongside the chain rule.
    
\end{proof}

\subsubsection{Missing Denoised Score Matching}
As with sliced score matching the adaptation is relatively immediate however we do first need to make some further restrictions on our noising process. Specifically we require that for any $t\in[0,1]$, and $j,j'\in[d]$ we have $X(t)_j\perp X(t)_{j'}|X(0)_j$. 

In most practical implementations each coordinate is independently noised therefore satisfying this condition. We require this to allow us to easily write the marginal transition kernel for any $\obsind\in\supp(\Obsind)$ given by $p_{\obsind}(x_{\obsind}(t)|x_{\obsind}(0))$. We then make our population objective
\begin{align*}
    \scoreloss_{\denoisedmarg}(\param)\coloneqq\E\lrbrs{\nu(t)\lrbrc{\|\score_{\Obsind;\param}(X_{\Obsind}(t),t)\|^2+\nabla_{X_{\Obsind}(t)}\log p_\Obsind(X_{\Obsind}(t)|X_{\Obsind}(0))}}
\end{align*}

We can again write this in terms of $\score_{\param}$ as we do in the following proposition
\begin{prop}
    \begin{align}
    \scoreloss_{\denoisedmarg}(\param)&=\E\lrbrs{\nu(t)\lrbrc{
        \sum_{j\in\Obsind}\innerexp\lrbrs{\score_{\param}(X_{\Obsind},X'_{\Missind})_j}^2~~ +~~\nabla_{X_{\Obsind}(t)}\log p_\Obsind(X_{\Obsind}(t)|X_{\Obsind}(0))
        }}\label{eq:marg_denoised_score}\\
        \nabla_{\param}\scoreloss_{\denoisedmarg}(\param)&=
        \E\Bigg[\nu(t)\bigg\{
        \sum_{j\in\Obsind}\innerexp\lrbrs{\score_{\param}(X_{\Obsind}(t),X'_{\Missind},t)_j}\Big(\innerexp[\partial_{\dimind}\score_{\param}(X_{\Obsind}(t),X'_{\Missind})_j]\label{eq:marg_denoised_gradscore}\\
        &\qquad\qquad\qquad\qquad\qquad\qquad\qquad\qquad\qquad+\innercov(\score_{\param}(X_{\Obsind}(t),X'_{\Missind},\score_{\param}(X_{\Obsind}(t),X'_{\Missind},t)_j)\Big)\nonumber\\
        &\qquad\quad~~+~\nabla_{X_{\Obsind}(t)}\log p_\Obsind(X_{\Obsind}(t)|X_{\Obsind}(0))
        \bigg\}\Bigg]\nonumber
    \end{align}
    where for any function $\genfunc_\theta:\R^d\rightarrow\R$,
    \begin{align*}
        \gradexpfunc_{\Obsind}(\genfunc_\theta)=&\innerexp[\nabla_\theta \genfunc_\theta(X_{\Obsind}(t),X'_{\Missind})]+\innercov\lrbr{\nabla_\theta \ldensity_\theta(X_{\Obsind}(t), X'_{\Missind}), \genfunc_\theta(X_{\Obsind}(t),X'_{\Missind})}
    \end{align*}
    and $\innerexp,\innercov$ are w.r.t. $X'_{\Missind}|X_{\Obsind}(t)\sim p_{\param}(.|X_{\Obsind})$ with $\E$ being w.r.t. $X_{\Obsind}(t)\sim p_t$.
\end{prop}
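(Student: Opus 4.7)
The plan is to obtain both identities by applying Lemma~\ref{lemma:gradfirst_identities} to rewrite the marginal score $\score_{\Obsind;\param}$ in terms of the full score $\score_\param$ under the conditional distribution of the imputed missing coordinates, exactly as in the proof of Corollary~\ref{cor:grad_first_gradient} but at noise level $t$ rather than at time zero.

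First, for the representation of $\scoreloss_{\denoisedmarg}(\param)$ itself, I would expand the squared norm coordinate-wise as $\|\score_{\Obsind;\param}(X_\Obsind(t),t)\|^2=\sum_{j\in\Obsind}\score_{\Obsind;\param}(X_\Obsind(t),t)_j^2$. The coordinatewise-independence condition $X(t)_j\perp X(t)_{j'}\mid X(0)_j$ imposed just before the proposition is what guarantees that the marginal forward kernel $p_\Obsind(x_\Obsind(t)\mid x_\Obsind(0))$ factorises correctly, so the conditional distribution $X'_\Missind\mid X_\Obsind(t)\sim p_\param(\cdot\mid X_\Obsind(t))$ driving $\innerexp$ is well-defined. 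Lemma~\ref{lemma:gradfirst_identities} equation~\eqref{eq:mcmc_score_equiv}, applied at $\bm x_\obsind=X_\Obsind(t)$, then gives $\score_{\Obsind;\param}(X_\Obsind(t),t)_j=\innerexp[\score_\param(X_\Obsind(t),X'_\Missind,t)_j]$. Substituting into the population objective, and noting that the second (log-transition) term neither involves $\score_{\Obsind;\param}$ nor requires any rewriting, produces the first display.

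For the gradient identity, I would differentiate under the outer expectation (justified by the paper's standing differentiability and integrability assumptions, as in Corollary~\ref{cor:grad_first_gradient}). The log-transition term has no $\theta$-dependence and drops out of the derivative. For the remaining squared-inner-expectation term, the chain rule gives
\begin{align*}
\nabla_\theta \innerexp[\score_\param(\cdot)_j]^2 = 2\,\innerexp[\score_\param(\cdot)_j]\cdot\nabla_\theta\innerexp[\score_\param(\cdot)_j],
\end{align*}
and Lemma~\ref{lemma:gradfirst_identities} equation~\eqref{eq:mcmc_gradscore_equiv}, applied with $\genfunc_\theta=\score_\param(\cdot)_j$ and $\nabla=\nabla_\theta$, expresses $\nabla_\theta\innerexp[\score_\param(\cdot)_j]$ as $\innerexp[\nabla_\theta\score_\param(\cdot)_j]+\innercov(\nabla_\theta\log\udensity_\theta(\cdot),\score_\param(\cdot)_j)$, which is precisely the $\gradexpfunc_\Obsind(\score_\param(\cdot)_j)$ object defined in Corollary~\ref{cor:grad_first_gradient}. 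Collecting terms and carrying the factor $\nu(t)$ through the outer expectation yields the stated gradient.

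The main technical point to watch is ensuring that the conditional distribution used by $\innerexp$ and $\innercov$ is the model's time-$t$ conditional $p_\param(\cdot\mid X_\Obsind(t))$ and not the clean-data conditional; the coordinatewise-independence of the noising process is the key assumption that lets Lemma~\ref{lemma:gradfirst_identities} be invoked at the noised level, because it is precisely what makes $p_\Obsind(x_\Obsind(t)\mid x_\Obsind(0))$ a genuine marginal of $p(x(t)\mid x(0))$. Beyond this, the proof is a mechanical application of the chain rule combined with Lemma~\ref{lemma:gradfirst_identities}, structurally identical to the derivation of Corollary~\ref{cor:grad_first_gradient}; no new analytical machinery is required.
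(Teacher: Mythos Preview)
Your proposal is correct and follows essentially the same route as the paper: expand the squared norm coordinate-wise, apply Lemma~\ref{lemma:gradfirst_identities} (equation~\eqref{eq:mcmc_score_equiv}) at the noised level to obtain the first display, then apply the lemma a second time (equation~\eqref{eq:mcmc_gradscore_equiv}) together with the chain rule on $\innerexp[\score_\param(\cdot)_j]^2$ to obtain the gradient. The paper's own proof is just these two applications of the lemma with no additional ingredients, so your plan matches it exactly; your remarks on the role of the coordinatewise-independence assumption and on the $\theta$-independence of the log-transition term are sound commentary that the paper's terse proof omits.
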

\begin{proof}
    Using Lemma \ref{lemma:gradfirst_identities}, we have that
    \begin{align*}
        \scoreloss_{\denoisedmarg}(\param)&=\E\lrbrs{
        \nu(t)\lrbrc{\sum_{j\in\Obsind}\score_{\Obsind;\param}(X(t)_\Obsind,t)_j^2~~ +~~\nabla_{X_{\Obsind}(t)}\log p_\Obsind(X_{\Obsind}(t)|X_{\Obsind}(0))
        }}\\
        &=\E\lrbrs{\nu(t)\lrbrc{
        \sum_{j\in\Obsind}\innerexp\lrbrs{\score_{\param}(X_{\Obsind}(t),X'_{\Missind},t)_j}^2~~ +~~\nabla_{X_{\Obsind}(t)}\log p_\Obsind(X_{\Obsind}(t)|X_{\Obsind}(0))
        }}
    \end{align*}
    A second application of the lemma then gives,
    \begin{align*}
        \nabla_{\param}\scoreloss_{\denoisedmarg}(\param)&=
        \E\Bigg[\nu(t)\bigg\{
        \sum_{j\in\Obsind}\innerexp\lrbrs{\score_{\param}(X_{\Obsind}(t),X'_{\Missind},t)_j}\Big(\innerexp[\partial_{\dimind}\score_{\param}(X_{\Obsind}(t),X'_{\Missind})_j]\\
        &\qquad\qquad\qquad\qquad\qquad\qquad\qquad\qquad\quad~~+~\innercov(\score_{\param}(X_{\Obsind}(t),X'_{\Missind},t),\score_{\param}(X_{\Obsind}(t),X'_{\Missind},t)_j)\}\Big)\\
        &\qquad~~+~\nabla_{X_{\Obsind}(t)}\log p_\Obsind(X_{\Obsind}(t)|X_{\Obsind}(0))
        \bigg\}\Bigg].
    \end{align*}
\end{proof}
\subsubsection{Missing not at Random Data}\label{app:MNAR_results}
So far we have assumed that our data is missing completely at random so that $X_{\Obsind}|\Obsind=\obsind\sim X_{\obsind}$. In other words, we could treat corrupted samples as though they were simply marginal samples and still perform valid inference. Often however, such an assumption is unrealistic and the probability of parts of a sample begin missing depends upon the sample itself. Generally this is split into two cases. Missing at Random (MAR) and Missing not at Random (MNAR). MAR data occurs when the probability of a coordinate being missing depends only upon other coordinates of the sample. This means that $M_j\perp X_j|X_{-j}$. In MNAR data we allow $M_j$ to depend upon $X_j$ as well meaning that an observations value determines its own probability of being missing.

Here we will focus in the MNAR scenario and treat the MAR scenario as a special case of this. 
The core idea of this approach will be to work with a "joint" score rather than a marginal score. Before we do this we need to set-up our MNAR case. Specifically for $\obsind\in\supp(\Obsind)$ define the event
\begin{align*}
    E_{\obsind}\coloneqq\{X'_{\obsind}\neq\NaN,X'_{\missind}=\NaN\}
\end{align*} and define $\varphi_\obsind(X)\coloneqq\prob(E_\obsind|X)$. Throughout we will assume each $\varphi_\obsind$ to be \emph{known}. This is often an unrealistic assumption however this allows us the flexibility of having a method which is independent of how the $\varphi_\obsind$ are learned.

To work with this MNAR data we need to define some adaptations of densities and score functions. 
\begin{defn}
$X$ with PDF $p$ and event $E$ define $p(\bm x;E)$ to be the "joint" density satisfying
\begin{align*}
    \int_{B}p(\bm x;E)\diff x =\prob(\{X\in B\}\cup E)
\end{align*} 
for all $B\in\cal B_{\cal X}$.
\end{defn}

From this and with our particular events we can redefine the missing score as,
\begin{align}\label{eq:miss_score}
    \score_{\obsind}(\bm x_{\Obsind})&=\nabla_{\bm x_\obsind}\log p_{\obsind}(\bm x_{\obsind};E_{\obsind})\\
    &=\nabla_{\bm x_\obsind}\log\lrbr{\int p(\bm x; E)\diff \bm x_{\missind}}\\
    &=\nabla_{\bm x_\obsind}\log\lrbr{\int p(\bm x)\varphi_\obsind(\bm x)\diff \bm x_{\missind}}\nonumber
\end{align}

\begin{remark}
    this missing score is \emph{not} the same as the marginal score. We slightly abuse notation here using the same notation as we did for the marginal score. This is however reasonable as for the MCAR case the marginal score and the missing score are identical.
\end{remark}

With this newly defined score, we can proceed similarly to the MCAR case and use the objective $\estscoreloss_\marg(\param)$ defined in \eqref{eq:marg_obj} or \eqref{eq:truncmarg_obj} but with our new defined score.
We now show a provide a similar justification for this approach as in the MCAR case but first need to introduce an additional assumption. 
\begin{assumption}\label{assump:nonzeroprob}
For each $\obsind\in\supp(\Obsind)$, $\prob(E_\obsind|X_{\obsind})>0$ a.s..
\end{assumption}

\begin{remark}
    We do not require every missingness pattern to have positive probability just that if a missingness pattern does have positive probability, it has positive probability for every possible underlying sample.
\end{remark}

This then leads us to our desired result.
\begin{prop}\label{prop:MNAR_score_equiv}
    Suppose with are in our MNAR set-up and  assume that assumptions \ref{assump:unique_theta} \& \ref{assump:nonzeroprob} hold and that there exists $\param^*$ with $s_{\param^*}(X)=s_{\param}(X)$ a.s.. Then if $\tilde\param$ is a minimiser of $L_\marg(\param)$ where the missing scores are defined by (\ref{eq:miss_score}
    then $\score_{\tilde\param}(X)=\score(X)$ a.s..
\end{prop}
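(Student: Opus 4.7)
The plan is to reduce the MNAR case to a direct application of Proposition~\ref{prop:marg_score_population}, but applied not to the marginal densities of $X$ but to the conditional densities of $X_\obsind$ given the missingness event $E_\obsind$.

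First, I would observe that, by Bayes' rule, the conditional density of $X_\obsind$ given $E_\obsind$ is
$$p^{\varphi}_\obsind(\bm x_\obsind) = \frac{\int p(\bm x)\varphi_\obsind(\bm x)\diff \bm x_\missind}{\prob(E_\obsind)},$$
and analogously for the model $p^{\varphi}_{\obsind;\param}$. The normaliser $\prob(E_\obsind)$ drops out in the log-gradient, so the missing score in \eqref{eq:miss_score} coincides with $\nabla_{\bm x_\obsind}\log p^{\varphi}_\obsind(\bm x_\obsind)$, and likewise $\score_{\obsind;\param}(\bm x_\obsind)=\nabla_{\bm x_\obsind}\log p^{\varphi}_{\obsind;\param}(\bm x_\obsind)$. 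The crucial observation is that the corrupted samples, after conditioning on $\Obsind = \obsind$, are distributed exactly as $p^{\varphi}_\obsind$, so the expectation defining $L_\marg(\param)$ is a genuine expectation with respect to the conditioned data-generating density whose score is $\score_{\obsind}$.

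Second, I would rerun the argument of Proposition~\ref{prop:marg_score_population} verbatim, using the pair $(p^{\varphi}_\obsind, p^{\varphi}_{\obsind;\param})$ in place of $(p_\obsind, p_{\obsind;\param})$. The integration-by-parts and boundary-vanishing steps go through provided Assumption~\ref{assump:unique_theta}(a)--(d) hold for $p^{\varphi}_\obsind$; Assumption~\ref{assump:nonzeroprob} ensures $p^{\varphi}_\obsind$ has the same support as $p_\obsind$ so that these regularity conditions transfer without issue. The conclusion is that any minimiser $\tilde\param$ satisfies $\nabla_{\bm x_\obsind}\log p^{\varphi}_{\obsind;\tilde\param}(\bm x_\obsind) = \nabla_{\bm x_\obsind}\log p^{\varphi}_\obsind(\bm x_\obsind)$ a.s.\ for every $\obsind\in\supp(\Obsind)$. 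Since both sides are bona fide probability densities on the same connected support, their log-difference is a constant which must vanish by normalisation, so $p^{\varphi}_{\obsind;\tilde\param} = p^{\varphi}_\obsind$ a.s.\ for every $\obsind\in\supp(\Obsind)$.

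Third, I would invoke Assumption~\ref{assump:unique_theta}(e), interpreted in the MNAR sense (so that $p_{\obsind;\param}$ there reads as $p^{\varphi}_{\obsind;\param}$), to lift equality of all weighted marginals to equality of the joints: $p_{\tilde\param}(X) = p(X)$ a.s., and hence $\score_{\tilde\param}(X) = \score(X)$ a.s. The main obstacle is precisely that the regularity conditions in Assumption~\ref{assump:unique_theta} are stated for $p$, not for $p^{\varphi}$, and so each condition must be confirmed under the extra factor $\varphi_\obsind$; positivity of $\varphi_\obsind$ via Assumption~\ref{assump:nonzeroprob} is what keeps us on firm ground here, since without it the weighting could destroy support or smoothness and identifiability via (e) could silently fail.
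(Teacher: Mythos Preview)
Your first two steps --- recasting the missing score as the score of the conditioned density $p^{\varphi}_\obsind$ and rerunning the integration-by-parts argument of Proposition~\ref{prop:marg_score_population} --- are essentially the paper's proof, and they correctly deliver $p^{\varphi}_{\obsind;\tilde\param}=p^{\varphi}_\obsind$ a.s.\ for every $\obsind\in\supp(\Obsind)$. The gap is in your third step. You propose to invoke Assumption~\ref{assump:unique_theta}(e) ``interpreted in the MNAR sense'', i.e.\ with $p^{\varphi}_{\obsind;\param}$ standing in for $p_{\obsind;\param}$. But the assumption is stated for the \emph{unweighted} marginals; replacing the objects in a hypothesis is not a proof step, it is a different hypothesis, and there is no reason the two should be equivalent in general.

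The paper's closing move is instead to strip off the $\varphi$-weighting. One factors
\[
p_\obsind(\bm x_\obsind;E_\obsind)=\prob(E_\obsind\mid X_\obsind=\bm x_\obsind)\,p_\obsind(\bm x_\obsind),
\]
does the same for the model side, and then divides through by the common factor $\prob(E_\obsind\mid X_\obsind)$. Assumption~\ref{assump:nonzeroprob} is exactly what makes this factor a.s.\ positive and hence cancellable. This yields $p_{\obsind;\tilde\param}(X_\obsind)=p_\obsind(X_\obsind)$ a.s.\ for all $\obsind\in\supp(\Obsind)$, and now Assumption~\ref{assump:unique_theta}(e) applies verbatim to conclude $p_{\tilde\param}(X)=p(X)$ a.s. In your write-up you treat Assumption~\ref{assump:nonzeroprob} only as a support/regularity safeguard for the integration by parts; its real job in the argument is precisely this cancellation that returns you to the unweighted marginals so that (e) can be used as stated.
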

The proof for this is similar to the MCAR case and is given in Appendix \ref{app:MNAR_proofs}.

Now we have our objective we need to see how we can derive $\score_{\obsind}(\bm x_{\obsind})$. Again we can do this similarly to the MCAR case. Let $\udensity_\param$ be our estimate of the unnormalised density then
\begin{align*}
    \score_{\obsind;\param}(\bm x_{\obsind})&=\nabla_{\bm x_\obsind}\log p_{\obsind;\param}(\bm x_{\obsind}; E_\obsind)\\
    &=\nabla_{\bm x_\obsind}\log \int_{\cal X_{\missind}} p_{\param}(\bm x; E_\obsind)\diff \bm x_{\Missind}\\
    &=\nabla_{\bm x_\obsind}\log \int_{\cal X_{\missind}} p_{\param}(\bm x)\varphi_\obsind(\bm x)\diff \bm x_{\Missind}\\
    &=\nabla_{\bm x_\obsind}\log \int_{\cal X_{\missind}} \udensity_{\param}(\bm x)\varphi_\obsind(\bm x)\diff \bm x_{\Missind}\\
    &=\nabla_{\bm x_\obsind}\log \E_{\iwdensity}\lrbrs{\frac{\udensity_\param(\bm x_{\obsind},X'_{\missind})\varphi_\obsind(\bm x_{\obsind},X'_{\missind})}{\iwdensity(X'_{\missind})}}\\
    &\approx \nabla_{\bm x_\obsind}\log  \inv{\nimputesamples}\sum_{\imputeind=1}^\nimputesamples \lrbrs{\frac{\udensity_\param(\bm x_{\obsind},X^{'(\imputeind)}_{\missind})\varphi_\obsind(\bm x_{\obsind},X^{'(\imputeind)}_{\missind})}{\iwdensity(X^{'(\imputeind)}_{\missind})}}
\end{align*}

As a result we can approximate our objective analogously to our approach for MCAR data.

\subsection{Finite Sample Bounds for Truncated Importance Weighted Score Matching}
\label{app:trunc-miss_score_finite}
To be able to prove finite sample bound results we first need to present some key definitions.
\begin{defn}[Approximate Truncated Marginal Score Matching Objective]
\label{defn:est_trunc_marg_score}
    For $n,\nimputesamples\in\N,~\param\in\parameterset$ take our sample objective to be
    \begin{align*}
        \iwestscoreloss_{\truncmarg;\nsamples,\nimputesamples}(\param)
        \coloneqq&\inv{n}\sum_{i=1}^n g_{\Obsind_i}(X^{(i)}_{\Obsind_i})\|\hat\score_{\Obsind_i,\nimputesamples;\param}(X^{(i)}_{\Obsind_i})\|^2+2 g_{\Obsind_i}(X^{(i)}_{\Obsind_i})\nabla_{X^{(i)}_{\Obsind_i}} \cdot \hat\score_{\Obsind_i,\nimputesamples;\param}(X^{(i)}_{\Obsind_i})+2\nabla_{X^{(i)}_{\Obsind_i}} g_{\Obsind_i}(X^{(i)}_{\Obsind_i})^\T \hat\score_{\Obsind_i,\nimputesamples;\param}(X^{(i)}_{\Obsind_i})
    \end{align*}
with $\hat\score_{\obsind,\nimputesamples;\param}(\bm x_{\obsind})$ being our estimated marginal score from Definition \ref{def:est_marg_score}.
\end{defn}

Additionally we define
\begin{align*}
    f_{0,\obsind}(\bm x,\param)&\coloneqq\frac{\udensity_\param(\bm x)}{\iwdensity(\bm x_{\Missind})}&
    f_{1,\obsind}(\bm x,\param)&\coloneqq\frac{\nabla_{\bm x}\udensity_\param(\bm x)}{\iwdensity(\bm x_{\Missind})}&
    f_{2,\obsind}(\bm x,\param)&\coloneqq\frac{\nabla_{\bm x}cdot(\nabla_{\bm x}\udensity_\param(\bm x))}{\iwdensity(\bm x_{\missind})}.
\end{align*}

We now set-up the following assumptions
\begin{assumption}\label{assump:bdd_f}
    There exists $a>0$ s.t. for all $\bm x\in\cal X,~\obsind\in\supp(\Obsind), k\in\{0,1,2\}$ 
    \begin{itemize}
        \item $\|f_{k,\obsind}(\bm x,\param)\|,~g_\obsind(\bm x_{\obsind}),~\|\nabla_{\bm x_\lambda} g_\obsind(\bm x_{\obsind})\|<a$, 
        \item $\inv{a}<f_{0,\obsind}(\bm x,\param)$        
    \end{itemize}
\end{assumption}
\begin{remark}
    It is this assumptions which restrict us from obtaining a similar result in the non-truncated case as it is unrealistic to have both $\inv{a}<f_{0,\obsind}(\bm x)$ and $p(\bm x_{\obsind})\rightarrow0$ as $\|\bm x_{\obsind}\|\rightarrow \infty$.
\end{remark}

\begin{assumption}\label{assump:pw_lipschitz}
    For each $\obsind\in\supp(\Obsind),~\otherind\in\{0,1,2\}$ we have that for any $\param,\param'\in\parameterset$:
        $$\|f_{\otherind,\obsind}(\bm x,\param)-f_{0,\obsind}(\bm x,\param')|\leq M_k(\bm x)\distance(\param,\param'),$$
        where $M_k(X_{\obsind},\bm x_{\missind}),M_k(\bm x_{\obsind}, X'_{\missind})$ are sub-Gaussian with parameters $\sigma_{\otherind,\obsind},\sigma'_{\otherind,\missind}$ respectively for all $\bm x_{\missind}\in\cal X_{\missind}$.
\end{assumption}
\begin{remark}
    This assumption is immediately satisfied if $\parameterset$ is compact and $f_{\otherind,\obsind}(\bm x,\param)$ are pointwise Lipschitz w.r.t. $\param$. Hence this assumption is slightly weaker than a uniformly lipschitz assumption
\end{remark}

We can now state our theorem

\begin{theorem}\label{thm:trunc-miss_score_finite}
    Assume that assumptions \ref{assump:unique_theta}, \ref{assump:miss_trunc}, \ref{assump:bdd_f}, \ref{assump:pw_lipschitz}   hold and let $\param_{n,\nimputesamples}\in\paramspace\subseteq\R^\paramdim$ be the minimisers of $\iwestscoreloss_{\truncmarg;\nsamples,\nimputesamples}(\param)$.
    If $\parameterset\subseteq\R^\paramdim$ then for sufficiently large $n,\nimputesamples$ 
    \begin{align*}
    \prob\lrbr{\fisherdiv_\truncmarg(\param_{n,\nimputesamples})\geq\beta_1\sqrt{\frac{\paramdim\log(dn \nimputesamples\diam(\parameterset)/\delta)}{\nimputesamples}}+\beta_2\sqrt{\frac{\paramdim\log(n\diam(\parameterset)/\delta)}{n}}+\beta_3\lrbr{\frac{n+\nimputesamples}{n \nimputesamples}}\lrbr{C+\sqrt{\frac{\log(n/\delta)}{n}}}}<\delta.
    \end{align*}

    where $\beta_1,\beta_2$ depend upon $a$, $\beta_3$ depends upon $a,\{\sigma_{\obsind,\otherind},\sigma'_{\missind,\otherind}\}_{(\otherind,\obsind)\in\{0,1,2\}\times \supp(\Obsind)}$ and 
    $C$ depends upon $a$, 
    \newline $\{\E[M_k(X_{\obsind},X'_{\missind})]\}_{(\otherind,\obsind)\in\{0,1,2\}\times\supp(\Obsind)}$.
\end{theorem}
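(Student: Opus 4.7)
The plan is to combine the standard M-estimation sandwich with a separate analysis of the importance-weighting (IW) approximation. By Proposition \ref{prop:miss_trunc} we have $L_\truncmarg(\param) = \fisherdiv_\truncmarg(\param) - C$, and under Assumption \ref{assump:unique_theta}(e) together with the existence of some $\param^\ast$ realising $\score_{\param^\ast}=\score$ almost surely, $\fisherdiv_\truncmarg(\param^\ast)=0$. Because $\param_{n,\nimputesamples}$ minimises $\iwestscoreloss_{\truncmarg;\nsamples,\nimputesamples}$ over $\parameterset$, the usual add-and-subtract gives
\[
\fisherdiv_\truncmarg(\param_{n,\nimputesamples}) \;\leq\; 2\sup_{\param\in\parameterset}\bigl|L_\truncmarg(\param) - \iwestscoreloss_{\truncmarg;\nsamples,\nimputesamples}(\param)\bigr|,
\]
so the whole task reduces to bounding this uniform deviation in high probability.

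Next I would insert an intermediate quantity $\widehat L_{\truncmarg;\nsamples}(\param)$ that uses the \emph{exact} marginal scores $\score_{\Obsind_i;\param}$ and $\nabla\cdot\score_{\Obsind_i;\param}$ averaged over the $\nsamples$ data samples, and split the supremum by the triangle inequality. The piece $\sup_\param|L_\truncmarg(\param)-\widehat L_{\truncmarg;\nsamples}(\param)|$ is a purely data-side empirical process: by Assumption \ref{assump:bdd_f} the integrand is uniformly bounded, and by Assumption \ref{assump:pw_lipschitz} it is Lipschitz in $\param$ (these properties transfer from the $f_{k,\obsind}$ to the marginal scores via the dominated-convergence representation of $\score_{\obsind;\param}$ as $\nimputesamples\to\infty$). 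A covering-number argument on $\parameterset$ combined with Hoeffding yields a uniform bound of order $\sqrt{\paramdim\log(\nsamples\diam(\parameterset)/\delta)/\nsamples}$, which is the $\beta_2$ term.

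For the second piece $\sup_\param|\widehat L_{\truncmarg;\nsamples}(\param)-\iwestscoreloss_{\truncmarg;\nsamples,\nimputesamples}(\param)|$ I would linearise per data point. Writing $\hat\score_{\obsind,\nimputesamples;\param}(\bm x_\obsind)=\nabla_{\bm x_\obsind}\log\widehat Z_\obsind(\bm x_\obsind,\param)$ with $\widehat Z_\obsind=\frac{1}{\nimputesamples}\sum_k f_{0,\obsind}(\bm x_\obsind,X^{\prime(k)}_\missind,\param)$, a Taylor expansion of $\log$ around the true marginal normaliser $Z_\obsind$ is well defined because the lower bound $f_{0,\obsind}\geq 1/a$ in Assumption \ref{assump:bdd_f} prevents denominator collapse. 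The first-order remainder is a ratio of centred Monte-Carlo sums involving $f_{0,\obsind},f_{1,\obsind}$ (and $f_{2,\obsind}$ for the divergence term), each a sub-Gaussian average thanks to Assumption \ref{assump:pw_lipschitz}. Applying Bernstein pointwise, then a covering argument in $\param$ and a union bound over the $\nsamples$ data indices, produces a stochastic contribution of order $\sqrt{\paramdim\log(\datadim\nsamples\nimputesamples\diam(\parameterset)/\delta)/\nimputesamples}$, yielding the $\beta_1$ term. The second-order Taylor remainder contributes a deterministic $O(1/\nimputesamples)$ bias per point; once averaged across the $\nsamples$ samples and combined with the residual empirical fluctuation of this bias, it produces the $\beta_3(\nsamples+\nimputesamples)/(\nsamples\nimputesamples)(C+\sqrt{\log(\nsamples/\delta)/\nsamples})$ correction.

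The main obstacle is the linearisation step: $\hat\score_{\obsind,\nimputesamples;\param}$ is a nonlinear (gradient-of-log) functional of the auxiliary samples, so controlling its error demands an explicit quadratic Taylor remainder that stays uniformly bounded in $\param$ and in the IW draws, and this uniform control must coexist with both the covering of $\parameterset$ (powered by Assumption \ref{assump:pw_lipschitz}) and the union bound over the $\nsamples$ data indices. Once the three sources of error are each controlled at confidence $\delta/3$ and summed, the stated inequality follows.
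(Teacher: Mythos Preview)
Your high-level architecture matches the paper: the M-estimation sandwich reducing $\fisherdiv_\truncmarg(\param_{n,\nimputesamples})$ to a uniform deviation, the insertion of an intermediate empirical loss $\widehat L_{\truncmarg;\nsamples}$ using the \emph{exact} marginal scores, and the triangle-inequality split into a data-side ($\beta_2$) and an IW-side ($\beta_1$) piece. For the pointwise IW error the paper does not Taylor-expand the log; it uses a direct algebraic decomposition of each ratio $|Y_{k,\nimputesamples}/Y_{0,\nimputesamples}-Y_k/Y_0|$ (Proposition \ref{prop:trunc-miss_subloss_finite}) together with the lower bound $f_{0,\obsind}\geq 1/a$, then applies Hoeffding to each bounded summand. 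Your Taylor route could also be made to work for that pointwise control, so that part is a legitimate alternative.

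The genuine gap is the $\beta_3$ term, which does not come from a second-order Taylor remainder. In the paper the covering argument (Proposition \ref{prop:ULLN}) leaves a residual of the form $\eta\times(\text{Lipschitz constant of the loss in }\param)$, and the Lipschitz constants of both $\widehat L_{\truncmarg;\nsamples}$ and $\iwestscoreloss_{\truncmarg;\nsamples,\nimputesamples}$ are \emph{random} empirical averages built from the functions $M_k$ of Assumption \ref{assump:pw_lipschitz}. The sub-Gaussianity in that assumption is used precisely here---not for the IW averages themselves, whose concentration comes from the boundedness in Assumption \ref{assump:bdd_f}---to concentrate these random Lipschitz constants around their mean $C$ (via Lemmas \ref{lemma:subgauss_expectation} and \ref{lemma:nested_sums_subgaus}). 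Choosing covering radii $\eta_1=1/\nimputesamples$ and $\eta_2=1/\nsamples$ then yields a residual $(\eta_1+\eta_2)(2C+\eps_3)$ with $\eps_3$ of order $\sqrt{\log(\nsamples/\delta)/\nsamples}$, which is exactly the $\beta_3(\nsamples+\nimputesamples)/(\nsamples\nimputesamples)\bigl(C+\sqrt{\log(\nsamples/\delta)/\nsamples}\bigr)$ term. Your Taylor-remainder explanation cannot reproduce this $(1/\nsamples+1/\nimputesamples)$ structure and, because you implicitly treat the Lipschitz constants as deterministic, the covering argument is left unclosed; this random-Lipschitz concentration step is the missing ingredient.
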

\begin{proof}
    The proof for this alongside multiple intermediary results can be found in \ref{app:finite_bound_proofs}
\end{proof}

Here we have shown convergence of our sample/approximate objective to the population objective. This combined with proposition \ref{prop:miss_trunc} which states that our population objective is minimised by the true score suggests that our approach does give valid inference for learning the score. A key limitation of our result is that to obtain convergence, we require $\nimputesamples\conv\infty$. Furthermore, to obtain $\log(n)/n$ rate convergence we need $\nimputesamples$ to be of the same order as $n$. As the computational complexity of our algorithm in $O(n \nimputesamples)$, this suggests that to obtain our desired convergence to the population objective will have $O(n^2)$ computational complexity.

\begin{remark}
    Our dependency on our Lipschitz constants only enters into the $C$ term with the associate sub-Gaussian parameters entering only into the $\sigma$.
\end{remark}
\begin{remark}
    Dependence upon $g$ simply requires $g$ and $\nabla g$ bounded. This is achieved on a compact $\cal X$ by $g(\bm x)=\min_{\bm x'\in\partial X}d(\bm x,\bm x')$ and on a non-compact space by $g(\bm x)=\min_{\bm x'\in\partial X}d(\bm x,\bm x')\bigwedge1$.
\end{remark}

\subsection{Relationship between IW and Variational objectives}\label{app:iw_var_equivalence}
Despite being derived quite differently from the marginal score matching objective. We show below that the two objectives are actually identical in some cases. Specifically, when the IW density $\iwdensity$ doesn't depend upon the observed data $\bm x_{\Obsind}$, we can treat the importance weighted approach as an importance weighting approximation of the gradient estimate in \eqref{eq:marg_obj_grad}. We demonstrate this through the two results below
\begin{lemma}\label{lemma:iw_var_equivalence}
    For some density $p'$ which generates IID samples $\{X^{\prime(k)}_{\missind}\}_{k\in r}$  let
    \begin{align*}
        w_k&\coloneqq\frac{\udensity_{\param}(\bm x_{\obsind}, X^{\prime(k)}_{\missind})}{p'(X^{\prime(k)}_{\missind})}\\
        \bar{w_k}&\coloneqq w_k\lrbr{\sum_{k'=1}^r w_{k'}}^{-1}\\
        \hat\score_{\theta,\obsind}(\bm x_{\obsind})&\coloneqq \nabla_{\bm x_{\obsind}}\log\inv{r}\sum_{k=1}^r w_k\\
        \hat \E_{iw}[g_\theta(X)]&\coloneqq \inv{r}\sum_{k=1}^r  \bar w_k g_\theta(\bm x_{\obsind},X^{\prime(k)}_{\missind})\\
        \hat \cov_{iw}(f(X),g_\theta(X))&\coloneqq\inv{r}\sum_{k=1}^r \bar w_k g_\theta(\bm x_{\obsind},X^{\prime(k)}_{\missind})f(\bm x_{\obsind}, X^{\prime(k)}_{\missind})-\lrbr{\inv{r}\sum_{k=1}^r \bar w_k g_\theta(\bm x_{\obsind},X^{\prime(k)}_{\missind})}\lrbr{\inv{r}\sum_{k=1}^r \bar w_k f(\bm x_{\obsind}, X^{\prime(k)}_{\missind})}.
    \end{align*}
    Then
    \begin{align}
        \hat \score_{\theta;\obsind}(\bm x_{\obsind})&=\hat\E_{iw}[\score_{\param}(\bm x_{\obsind},X'_{\missind})]\label{eq:mcmc_approxscore_equiv}\\
        \nabla \hat\E_{iw}[g_\theta(\bm x_{\obsind}, X'_{\missind})]&=\hat\E_{iw}[\nabla g_\theta(\bm x_{\obsind},X'_{\missind})]+\hat\cov_{iw}(s_\theta(\bm x_{\obsind},X'_{\missind}),g_\theta(\bm x_{\obsind},X'_{\missind})).\label{eq:mcmc_gradapproxscore_equiv}
    \end{align}
    where $\nabla$ represents the gradient w.r.t. $\bm x_{\obsind}$ or $\param$.
    In other words, we can take importance weights first then gradients (LHS) or gradients and then importance weights (RHS).
\end{lemma}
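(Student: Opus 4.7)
Both identities reduce to direct applications of the chain and quotient rules once one records the following key observation: because the IW density $\iwdensity$ depends on neither $\bm x_{\obsind}$ nor $\param$, we have
\[
    \nabla \log w_k \;=\; \nabla\log \udensity_\param(\bm x_\obsind, X^{\prime(k)}_\missind),
\]
and hence $\nabla w_k = w_k \nabla \log \udensity_\param(\bm x_\obsind, X^{\prime(k)}_\missind)$. When $\nabla = \nabla_{\bm x_\obsind}$ this is $w_k \score_\param(\bm x_\obsind, X^{\prime(k)}_\missind)_\obsind$; when $\nabla = \nabla_\param$ this is $w_k \nabla_\param \ldensity_\param(\bm x_\obsind, X^{\prime(k)}_\missind)$ in the notation used in Corollary~\ref{cor:grad_first_gradient}. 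This is the discrete counterpart of the log-derivative trick used to prove Lemma~\ref{lemma:gradfirst_identities}, and it is the step where the "$\iwdensity$ does not depend on $\bm x_\obsind$" hypothesis is essential.

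For \eqref{eq:mcmc_approxscore_equiv} the plan is to write
\[
    \hat\score_{\theta;\obsind}(\bm x_\obsind) \;=\; \nabla_{\bm x_\obsind}\log\Bigl(\tfrac{1}{r}\sum_{k=1}^r w_k\Bigr) \;=\; \frac{\sum_k \nabla_{\bm x_\obsind} w_k}{\sum_k w_k},
\]
substitute the identity above for $\nabla_{\bm x_\obsind} w_k$, pull a factor of $w_k$ out of each summand and recognise the resulting ratio $w_k / \sum_{k'} w_{k'}$ as $\bar w_k$ (up to the normalising constant built into the definition of $\hat\E_{iw}$). This gives exactly $\hat\E_{iw}[\score_\param(\bm x_\obsind, X'_\missind)]$.

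For \eqref{eq:mcmc_gradapproxscore_equiv} I would apply the quotient rule to
\[
    \hat\E_{iw}[g_\theta(X)] \;=\; \frac{\sum_k w_k\, g_\theta(\bm x_\obsind, X^{\prime(k)}_\missind)}{\sum_k w_k}.
\]
Differentiating the numerator produces two terms: $\sum_k w_k \nabla g_\theta$ (which, after dividing by $\sum_{k'} w_{k'}$, is $\hat\E_{iw}[\nabla g_\theta]$), and $\sum_k (\nabla w_k)\, g_\theta = \sum_k w_k \score_\theta\, g_\theta$ (which contributes $\hat\E_{iw}[\score_\theta g_\theta]$). Differentiating the denominator gives $\sum_k w_k \score_\theta$, so the subtracted quotient-rule term is $\hat\E_{iw}[g_\theta]\cdot\hat\E_{iw}[\score_\theta]$. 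The combination $\hat\E_{iw}[\score_\theta g_\theta] - \hat\E_{iw}[\score_\theta]\hat\E_{iw}[g_\theta]$ is precisely $\hat\cov_{iw}(\score_\theta, g_\theta)$ by the definition given in the statement.

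There is really no hard step here; the main thing to be careful about is bookkeeping the normalising constants between $w_k$ and $\bar w_k$ consistently, and making explicit the point where $\nabla p'(X^{\prime(k)}_\missind) = 0$ is used so the reader sees precisely where the lemma's standing hypothesis on $\iwdensity$ is invoked. Once that is done, the computation is self-contained in a few lines for each identity and requires no probabilistic machinery, which is the sense in which Lemma~\ref{lemma:iw_var_equivalence} is the exact sample-level analogue of Lemma~\ref{lemma:gradfirst_identities}.
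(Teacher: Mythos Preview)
Your proposal is correct and follows essentially the same route as the paper's proof: both identities are obtained by applying the chain/quotient rule to the weighted sums, using $\nabla w_k = w_k\,\nabla\log \udensity_\param(\bm x_\obsind, X^{\prime(k)}_\missind)$ (which is exactly where the assumption that $\iwdensity$ is independent of $\bm x_\obsind$ and $\param$ enters), and then recognising the resulting ratios as $\hat\E_{iw}$ and $\hat\cov_{iw}$. Your explicit flagging of the $\bar w_k$ bookkeeping is well placed, since the paper's stated definition of $\hat\E_{iw}$ carries a stray $1/r$ factor that is silently dropped in its own computation.
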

\begin{proof}
    Proof given in Section \ref{app:iw_var_equivalence_proofs}.
\end{proof}

\begin{corollary}\label{cor:iw_var_equivalence}
    We have that
    \begin{align}
        \nabla_\theta \hat L(\theta;\bm x_{\obsind}, X'_{\Missind})=&-2\hat\E_{iw}[\score_{\param}(\bm x_{\obsind}, X'_{\Missind})_i]\lrbrc{\hat\E_{iw}[\nabla_\theta \score_{\param}(\bm x_{\obsind}, X'_{\Missind})_i]+\hat\cov_{iw}\lrbr{\nabla_\theta \ldensity_\theta(\bm x_{\obsind}, X'_{\Missind}), \score_{\param}(\bm x_{\obsind}, X'_{\Missind})_i}}\nonumber\\
    &+2(\hat\E_{iw}[\nabla_\theta \score_{\param}(\bm x_{\obsind}, X'_{\Missind})_i^2]+\hat\cov_{iw}\lrbr{\nabla_\theta \ldensity_\theta(\bm x_{\obsind}, X'_{\Missind}), \score_{\param}(\bm x_{\obsind}, X'_{\Missind})_i^2})\label{eq:mcmc_loss_approx_grad}\\
    &+2(\hat\E_{iw}[\nabla_\theta \partial_i \score_{\param}(\bm x_{\obsind}, X'_{\Missind})_i]+\hat\cov_{iw}\lrbr{\nabla_\theta \ldensity_\theta(\bm x_{\obsind}, X'_{\Missind}), \partial_i \score_{\param}(\bm x_{\obsind}, X'_{\Missind})_i})\nonumber
    \end{align}
\end{corollary}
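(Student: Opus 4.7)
The plan is to derive \eqref{eq:mcmc_loss_approx_grad} as a direct $\theta$-differentiation of the pointwise IW marginal score-matching loss $\hat L(\theta;\bm x_\obsind,X'_\Missind)=\sum_{i\in\obsind}\bigl[\hat\score_{\theta,\obsind}(\bm x_\obsind)_i^2+2\partial_i\hat\score_{\theta,\obsind}(\bm x_\obsind)_i\bigr]$, mirroring exactly the derivation of Corollary~\ref{cor:grad_first_gradient} but applying the IW identities \eqref{eq:mcmc_approxscore_equiv}--\eqref{eq:mcmc_gradapproxscore_equiv} from Lemma~\ref{lemma:iw_var_equivalence} in place of Lemma~\ref{lemma:gradfirst_identities}. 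The opening step is to use \eqref{eq:mcmc_approxscore_equiv} to replace $\hat\score_{\theta,\obsind}(\bm x_\obsind)_i$ by $\hat\E_{iw}[\score_\param(\bm x_\obsind,X'_\missind)_i]$, so that every outer gradient subsequently lands on an IW expectation, at which point \eqref{eq:mcmc_gradapproxscore_equiv} becomes the only tool required.

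For the squared-score contribution, the chain rule and \eqref{eq:mcmc_gradapproxscore_equiv} (with $g_\theta=\score_\param(.)_i$ and $\nabla=\nabla_\theta$) give $\nabla_\theta\hat\score_{\theta,\obsind}(\bm x_\obsind)_i^2=2\hat\E_{iw}[\score_\param(.)_i]\{\hat\E_{iw}[\nabla_\theta\score_\param(.)_i]+\hat\cov_{iw}(\nabla_\theta\ldensity_\theta,\score_\param(.)_i)\}$. For the divergence contribution, I would commute $\partial_i$ with $\nabla_\theta$ and first apply \eqref{eq:mcmc_gradapproxscore_equiv} with $\nabla=\partial_i$ --- noting that, because $p'$ does not depend on $\bm x_\obsind$, the ``score'' appearing in the covariance correction is $\partial_i\log\udensity_\param=\score_\param(.)_i$ --- to obtain $\partial_i\hat\E_{iw}[\score_\param(.)_i]=\hat\E_{iw}[\partial_i\score_\param(.)_i]+\hat\cov_{iw}(\score_\param(.)_i,\score_\param(.)_i)$. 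After expanding the self-covariance as $\hat\E_{iw}[\score_\param(.)_i^2]-\hat\E_{iw}[\score_\param(.)_i]^2$ and taking $\nabla_\theta$ of each resulting IW expectation via \eqref{eq:mcmc_gradapproxscore_equiv}, one obtains the two ``clean'' contributions $\hat\E_{iw}[\nabla_\theta\partial_i\score_\param(.)_i]+\hat\cov_{iw}(\nabla_\theta\ldensity_\theta,\partial_i\score_\param(.)_i)$ and $\hat\E_{iw}[\nabla_\theta\score_\param(.)_i^2]+\hat\cov_{iw}(\nabla_\theta\ldensity_\theta,\score_\param(.)_i^2)$, plus a cross term $-2\hat\E_{iw}[\score_\param(.)_i]\{\hat\E_{iw}[\nabla_\theta\score_\param(.)_i]+\hat\cov_{iw}(\nabla_\theta\ldensity_\theta,\score_\param(.)_i)\}$ from the chain rule on $-\hat\E_{iw}[\score_\param(.)_i]^2$.

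Assembling everything, the squared-score contribution supplies $+2\hat\E_{iw}[\score_\param(.)_i]\{\cdot\}$, while the divergence contribution (with its leading factor of $2$) supplies $-4\hat\E_{iw}[\score_\param(.)_i]\{\cdot\}$ from its cross term; these combine into the $-2\hat\E_{iw}[\score_\param(.)_i]\{\cdot\}$ on the first line of \eqref{eq:mcmc_loss_approx_grad}, with the two clean contributions supplying the second and third lines. The main obstacle is the sign and coefficient bookkeeping in this cancellation; one must also consistently track that the ``score'' appearing inside $\hat\cov_{iw}$ is $\score_\param(.)_i$ when the outer gradient is $\partial_i$ but $\nabla_\theta\ldensity_\theta$ when the outer gradient is $\nabla_\theta$, a distinction that is implicit in Lemma~\ref{lemma:iw_var_equivalence} and that is the sole origin of the $\nabla_\theta\ldensity_\theta$ on the right-hand side of \eqref{eq:mcmc_loss_approx_grad}.
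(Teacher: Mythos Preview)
Your proposal is correct and is precisely the approach the paper takes: its one-line proof reads ``The exact same arguments give us the second result but with the importance weighting identities \eqref{eq:mcmc_approxscore_equiv} and \eqref{eq:mcmc_gradapproxscore_equiv} replacing \eqref{eq:mcmc_score_equiv} \& \eqref{eq:mcmc_gradscore_equiv}.'' Your write-up simply unpacks that sentence, including the coefficient bookkeeping that produces the $-2$ on the first line and the correct observation that the covariance correction carries $\score_\param(.)_i$ when differentiating in $x_i$ but $\nabla_\theta\ldensity_\theta$ when differentiating in $\theta$.
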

\begin{proof}
    Proof given in Section \ref{app:iw_var_equivalence_proofs}.
\end{proof}

\subsection{Exploring the Marginal Fisher Divergence for Normal Distributions}\label{app:marg_fisher_fornormal}
While intuitively the Fisher divergences of the marginal distributions should act as effective proxies for the Fisher divergence for the fully observed distributions, we would like to be able to examine the relationship between the two more explicitly. We do know that marginal Fisher divergences will be zero when then fully observed distributions equivalent however here we give a more detailed examination in the case of normal distributions.

Suppose that $X\sim N(\mu,P^{-1})$ 
\begin{align*}
    p(x)=\exp\{-\half(\bm x-\bm\mu)^\T P(\bm x-\bm\mu)\}+C
\end{align*}
with $C$ a constant w.r.t. $\bm x$.
We then have that 
\[\score(\bm x)=-P(\bm x-\bm\mu)\]
If we suppose that our unnormalised density/score model is of the form
\begin{align*}
    \udensity_{\param}(\bmx)&\coloneqq \exp\lrbrc{-\half(\bmx-\bm\mu_{\param})^\T \prec_{\param}(\bmx-\bm\mu_{\param})} &\Rightarrow \score_{\param}&=\prec_{\param}(\bmx-\bm\mu_{\param})
\end{align*}

Then with the marginal Fisher taken to be 
\begin{align*}
    F_\marg(\param)=\E_{\Obsind,X_{\Obsind}}\lrbrs{\|\score_{\Obsind}(X_{\Obsind})\score_{\param;\Obsind}\|^2}
\end{align*}
where here for each $\obsind\in\supp(\Obsind)$, $\score_{\obsind},\score_{\param;\obsind}$ are the true marginal scores. Using properties of the normal distribution and the Schur complement we know that the precision of $X_{\obsind}$ is given by
\begin{align*}
    \lrbrc{\lrbr{\prec^{-1}}_{\obsind,\obsind}}^{-1}=\prec_{\obsind,\obsind}-\prec_{\obsind,\missind}\prec_{\missind,\missind}^{-1}\prec_{\missind,\obsind}.
\end{align*}

Plugging this in we get
\begin{align*}
    F_\marg(\param)=&\E\Bigg[\bigg\|(\prec_{\Obsind,\Obsind}-\prec_{\param;\Obsind,\Obsind})X_{\Obsind}+(\prec_{\Obsind,\Missind}\prec_{\Missind,\Missind}^{-1}\prec_{\Missind,\Obsind}-\prec_{\param;\Obsind,\Missind}\prec_{\param;\Missind,\Missind}^{-1}\prec_{\param;\Missind,\Obsind})X_{\Obsind}
    \\&\qquad-((\prec_{\Obsind,\Obsind}-\prec_{\Obsind,\Missind}\prec_{\Missind,\Missind}^{-1}\prec_{\Missind,\Obsind})\bm\mu_{\Obsind}-(\prec_{\param;\Obsind,\Obsind}-\prec_{\param;\Obsind,\Missind}\prec_{\param;\Missind,\Missind}^{-1}\prec_{\param;\Missind,\Obsind})\bm\mu_{\param;\Obsind})\bigg\|^2\Bigg].
\end{align*}
This shows why naive marginalisation by zeroing out missing coordinates of our score would not work as in this case the Fisher divergence would be given by
\begin{align*}
    F_\marg(\param)=&\E\Bigg[\bigg\|((\prec_{\Obsind,\Obsind}-\prec_{\Obsind,\Missind}\prec_{\Missind,\Missind}^{-1}\prec_{\Missind,\Obsind})-\prec_{\param;\Obsind,\Obsind})X_{\Obsind}
    -((\prec_{\Obsind,\Obsind}-\prec_{\Obsind,\Missind}\prec_{\Missind,\Missind}^{-1}\prec_{\Missind,\Obsind})\bm\mu_{\Obsind}-\prec_{\param;\Obsind,\Obsind}\bm\mu_{\param;\Obsind})\bigg\|^2\Bigg]
\end{align*}
which encourages $P_{\param;\obsind,\obsind}$ to be close to $\prec_{\obsind,\obsind}-\prec_{\obsind,\missind}\prec_{\missind,\missind}^{-1}\prec_{\missind,\obsind}$ for all $\obsind\in\supp(\Obsind)$ meaning it will not give us the true density.

\subsection{Variational Pseudo-loss}\label{app:var_pseudo_loss}
When using \eqref{eq:marg_obj_grad} it is helpful to be able to view it as the gradient of some pseudo-loss allowing it to plug into a more standard ML framework where we calculate the loss, take the gradient w.r.t. our parameter using auto-differentiation, and update our parameter estimate based on this.
The below result show how we can do this by creating a loss with certain instances of our parameter detached from the computational graph.
\begin{prop}\label{prop:grad_first_pseudo_loss}
    Let 
    \begin{align*}
        J(\theta,\theta',\bm x_{\obsind},X'_{\missind})\coloneqq& 
        -2\E'[\score_{\theta'}(\bm x_{\obsind}, X'_{\Missind})_i]\lrbrc{\E'[\score_{\param}(\bm x_{\obsind}, X'_{\Missind})_i]+\cov'\lrbr{\ldensity_\theta(\bm x_{\obsind}, X'_{\Missind}), \score_{\theta'}(\bm x_{\obsind}, X'_{\Missind})_i}}\\
         &+2(\E'[\score_{\param}(\bm x_{\obsind}, X'_{\Missind})_i^2]+\cov'\lrbr{ \ldensity_\theta(\bm x_{\obsind}, X'_{\Missind}), \score_{\theta'}(\bm x_{\obsind}, X'_{\Missind})_i^2})\\
    &+2(\E'[\partial_i \score_{\param}(\bm x_{\obsind}, X'_{\Missind})_i]+\cov'\lrbr{\ldensity_\theta(\bm x_{\obsind}, X'_{\Missind}), \partial_i \score_{\theta'}(\bm x_{\obsind}, X'_{\Missind})_i})
    \end{align*}
    where $\E',\cov'$ are w.r.t. $X'_{\missind}|X_{\obsind}=\bm x_{\obsind};\theta'$
    Then
    \begin{align*}
        \nabla_{\theta'}L(\theta',\bm x_{\obsind})=\frac{\partial }{\partial \theta}J(\theta,\theta',\bm x_{\obsind})\Big|_{\theta=\theta'}
    \end{align*}
\end{prop}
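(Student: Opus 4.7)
The strategy is to view $J(\theta,\theta',\bm x_{\obsind})$ as a function designed so that $\theta$ enters only through $\score_\theta$ and $\ldensity_\theta$, while $\theta'$ parametrises the sampling distribution $p_{\theta'}(\,\cdot\mid\bm x_\obsind)$ against which $\innerexp$ and $\innercov$ are computed. The crucial observation is therefore that $\innerexp$ and $\innercov$ \emph{do not depend on $\theta$}, so differentiating with respect to $\theta$ commutes with these operators (under standard integrability/dominated-convergence assumptions already implicit in Assumption~\ref{assump:unique_theta}).

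First I would differentiate each of the three lines of $J$ term-by-term, passing $\nabla_\theta$ into $\innerexp$ and $\innercov$. The two ``$\innerexp[\score_\theta(\,\cdot\,)_i^{\phantom k}]$'' and ``$\innerexp[\score_\theta(\,\cdot\,)_i^2]$'' and ``$\innerexp[\partial_i \score_\theta(\,\cdot\,)_i]$'' terms produce $\innerexp[\nabla_\theta \score_\theta(\,\cdot\,)_i]$, $\innerexp[\nabla_\theta \score_\theta(\,\cdot\,)_i^2]$ and $\innerexp[\nabla_\theta \partial_i \score_\theta(\,\cdot\,)_i]$ respectively. The three covariance terms of the form $\innercov(\ldensity_\theta, h_{\theta'})$ produce $\innercov(\nabla_\theta \ldensity_\theta, h_{\theta'})$, since only the first slot depends on $\theta$ and it does so linearly inside $\innercov$. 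The outer factor $-2\innerexp[\score_{\theta'}(\,\cdot\,)_i]$ in line one is independent of $\theta$ and is left untouched.

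Next, I would evaluate at $\theta=\theta'$. Under this substitution $\score_{\theta'}$ coincides with $\score_\theta$ in every ``second slot'' of each covariance, and the three pairs (one expectation-derivative term plus one covariance-derivative term) combine exactly into the $\gradexpfunc_{\Obsind}$ operator of Corollary~\ref{cor:grad_first_gradient}, namely
\begin{align*}
\gradexpfunc_\obsind(h_\theta) \;=\; \innerexp[\nabla_\theta h_\theta] + \innercov(\nabla_\theta \ldensity_\theta,\, h_\theta).
\end{align*}
Grouping the resulting six terms into three, one obtains
\begin{align*}
\left.\tfrac{\partial}{\partial\theta}J(\theta,\theta',\bm x_\obsind)\right|_{\theta=\theta'}
= 2\bigl[\gradexpfunc_\obsind(\score_{\theta'}(\,\cdot\,)_i^2 + \partial_i \score_{\theta'}(\,\cdot\,)_i)
- \innerexp[\score_{\theta'}(\bm x_\obsind, X'_\missind)_i]\,\gradexpfunc_\obsind(\score_{\theta'}(\,\cdot\,)_i)\bigr],
\end{align*}
which is exactly the per-coordinate integrand of $\nabla_{\theta'}L(\theta',\bm x_\obsind)$ as derived in Corollary~\ref{cor:grad_first_gradient} (summing over $i\in\obsind$ and taking outer expectation over $X_\Obsind$ if one is working with the full marginal loss).

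The only subtlety — and arguably the ``hard part'' — is bookkeeping: ensuring that the pairing of each $\innerexp$-derivative with its matching $\innercov$-derivative produces precisely $\gradexpfunc_\obsind$ applied to the correct functional ($\score_\theta(\,\cdot\,)_i$, $\score_\theta(\,\cdot\,)_i^2$, or $\partial_i \score_\theta(\,\cdot\,)_i$). No new analytic machinery is required beyond Corollary~\ref{cor:grad_first_gradient} and the linearity of $\innerexp$/$\innercov$ in their first argument; the regularity needed to justify $\nabla_\theta$ passing inside $\innerexp$ is inherited from the assumptions already invoked in Proposition~\ref{prop:marg_score_population}.
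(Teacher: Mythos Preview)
Your proposal is correct and follows exactly the same idea as the paper's proof, which is in fact a one-liner: ``This just follows directly from the exchangeability of expectations and gradients (when the gradient is w.r.t.\ something independent of the expectation distribution).'' You have simply unpacked that sentence in full detail, carrying out the term-by-term differentiation and matching against Corollary~\ref{cor:grad_first_gradient}.
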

\begin{proof}
    This just follows directly from the exchangeability of expectations and gradients (when the gradient is w.r.t. something independent of the expectation distribution.)
\end{proof}
Hence we can use this loss (by replacing all instances of $\param'$ with $\param$ and then detaching them from the computation graph) to treat our problem as a standard gradient descent problem.

Note that while we can treat this like a loss for our optimisation, our intent is not actually to minimise it. The estimated form of the loss is given in the proof of Corollary \ref{cor:grad_first_gradient} which is given in \ref{app:grad_first_proofs} but we state it again explicitly here for convenience.

\begin{align*}
    \scoreloss_{\marg}(\param)=\E\lrbrs{\sum_{j\in\Obsind}-\innerexp[s_\theta(X_{\Obsind},X'_{\missind})_j]^2+
        2\innerexp[s_\theta(X_{\Obsind},X'_{\Missind})_j^2]+2\innerexp[\partial_i s_\theta(X_{\Obsind},X'_{\Missind})_j]}
\end{align*}

\section{Additional Experimental Results}\label{app:experiment_results}
Here we present some additional experimental results not in the main body of the paper.
\subsection{Parameter Estimation}
\subsubsection{Truncated Gaussian Model}\label{app:experiment_truncated_normal}
Here we present the accompanying mean and precision error results for Gaussian model estimation experiment presented in Section \ref{sec:experiment_truncated_normal}. These results are presented in Figure \ref{fig:Normal_trunc_meanandprec}.

\begin{figure}
    \centering
    \begin{subfigure}{0.45\linewidth}
    \includegraphics[width=\linewidth]{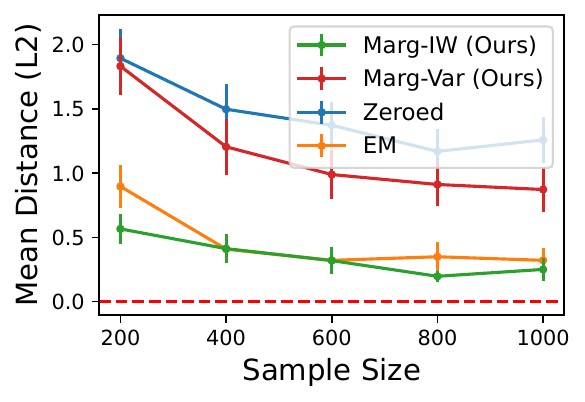}
    \caption{Average error in estimation of the Mean (L2 norm).}
    \label{fig:Normal_trunc_mean}
    \end{subfigure}
    \begin{subfigure}{0.45\linewidth}
    \includegraphics[width=\linewidth]{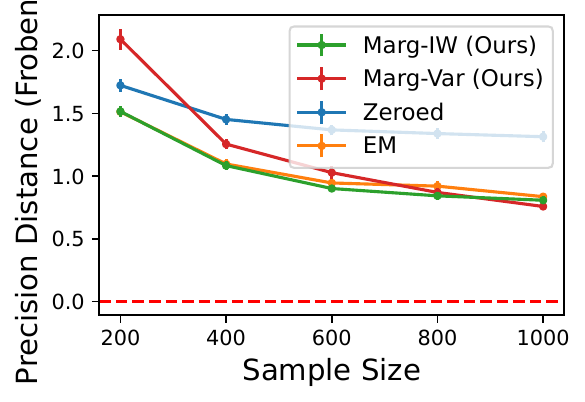}
    \caption{Average error in estimation of the Precision (Frobenius Norm).}
    \label{fig:Normal_trunc_prec}
    \end{subfigure}
    \caption{Average parameter estimation error for truncated Gaussian score estimates alongside 95\% confidence intervals under various methods.}
    \label{fig:Normal_trunc_meanandprec}
\end{figure}

\subsubsection{Untruncated Gaussian Model}\label{app:experiment_untruncated_normal}
Here we present the untruncated version of the experiment presented in the main paper. Details of the distribution are the same as presented in Appendix \ref{app:experiment_details} but without the truncation.

\begin{figure}[h]
    \centering
    \includegraphics[width=0.45\linewidth]{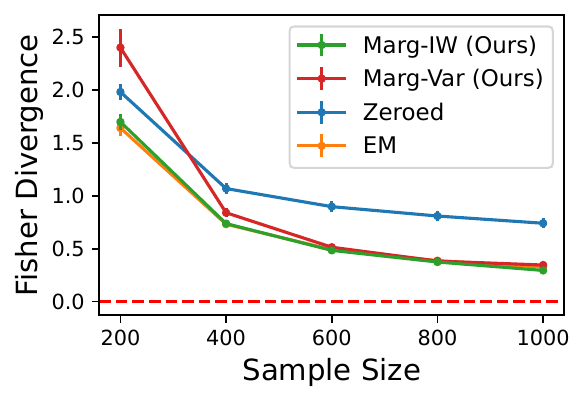}
    \caption{Average Fisher Divergence with 95\% C.I.s for various approaches. Lower is better.}
    \label{fig:10dim_normal}
\end{figure}
As we can see we obtains similar results here as in the truncated case.

We also illustrate what the true covariance and precision matrix look like for this example alongside the naive marginalisation in order to highlight where Zeroed Score Matching goes wrong.

In Figure \ref{fig:normal_cov&var} we can see the covariance and precision of a sample distribution where we can clearly see the strong dependence of dimensions 1 and 10 relative to the others.

\begin{figure}[h]
    \centering
    \begin{subfigure}{0.45\linewidth}
        \includegraphics[width=\linewidth]{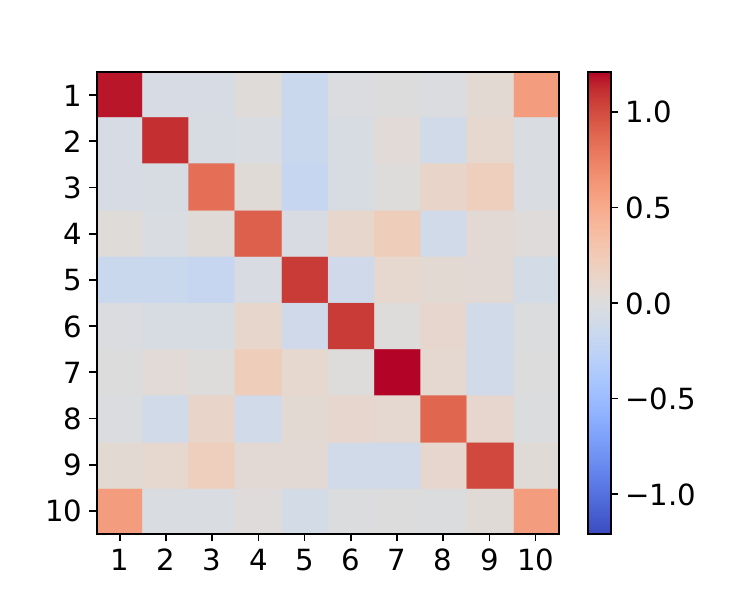}    
        \caption{Covariance}
    \end{subfigure}
    \begin{subfigure}{0.45\linewidth}
        \includegraphics[width=\linewidth]{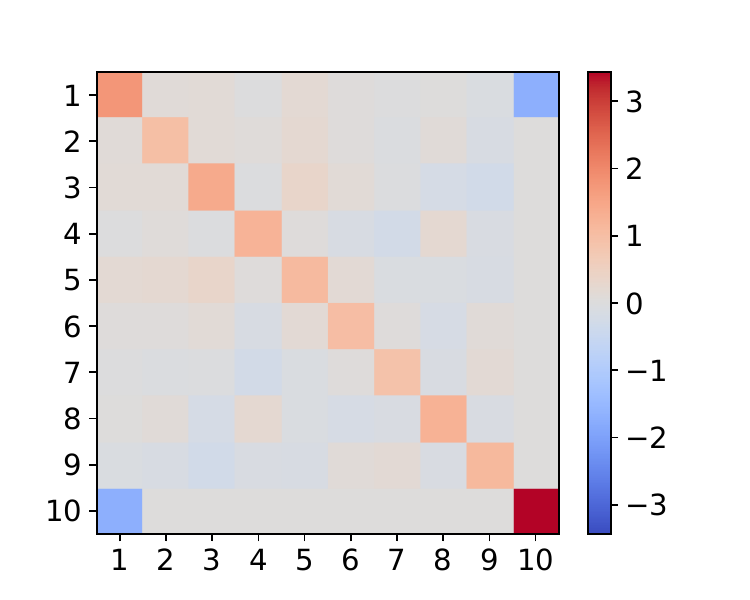}    
        \caption{Precision}
    \end{subfigure}
    \caption{Covariance and precision from a sample distribution from our normal experiment.}
    \label{fig:normal_cov&var}
\end{figure}

\begin{figure}[h]
    \centering
    \begin{subfigure}{0.45\linewidth}
        \includegraphics[width=\linewidth]{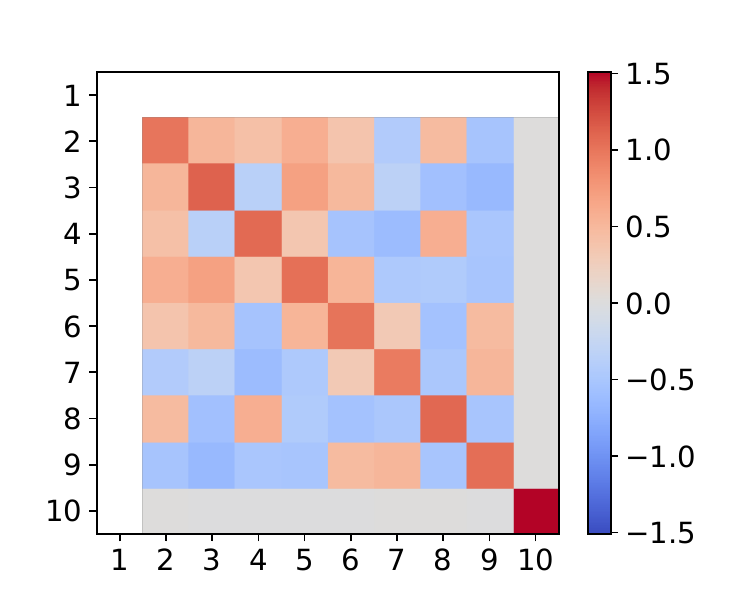}    
        \caption{Naive Marginalisation}
    \end{subfigure}
    \begin{subfigure}{0.45\linewidth}
        \includegraphics[width=\linewidth]{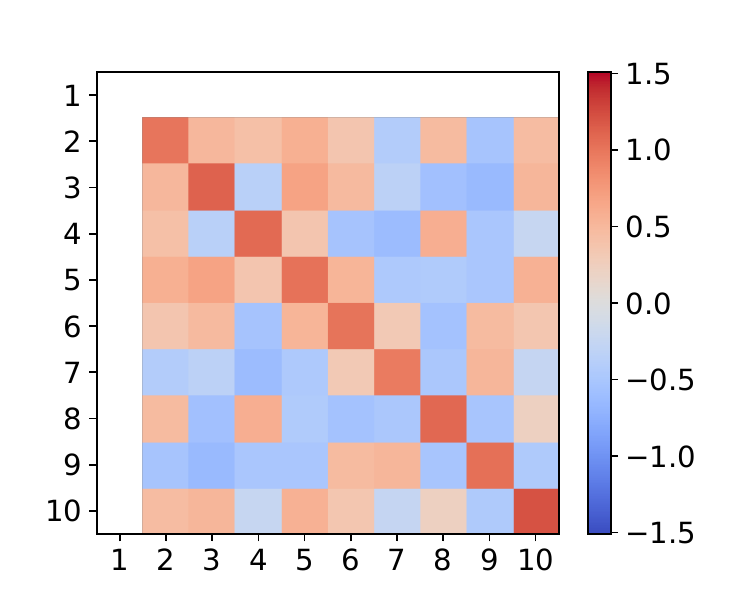}    
        \caption{True Marginalisation}
    \end{subfigure}
    \caption{Marginalisation of the precision to remove dimension 1 by the naive approach (i.e. subsetting the precision) and the correct approach. All values cube-rooted for contrastive purposes. }
    \label{fig:normal_marginalisation}
\end{figure}

In Figure \ref{fig:normal_marginalisation} we can see the naive and true marginal precisions when dimension 1 is missing. For this plot the values have been cube-rooted in order to emphasize the difference between zero and non-zero entries. Here we can see that the naive marginalisation wouldn't capture the dependence between dimension 10 and the other dimensions that gets introduced when dimension 1 is removed. This means that a naive marginalisation would assume that dimension 10 must have a direct dependence on dimensions 2-9 even when that is not true. Interestingly, the rest of the marginalisation seems very similar suggesting that in some potentially less structured cases, naive marginalisation can provide a semi-reasonable approximation. This supports the results we see in our GGM estimation where highly structured graphs like star graphs are much more affected naive marginalisation than unstructured graphs.

\subsubsection{Non-Gaussian Estimation}\label{app:ICA_experiments}
Here we present further experiments exploring the non-Gaussian model presented in Section \ref{sec:ICA_experiments}. Here we fix the dimension as 10. In Figure \ref{fig:ICA_varyn} we fix the missing probability as 0.5 and vary the sample size. In Figure \ref{fig:ICA_varymiss} we fix the sample size as 1000 and vary the missing probability.

\begin{figure}
    \centering
    \begin{subfigure}{0.45\linewidth}
    \includegraphics[width=\linewidth]{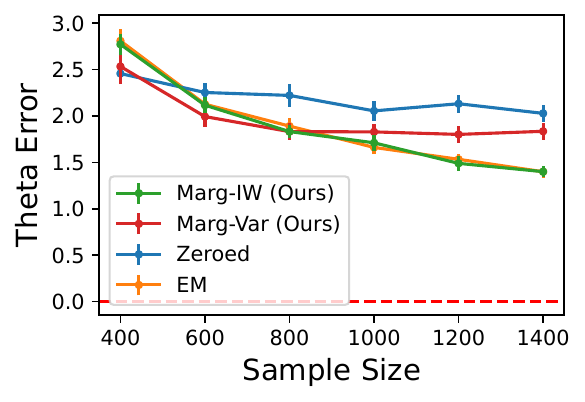}
    \caption{Varying sample size with missing probability 0.5 and dimension 10.}
    \label{fig:ICA_varyn}
    \end{subfigure}
    \begin{subfigure}{0.45\linewidth}
    \includegraphics[width=\linewidth]{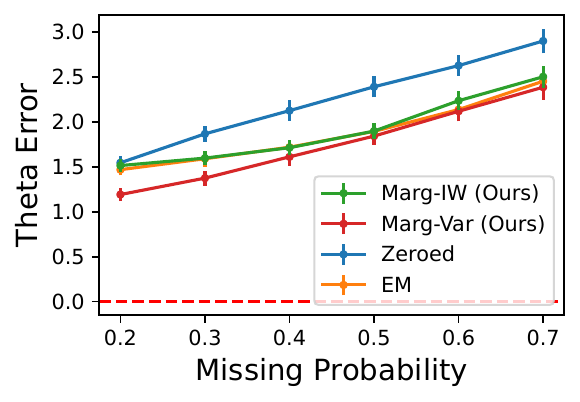}
    \caption{Varying missing probability with sample size 1000 and dimension 10.}
    \label{fig:ICA_varymiss}
    \end{subfigure}
    \caption{Average parameter estimation error (L2 norm) for ICA inspired model with 95\% C.I.s for various methods. Lower is better }
    \label{fig:ICA_extra_experiments}
\end{figure}

From Figure \ref{fig:ICA_varyn} we see that both EM and Marg-IW have the smallest estimation error for larger sample sizes. Zeroed Score Matching has the largest estimation error due to its inability to appropriately marginalise the distribution. In Figure \ref{fig:ICA_varymiss}, we observe that Marg-Var has the smallest estimation error with its performance convergence to that of Marg-IW and EM as the missing probability increases.

\subsection{GGM Estimation}
\subsubsection{Varying Number of Star Centres}\label{app:experiment_vary_stars}
Here we present illustrations of the marginalisations for our star-shaped graphs with 1 node and then 5 nodes both with the same edge density.

In Figure \ref{fig:star_marginalisation} we show the covariance, precision, marginal covariance, and marginal precision for a star graph with 1 centre where the marginal terms are with dimension 1 removed. 
As we can see clearly the only meaningful structure left in the graph after marginalisation are in the negative precision terms which the model naive marginalisation fails to capture.

\begin{figure}[h]
    \centering
    \begin{subfigure}{0.45\linewidth}
        \includegraphics[width=\linewidth]{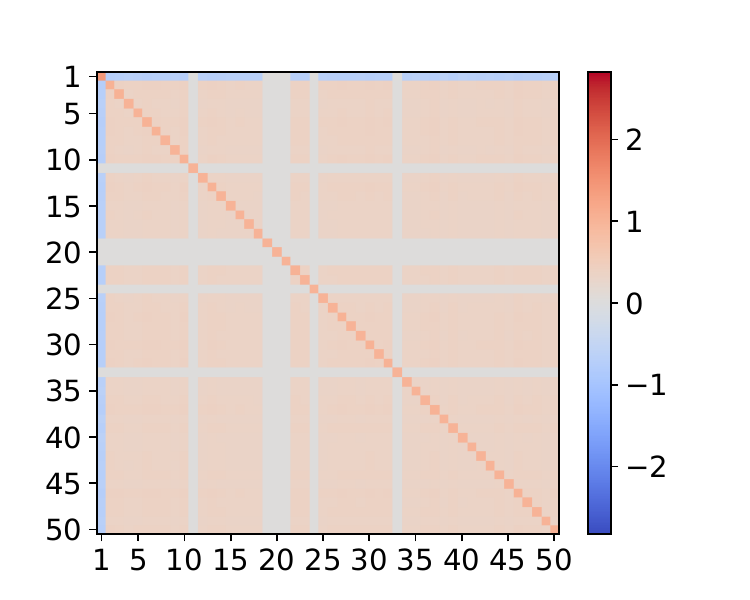}    
        \caption{Covariance}
    \end{subfigure}
    \hfill
    \begin{subfigure}{0.45\linewidth}
        \includegraphics[width=\linewidth]{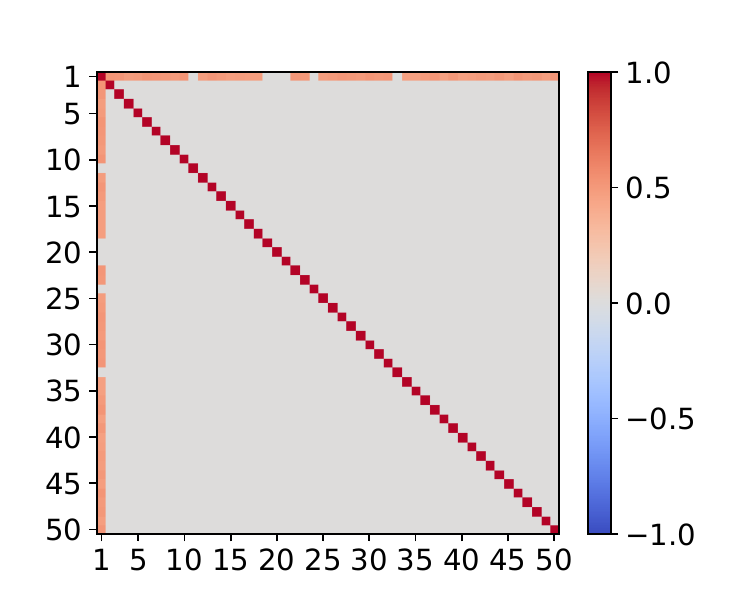}    
        \caption{Precision}
    \end{subfigure}
    \begin{subfigure}{0.45\linewidth}
        \includegraphics[width=\linewidth]{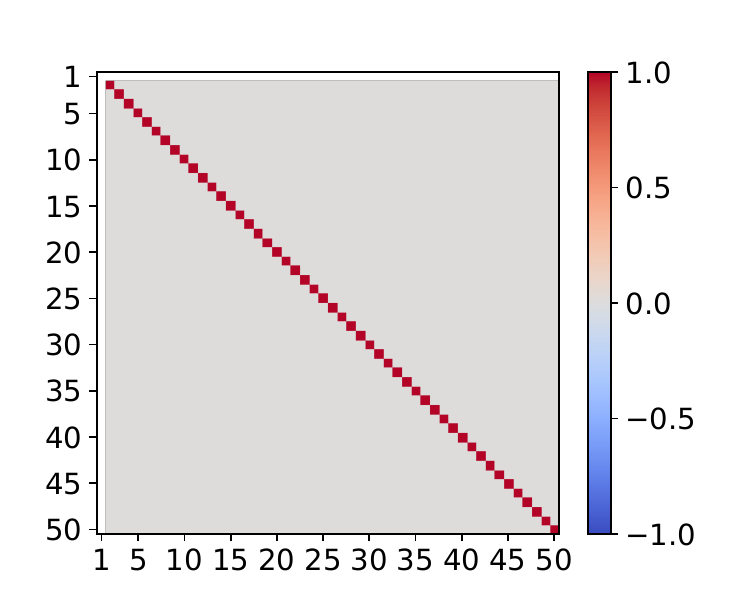}    
        \caption{Naive Marginalisation}
    \end{subfigure}
    \hfill
    \begin{subfigure}{0.45\linewidth}
        \includegraphics[width=\linewidth]{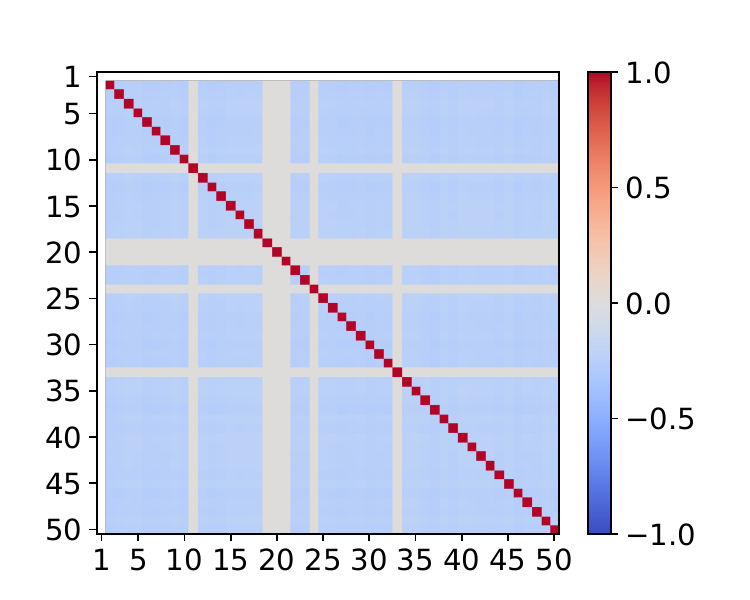}    
        \caption{True Marginalisation}
    \end{subfigure}
    \caption{Covariance, precision, and marginalisations of the precisions to remove dimension 1 by the naive approach (i.e. subsetting the precision) and the correct approach. All values cube-rooted for contrastive purposes. }
    \label{fig:star_marginalisation}
\end{figure}

In Figure \ref{fig:5star_marginalisation} we show the same thing for the case of a star graph with 5 centres. As we can see in the 5 centre case, the naive marginalisation picks up most of the structure as there are fewer negative terms which it ignores and also lots of additional positive terms which it does successfully pick up.

\begin{figure}[h]
    \centering
    \begin{subfigure}{0.45\linewidth}
        \includegraphics[width=\linewidth]{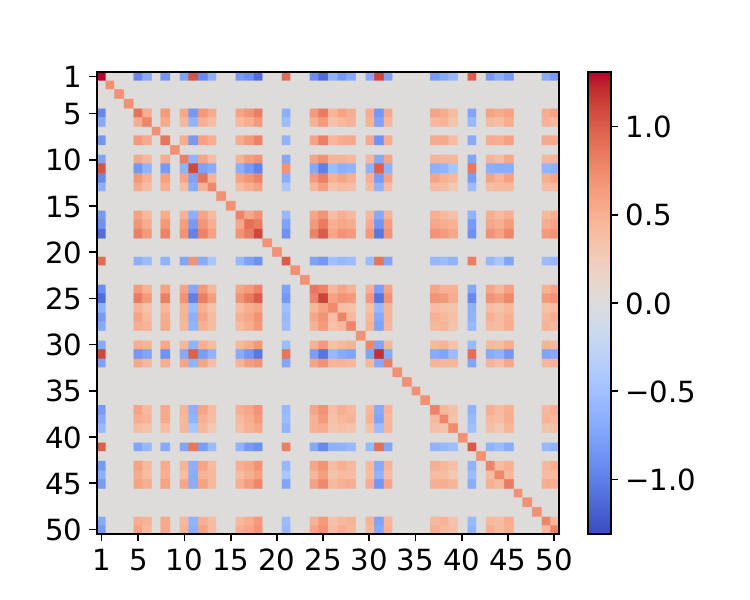}    
        \caption{Covariance}
    \end{subfigure}
    \hfill
    \begin{subfigure}{0.45\linewidth}
        \includegraphics[width=\linewidth]{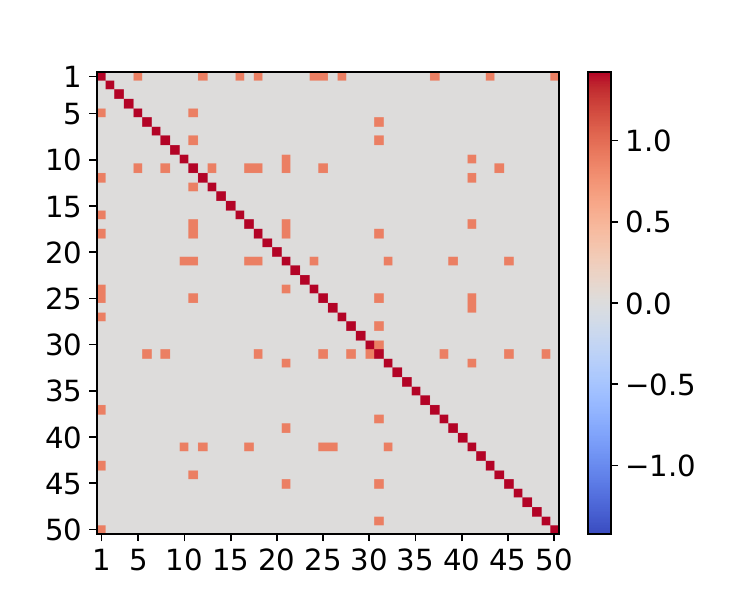}    
        \caption{Precision}
    \end{subfigure}
    \begin{subfigure}{0.45\linewidth}
        \includegraphics[width=\linewidth]{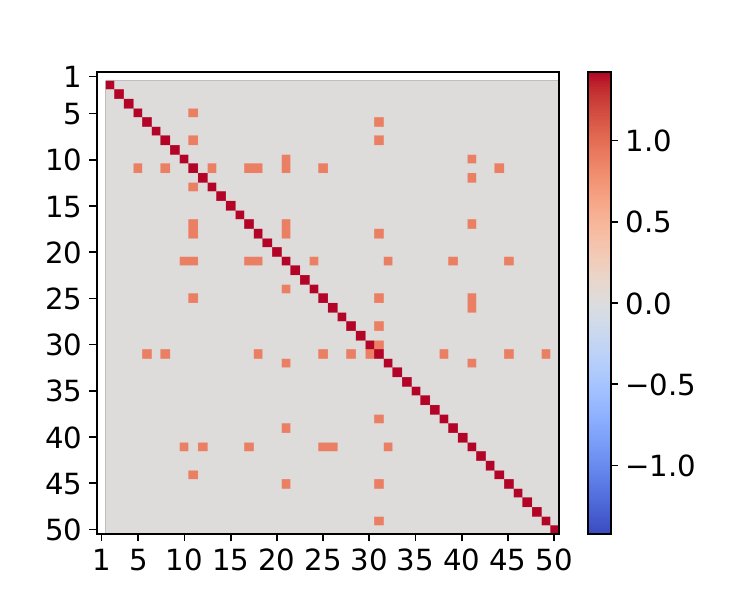}    
        \caption{Naive Marginalisation}
    \end{subfigure}
    \hfill
    \begin{subfigure}{0.45\linewidth}
        \includegraphics[width=\linewidth]{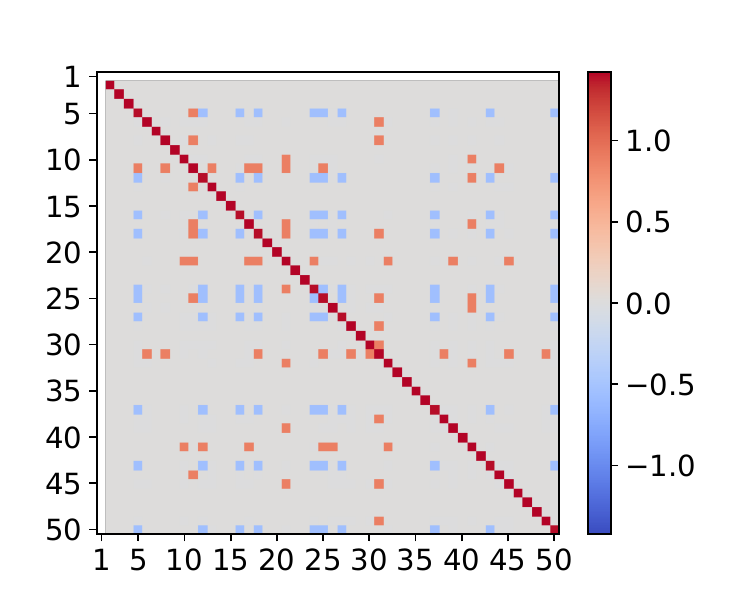}    
        \caption{True Marginalisation}
    \end{subfigure}
    \caption{Covariance, precision, and marginalisations of the precisions to remove dimension 1 by the naive approach (i.e. subsetting the precision) and the correct approach. All values cube-rooted for contrastive purposes. }
    \label{fig:5star_marginalisation}
\end{figure}

\subsubsection{Varying Number of Dimensions}\label{app:experiment_var_dim}
Here we use our same star-shaped GGM as in the main paper but with a varying number of dimensions. Throughout 1,000 samples are used and each coordinate is missing independently with probability 0.7. Results are presented in Figure \ref{fig:GGM_vary_dim}.

\begin{figure}[h]
    \centering
    \includegraphics[width=0.5\linewidth]{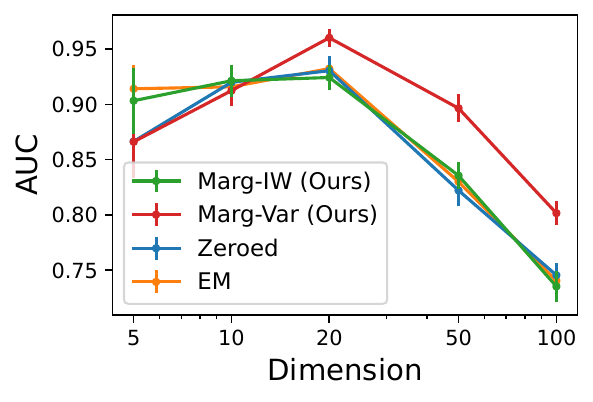}
    \caption{Mean AUC of various methods for edge detection of star-shaped GGM as we increase the dimension presented alongside 95\% confidence intervals.}
    \label{fig:GGM_vary_dim}
\end{figure} 

As we can see, for higher dimensions the variational approach clearly performs best however at lower dimensions the other approach catch up and even overtake it. This is because at lower dimensions, IW can effectively model the marginalisation of the score and so the more complicated variational approach is not required.

\subsubsection{Individual ROC Curves}\label{app:GGM_individual_roc_plots}
Here we present individual ROC curves from Section \ref{sec:GGM_star} with a missing probability of 0.5. Here we specifically present the ROC curves from the first 4 runs out of the 50 performed for the experiment. These ROC curves are displayed in Figure \ref{fig:GGM_starprec_ROC}. We observe that Marg-Var has consistently the best ROC curves of any of the methods.

\begin{figure}
    \centering
    \includegraphics[width=0.8\linewidth]{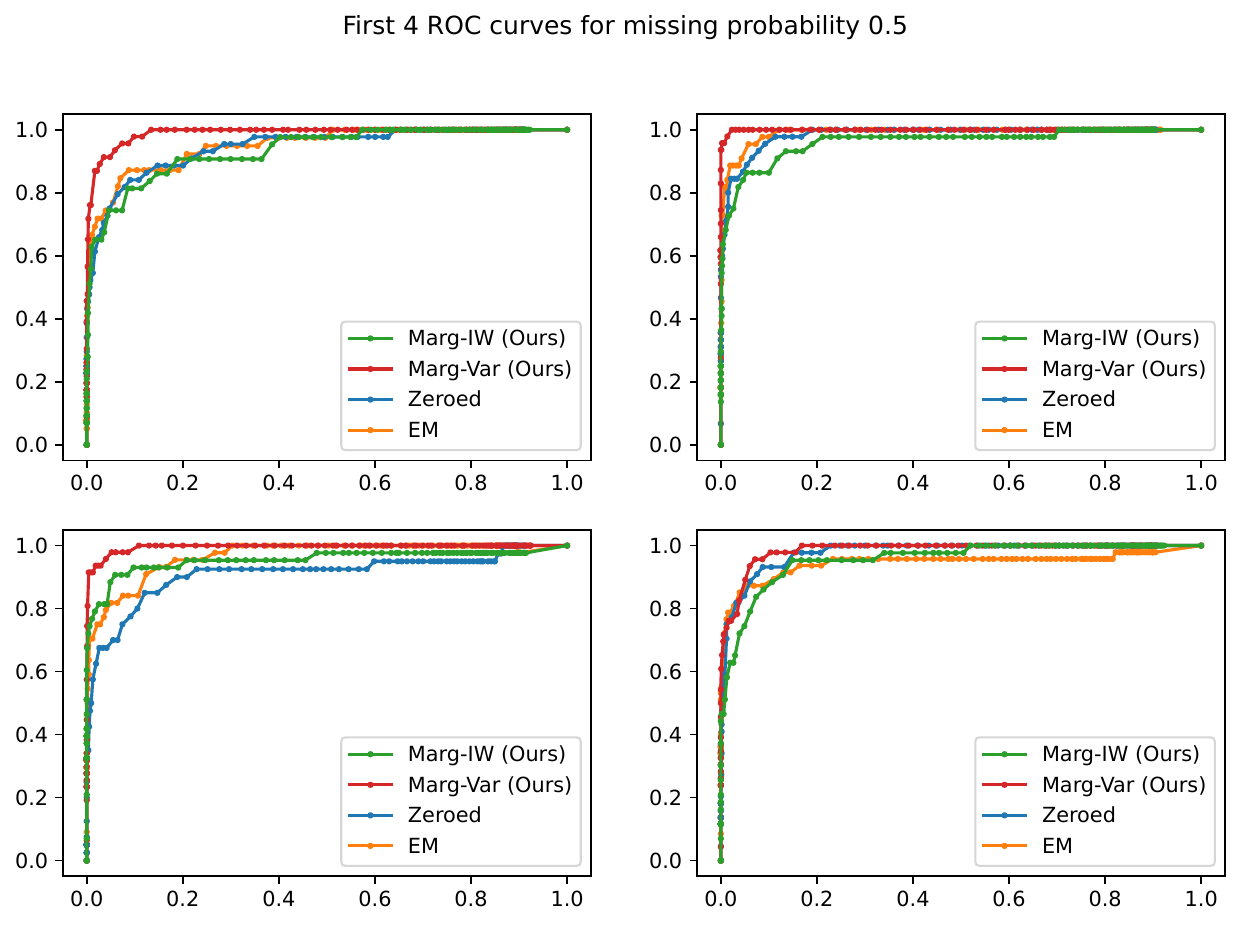}
    \caption{ROC Curves in 4 separate runs for GGM estimation of a model truncated normal distribution with a star-shaped Precision matrix.}
    \label{fig:GGM_starprec_ROC}
\end{figure}

\section{Additional Proofs}\label{sec:add_proof}

\subsection{Marginal Score Matching Objectives}\label{app:marg_score_proof}
\begin{proof}[proof of Proposition \ref{prop:marg_score_population}]
For the first claim we have that
\begin{align*}
    \E[\|\score_{\Obsind;\param}(X_{\Obsind})-\score_{\Obsind}(X_{\Obsind})\|^2]&=\E[\|\score_{\Obsind;\param}(X_{\Obsind})\|^2]+\E[\|\score_\Obsind(X_{\Obsind})\|^2]-2\E[\score_{\Obsind;\param}(X_{\Obsind})^\T s_\Obsind(X_{\Obsind})]\\
    &=C+\E[\|\score_{\Obsind;\param}(X)\|^2]-2\sum_{\obsind\in\supp(\Obsind)}\prob(\Obsind=\obsind)\int_{X_{\obsind}}p_\obsind(\bm x_{\obsind})\score_{\obsind;\param}(\bm x_{\obsind})^\T s_\obsind(\bm x_{\obsind})\diff \bm{x}_\obsind\\
    &=C+\E[\|\score_{\Obsind;\param}(X_{\Obsind})\|^2]-2\sum_{\obsind\in\supp(\Obsind)}\prob(\Obsind=\obsind)\int_{\cal X}\nabla_{\bm x_{\obsind}}p_\obsind(\bm x_{\obsind})^T\score_{\obsind;\param}(\bm x_{\obsind})\diff \bm x_{\obsind}\\
    &=C+\E[\|\score_{\Obsind;\param}(X_{\Obsind})\|^2]-2\sum_{\obsind\in\supp(\Obsind)}\prob(\Obsind=\obsind)\sum_{j=1}^\datadim\int_{\cal X_{\Obsind}}\nabla_{\bm x_{\obsind}}p_{\obsind}(\bm x_{\obsind})_j\score_{\obsind;\param}(\bm x_{\obsind})_j\diff \bm x_{\obsind}\\
    \intertext{Hence by integration by parts we have}
    \E[\|\score_{\Obsind;\param}(X_{\Obsind})-\score_{\Obsind}(X_{\Obsind})\|^2]&=C+\E[\|\score_{\Obsind;\param}(X_{\Obsind})\|^2]\\
    &\qquad-2\sum_{\obsind\in\supp(\Obsind)}\prob(\Obsind=\obsind) \sum_{j\in\obsind}\lrbrc{\lim_{\bm x_j\conv\partial{\cal X}} p_\obsind(\bm x_{\obsind})\score_{\obsind;\param}(\bm x_{\obsind})-\int_{\cal{X}_\obsind} p_\obsind(\bm x_{\obsind})_j\score_{\obsind;\param}(\bm x_{\obsind})_j\diff x_{\obsind}}\\
    &=\E[\|\score_{\Obsind;\param}(X_{\Obsind})\|^2]+2\E[\nabla_{X_\Obsind} \cdot\score_{\Obsind;\param}(X_{\Obsind}))]+C
\end{align*}
justifying our first claim. Hence if $\tilde\param$ minimised our objective then it minimises
\begin{align*}
    \E[\|\score_{\Obsind;\param}(X_{\Obsind})-\score_{\Obsind}(X_{\Obsind})\|^2].
\end{align*}
As there exists a "true" $\param$ we know that this objective is minimised at $0$ and so we must have
$\score_{\obsind;\tilde\param}(X_{\obsind})=\score_{\obsind}(X_{\obsind})$ a.s. for all $\obsind\in\supp(\Obsind)$. By our assumption this then gives
$p_{\tilde \param}(X)=p(X)$ a.s. .
\end{proof}

\subsubsection{Truncated Score Matching}\label{app:trunc_marg_score_proof}
\begin{proof}[Proof of Proposition \ref{prop:miss_trunc}]
    We mostly use \cite{Liu2022}. We firstly have that
    \begin{align*}
        &\qquad\E[g_\Obsind(X_{\Obsind})\|\score_{\Obsind;\param}(X_{\Obsind})-\score_\Obsind(X_{\Obsind})\|^2]\\
        &=
        \E[g_\Obsind(X_{\Obsind})\|\score_\Obsind(X_{\Obsind})\|^2]+\E[g_\Obsind(X_{\Obsind})\|\score_{\Obsind;\param}(X_{\Obsind})\|^2]-2\E[g_\Obsind(X_{\Obsind})\score_{\Obsind;\param}(X_{\Obsind})^\T \score_\Obsind(X_{\Obsind})]
    \end{align*}
    Now we have that
    \begin{align*}
        &\E[g_\Obsind(X_{\Obsind})\score_{\Obsind;\param}(X_{\Obsind})^\T \score_\Obsind(X_{\Obsind})]\\
        &=\sum_{\obsind\in\supp(\Obsind)}\prob(\Obsind=\obsind)\E[g_\Obsind(X_{\Obsind})\score_{\Obsind;\param}(X_{\Obsind})^\T \score_\Obsind(X_{\Obsind})|\Obsind=\obsind]\\
        &=\sum_{\obsind\in\supp(\Obsind)}\prob(\Obsind=\obsind)\sum_{j\in\obsind}\int_{\cal X}g_\obsind(\bm x_{\obsind})\score_{\param;\obsind}(\bm x_{\obsind})\frac{\partial}{\partial \bm x_j}p_\obsind(\bm x_{\obsind})\diff \bm x_{\obsind}\\
        &=\sum_{\obsind\in\supp(\Obsind)}\prob(\Obsind=\obsind)\sum_{j\in\obsind}\int_{\cal X}g_\obsind(\bm x_{\obsind})\score_{\param;\obsind}(\bm x_{\obsind})_j\frac{\partial}{\partial \bm x_j}p_\obsind(\bm x_{\obsind})\diff \bm x_{\obsind}\\
        &\overset{(a)}{=} \sum_{\obsind\in\supp(\Obsind)}\prob(\Obsind=\obsind)\sum_{j\in\obsind}\lrbrc{\int_{\partial X_{\obsind}}g_\obsind(\bm x_{\obsind})\score_{\param;\obsind}(\bm x_{\obsind})_j p_\obsind(\bm x_{\obsind})v_j(\bm x_{\obsind})\diff s~-~\int_{\cal X}\frac{\partial}{\partial \bm x_j}\lrbrs{g_\obsind(\bm x_{\obsind})\score_{\param;\obsind}(\bm x_{\Obsind})_j}p_\obsind(\bm x_{\obsind})\diff \bm x_{\obsind}}\\
        &\overset{(b)}{=}-\sum_{\obsind\in\powerset(d)}\prob(\Obsind=\obsind)\sum_{j\in\obsind}\int_{\cal X}\lrbrs{g_\obsind(\bm x_{\obsind})\frac{\partial}{\partial \bm x_j}\score_{\param;\obsind}(\bm x_{\Obsind})_j+\frac{\partial}{\partial \bm x_k}g_\obsind(\bm x_{\obsind})\score_{\obsind;\param}(\bm x_{\obsind})}p_\obsind(\bm x_{\Obsind})\diff \bm x_{\obsind}\\
        &=-\E[\nabla_{X_\Obsind} g_\Obsind(X_{\Obsind})^T\score_{\Obsind;\param}(X_{\Obsind})+g_\Obsind(X_{\Obsind})\score_{\Obsind;\param}(X_{\Obsind})]
    \end{align*}
    where $(a)$ is given by Green's Theorem and $(b)$ is given by our limiting condition. Plugging this back into our original result gives
    \begin{align*}
        \E[g_\Obsind(X_{\Obsind})\|\score_{\Obsind;\param}(X_{\Obsind})-\score_\Obsind(X_{\Obsind})\|^2]&=\E[g_\Obsind(X_{\Obsind})\lrbr{\|\score_{\Obsind;\param}(X_{\Obsind})\|^2+2\nabla_{X_\Obsind} \cdot\score_{\Obsind;\param}(X_{\Obsind})}+\nabla_{X_\Obsind} g_\Obsind(X_{\Obsind})^\T \score_{\Obsind;\param}(X_{\Obsind})]\\
        &=: L_{\truncmarg}(\param)
    \end{align*}
    From our conditions we also know that $\param^*$ is a minimiser of $J(\param)$ hence by our conditions it is the unique minimiser. Therefore $\param^*$ is the unique minimiser of $L_\truncmarg(\param)$
\end{proof}

\subsubsection{MNAR proofs}\label{app:MNAR_proofs}
\begin{proof}[Proof of Proposition \ref{prop:MNAR_score_equiv}]
    Firstly we have that
    \begin{align*}
        \E\lrbrs{\|\score_{\Obsind, E_\Obsind}(X_{\Obsind})-\score_{\Obsind,E_\Obsind;\param}(X_{\Obsind})\|^2}&=\E\lrbrs{\|\score_{\Obsind, E_\Obsind;\param}(X_{\Obsind})\|^2-2\score_{\Obsind, E_\Obsind}(X_{\Obsind})^\T \score_{\Obsind, E_\Obsind;\param}(X_{\Obsind})} + C
    \end{align*}
    where $C$ does not depend upon $\param$. Examining the second term closer we see that
    \begin{align*}
        &\sum_{\obsind\in\supp(\Obsind)}\int_{\cal X_{\obsind}}\score_{\obsind}(\bm x_{\obsind})^\T \score_{\obsind;\param}(\bm x_{\obsind})p_\obsind(\bm x_{\obsind};E_\obsind)\diff \bm x_{\obsind}\\
        =&\sum_{\obsind\in\supp(\Obsind)}\sum_{j\in\obsind}\int_{\cal X_{\obsind}}\score_{\obsind}(\bm x_{\obsind})_j \score_{\obsind, \param}(\bm x_{\obsind})_jp_\obsind(\bm x_{\obsind};E_\obsind)\diff \bm x_{\obsind}\\
        =&\sum_{\obsind\in\supp(\Obsind)}\sum_{j\in\obsind}\int_{\cal X_{\obsind}}\nabla_{\bm x_\obsind} p_\obsind(\bm x_{\obsind};E_\obsind)_j\score_{\obsind;\param}(\bm x_{\obsind})_j\diff \bm x_{\obsind}\\
        =&\sum_{\obsind\in\supp(\Obsind)}\sum_{j\in\obsind}-\int_{\cal X_{\obsind}}p_\obsind(\bm x_{\obsind};E_\obsind)\frac{\partial}{\partial_j}\score_{\obsind;\param}(\bm x_{\obsind})_j\diff \bm x_{\obsind}\quad\text{as a result of integration by parts}\\
        &=\E_{X,\Obsind}\lrbrs{\nabla_{X_\Obsind}\cdot\score_{\Obsind;\param}(X_{\Obsind})}
    \end{align*}
    Hence we have
    \begin{align*}
         \E\lrbrs{\|\score_{\Obsind}(X_{\Obsind})-\score_{\Obsind,E_\Obsind;\param}(X_{\Obsind})\|^2}&=\E\lrbrs{\|\score_{\Obsind;\param}(X_{\Obsind})\|^2+2\nabla_{X_\Obsind}\cdot\score_{\Obsind;\param}(X_{\Obsind})} + C
    \end{align*}
    Hence, just as in Proposition \ref{prop:marg_score_population} this is minimised when
\begin{align*}
    \score_{\obsind}(X_{\obsind})=\score_{\obsind;\param}(X_{\obsind})
\end{align*}
a.s. for all $\obsind\in\supp(\Obsind)$. We then have that for any $\obsind\in\supp(\Obsind)$
\begin{align*}
    \score_{\obsind}(\bm x_{\obsind})&=\score_{\obsind;\param}(\bm x_{\obsind}) \quad{\text{for all $x_{\obsind}\in\cal X_{\obsind}$}}\\
    p_{\obsind;E_\obsind}(\bm x_{\obsind})&=p_{E_\obsind;\param}(\bm x_{\obsind})\quad{\text{for all $x_{\obsind}\in\cal X_{\obsind}$}}\\
    \Leftrightarrow \prob(E_\Obsind|x_{\Obsind})p_{\obsind}(\bm x_{\obsind})&=p_{\obsind;\param}(\bm x_{\obsind})\prob(E_\Obsind|x_{\Obsind})\quad{\text{for all $x_{\obsind}\in\cal X_{\obsind}$}}\\
    \Leftrightarrow p_{\obsind}(\bm x_{\obsind})&=p_{\obsind;\param}(\bm x_{\obsind})\quad{\text{for all $x_{\obsind}\in\cal X_{\obsind}$}}\\
\end{align*}
\end{proof}

\subsection{Finite Sample Bound Proofs}\label{app:finite_bound_proofs}
Here we give some key results alongside their proof to allow us to obtains finite sample bounds for truncated score matching.
This first result is what really underpins our approach.
\begin{prop}\label{prop:ULLN}
Let $(\parameterset,d)$ be a compact metric space and for any $\delta>0$ denote the $\eta$-covering number of $\parameterset$ by $N(\eta,\parameterset)$. Now define random functions $Z(\param),~Z_{\nimputesamples}(\param)\parameterset\rightarrow \R$ for all $\nimputesamples\in\N$ (with $\nimputesamples$ deterministic) such that for any $\param\in\parameterset$
\begin{align*}
    \prob\lrbr{|Z_{\nimputesamples}(\param)-Z(\param)|>\eps}\leq \delta(\eps,\nimputesamples)
\end{align*}
If we additionally assume that $Z_{\nimputesamples},Z$ Lipschitz with constants $C_{\nimputesamples},C$ respectively then we have
then 
\begin{align*}
    \prob\lrbr{\sup_{\param\in\parameterset}\left|Z_\nimputesamples(\param)-Z(\param)\right|>\eps+\eta(C_{\nimputesamples}+C)}\leq N(\eta,\parameterset)\delta(\eps,\nimputesamples)
\end{align*}

\end{prop}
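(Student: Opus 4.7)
The plan is a standard covering net plus union bound argument. First I would fix an $\eta$-net of $\parameterset$, that is, a collection $\{\param_1,\dotsc,\param_N\}\subseteq\parameterset$ with $N=N(\eta,\parameterset)$ such that for every $\param\in\parameterset$ there exists an index $i(\param)$ with $d(\param,\param_{i(\param)})\leq\eta$. Existence of such a set is immediate from the definition of the covering number and compactness of $\parameterset$.

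Next I would apply the pointwise concentration hypothesis at each of the $N$ net points individually and combine them by a union bound. This yields
\begin{align*}
\prob\lrbr{\max_{i\in[N]}|Z_\nimputesamples(\param_i)-Z(\param_i)|>\eps}\leq N(\eta,\parameterset)\,\delta(\eps,\nimputesamples).
\end{align*}
Call the complementary event $\mathcal{G}$; on $\mathcal{G}$ every net point satisfies $|Z_\nimputesamples(\param_i)-Z(\param_i)|\leq\eps$ simultaneously.

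Working on $\mathcal{G}$, I would discretise an arbitrary $\param\in\parameterset$ by picking its nearest net point $\param_{i(\param)}$ and then pay a Lipschitz cost for moving off the net. Specifically, the triangle inequality and the two Lipschitz hypotheses give
\begin{align*}
|Z_\nimputesamples(\param)-Z(\param)|
&\leq |Z_\nimputesamples(\param)-Z_\nimputesamples(\param_{i(\param)})|+|Z_\nimputesamples(\param_{i(\param)})-Z(\param_{i(\param)})|+|Z(\param_{i(\param)})-Z(\param)|\\
&\leq C_\nimputesamples\,\eta+\eps+C\,\eta.
\end{align*}
Since the right hand side is independent of $\param$, taking a supremum over $\param\in\parameterset$ shows that on $\mathcal{G}$ we have $\sup_{\param}|Z_\nimputesamples(\param)-Z(\param)|\leq\eps+\eta(C_\nimputesamples+C)$, and the claimed probability bound follows by passing to the complement.

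There is no real obstacle here; it is the textbook chaining/net argument. The only mild subtlety is that $C_\nimputesamples$ may in principle depend on the randomness, but the statement treats it as a deterministic Lipschitz constant, so the bound passes through unchanged. In the downstream application to $\iwestscoreloss_{\truncmarg;n,r}-\scoreloss_{\truncmarg}$ the real work is establishing (i) the pointwise concentration rate $\delta(\eps,\nimputesamples)$ from Assumption \ref{assump:bdd_f} together with Hoeffding/Bernstein inequalities, and (ii) a uniform Lipschitz control in $\param$ from Assumption \ref{assump:pw_lipschitz}; this proposition is then just the glue that turns those pointwise statements into a uniform one.
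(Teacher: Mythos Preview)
Your proposal is correct and matches the paper's proof essentially line for line: fix an $\eta$-net, use the triangle inequality through the nearest net point together with the two Lipschitz constants to reduce the supremum to a maximum over net points, then apply the pointwise tail bound and a union bound. The only minor point is that the paper explicitly allows $C_{\nimputesamples},C$ to be random (and exploits this downstream), which your argument in fact handles as well since the Lipschitz constants enter only the threshold, not the probability bound.
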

\begin{proof}
Let $\param_1,\dotsc,\param_{N(\eta,\parameterset)}$ be an $\eta$ cover of $\parameterset$ then we have that
\begin{align*}
    &\quad~\sup_{\param\in\parameterset}|Z_{\nimputesamples}(\param)-Z(\param)|\\
    &\leq\sup_{\param\in\parameterset}\lrbrc{\min_{j\in[N(\eta,\parameterset)]}\lrbrc{\left|Z_{\nimputesamples}(\param)-Z_{\nimputesamples}(\param_{\otherind}))\right|+|(Z(\param)-Z(\param_{\otherind})|+\left|Z_{\nimputesamples}(\param_{\otherind})-Z(\param_{\otherind})\right|}}\\
    &\leq\sup_{\param\in\parameterset}\Bigg\{\min_{j\in[N(\eta,\parameterset)]}\lrbrc{\left|Z_{\nimputesamples}(\param)-Z_{\nimputesamples}(\param_{\otherind})\right|+\left|Z(\param)-Z(\param_{\otherind})\right|}\lrbrc{\max_{j\in[N(\eta,\parameterset)]}\left|Z_{\nimputesamples}(\param_{\otherind})-Z(\param\right|}\Bigg\}\\
    &\leq \sup_{\param\in\parameterset} \lrbrc{\min_{j\in[N(\eta,\parameterset)]}\lrbrc{(C_{\nimputesamples}+C)|\param-\param_{\otherind}|}}+\max_{j\in[N(\eta,\parameterset)]}\lrbrc{\left|Z_{\nimputesamples}(\param_{\otherind})-Z(\param_{\otherind})\right|}\quad\text{by our Lipschitz condition}\\
    &\leq (C_{\nimputesamples}+C)\eta+\max_{j\in[N(\eta,\parameterset)]}\lrbrc{\left|Z_{\nimputesamples}(\param_{\otherind})-Z(\param_{\otherind})\right|}\quad\text{by definition of $\param_1,\dotsc,\param_{N(\eta,\parameterset)}$}
\end{align*}
Therefore we have the following relationship between events:
\begin{align*}
    \lrbrc{\sup_{\param\in\parameterset}\left|Z_{\nimputesamples}(\param)-Z(\param)\right|>\eps+2M\eta}
    \subseteq\lrbrc{\bigcup_{j\in[N(\eta,\parameterset)]}\left|Z_{\nimputesamples}(\param_{\otherind})-Z(\param_{\otherind})\right|>\eps}.
\end{align*}
This therefore gives
\begin{align*}
    &\prob\lrbr{\sup_{\param\in\parameterset}\left|\frac{1}{n}f(X^{(i)},\param)-\E[f(X,\param)]\right|>\eps+\eta(C_{\nimputesamples}+C)}\\
    &\leq\prob\lrbr{\bigcup_{j\in[N(\eta,\parameterset)]}|Z_{\nimputesamples}(\param_{\otherind})-Z(\param_{\otherind})|>\eps}\\
    &\leq\sum_{j=1}^{N(\eta,\parameterset)}\prob\lrbr{|Z_{\nimputesamples}(\param_{\otherind})-Z(\param_{\otherind})|>\eps}\\
    &\leq N(\eta,\param)\delta(\eps,n).
\end{align*}
\end{proof}
\begin{remark}
    This result does not require $C_{\nimputesamples},C$ too be deterministic. A feature that we will be exploiting later on.
\end{remark}

To be able to say meaningful statements about our Lipschitz bounds for our proof, it will also be helpful to make subgaussian statements about nested sums. We give to results to enable this now
\begin{lemma}\label{lemma:subgauss_expectation}
    Let $X,Y$ be independent RVs and define a function $g:\cal X\times\cal Y\rightarrow\R$. Now suppose that for any $\bm x\in\cal X$, $g(\bm x,Y)$ is sub-Gaussian with parameter $\sigma$. We then have that $\E[g(X,Y)|Y]$ is sub-Gaussian with parameter $\sigma$
\end{lemma}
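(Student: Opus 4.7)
The plan is to use the moment generating function (MGF) characterisation of sub-Gaussianity directly: a random variable $Z$ is sub-Gaussian with parameter $\sigma$ iff $\E[\exp(\lambda(Z-\E Z))] \leq \exp(\lambda^2\sigma^2/2)$ for every $\lambda\in\R$. Writing $h(Y) \coloneqq \E[g(X,Y)\mid Y]$, which equals $\E_X[g(X,Y)]$ by independence of $X$ and $Y$, the target is to bound the MGF of $h(Y) - \E h(Y)$ by $\exp(\lambda^2\sigma^2/2)$.

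First I would set $\mu(\bm x) \coloneqq \E_Y[g(\bm x, Y)]$ and observe, again using independence, that $\E h(Y) = \E[g(X,Y)] = \E_X[\mu(X)]$, so that
\begin{align*}
h(Y) - \E h(Y) = \E_X\bigl[g(X,Y) - \mu(X)\bigr].
\end{align*}
The key step is then to apply Jensen's inequality to $\exp(\lambda\,\cdot\,)$ to push the exponential inside the inner $\E_X$, and then swap the order of $\E_X$ and $\E_Y$ by Fubini--Tonelli (justified by non-negativity of the exponential), yielding
\begin{align*}
\E_Y\!\left[\exp\bigl(\lambda(h(Y)-\E h(Y))\bigr)\right]
\leq \E_X\E_Y\!\left[\exp\bigl(\lambda(g(X,Y)-\mu(X))\bigr)\right].
\end{align*}
After the swap, the inner $Y$-expectation is exactly the MGF of the centred random variable $g(\bm x, Y) - \mu(\bm x)$ evaluated at $\lambda$, and by the sub-Gaussian hypothesis it is bounded by $\exp(\lambda^2\sigma^2/2)$ uniformly in $\bm x$. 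Since that bound is deterministic, pulling it out through the outer $\E_X$ closes the argument.

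There is no real obstacle; the proof is essentially one application of Jensen plus one of Fubini, combined with the pointwise sub-Gaussian hypothesis. The only thing to handle carefully is the centring: the hypothesis gives sub-Gaussianity of $g(\bm x, Y)$ centred at its own conditional mean $\mu(\bm x)$, whereas the conclusion concerns $h(Y)$ centred at $\E h(Y) = \E_X[\mu(X)]$. Independence of $X$ and $Y$ is precisely what reconciles these two centrings and validates the Fubini exchange, so the reduction $h(Y)-\E h(Y) = \E_X[g(X,Y)-\mu(X)]$ is where all the actual content of the lemma lives.
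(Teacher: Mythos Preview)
Your proposal is correct and follows essentially the same route as the paper: rewrite the centred conditional expectation as $\E_X[g(X,Y)-\mu(X)]$, push the exponential inside via Jensen, exchange the order of integration (the paper phrases this as switching from conditioning on $Y$ to conditioning on $X$ via the tower property, you phrase it as Fubini), and then apply the pointwise sub-Gaussian bound uniformly in $\bm x$. The only cosmetic difference is that the paper bounds by $\sup_{\bm x}$ before invoking the hypothesis, whereas you pull the deterministic bound directly through the outer expectation; both are equivalent here.
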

\begin{proof}
    As $g(\bm x,Y)$ sub-Gaussian we have that for any $\obsind>0$
    \begin{align*}
        \E\lrbrs{\exp\lrbrc{\obsind\lrbr{g(\bm x,Y)-\E[g(\bm x,Y)]}}}\leq\exp\lrbrc{\frac{\obsind^2}{\sigma^2}}.
    \end{align*}
    Our aim is to then use this to bound
    \begin{align*}
        \E\lrbrs{\exp\lrbrc{\obsind\lrbr{\E[g(X,Y)|Y]-\E[g(X,Y)]}}}.
    \end{align*}
    We first have that $\E[g(X,Y)]=\E[\E[g(X,Y)|X]]=\E[\E[g(X,Y)|X]|Y]$ which in turn gives
    \begin{align*}
        \E\lrbrs{\exp\lrbrc{\obsind\lrbr{\E[g(X,Y)|Y]-\E[g(X,Y)]}}}&=\E\lrbrs{\exp\lrbrc{\obsind\lrbr{\E[g(X,Y)-\E[g(X,Y)|X]|Y]}}}\\
        &\leq\E\lrbrs{\E\lrbrs{\exp\lrbrc{\obsind\lrbr{g(X,Y)-\E[g(X,Y)|X]}}|Y}}\quad\text{by Jensen's inequality}\\
        &=\E\lrbrs{\E\lrbrs{\exp\lrbrc{\obsind\lrbr{g(X,Y)-\E[g(X,Y)|X]}}|X}}\\
        &\leq\sup_{\bm x\in\cal X}E\lrbrs{\exp\lrbrc{\obsind\lrbr{g(\bm x,Y)-\E[g(\bm x,Y)]}}}\quad\text{by independence}\\
        &\leq\exp\lrbrc{\frac{\obsind^2}{\sigma^2}}.
    \end{align*}
\end{proof}

\begin{lemma}\label{lemma:nested_sums_subgaus}
    Let $X,Y$ be independent RVs on spaces $\cal X, \cal Y$ and $g:\mathcal X\times\mathcal Y\rightarrow\R$ s.t. for any $\bm x\in\cal X,y\in\cal Y$ $g(\bm x,Y)$ and $g(X,y)$ are sub-Gaussian with parameters $\sigma_Y,\sigma_X$ respectively. Let $\lrbrc{X^{(i)}}_{i=1}^n$ and $\lrbrc{Y^{(\imputeind)}}_{\imputeind=1}^\nimputesamples$ be IID copies of $X,Y$ respectively then
\begin{align*}
        \prob\lrbr{\left|\frac{1}{n \nimputesamples}\sum_{i=1}^n\sum_{\imputeind=1}^\nimputesamples g(X^{(i)},Y^{(\imputeind)})-\E[g(X,Y)]\right|>\eps}\leq n\exp\lrbrc{-\frac{\eps^2\sigma_Y^2m}{4}}+\exp\lrbrc{-\frac{\eps^2\sigma_X^2n}{4}}
    \end{align*}
\end{lemma}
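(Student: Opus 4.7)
The plan is to split the empirical double-sum into a piece whose randomness lives only in the $Y^{(\imputeind)}$'s and a piece whose randomness lives only in the $X^{(i)}$'s, then apply a sub-Gaussian Chernoff bound to each. Setting $h(\bm x) \coloneqq \E[g(\bm x, Y)]$, I would decompose
\begin{align*}
\frac{1}{n\nimputesamples}\sum_{i=1}^n\sum_{\imputeind=1}^\nimputesamples g(X^{(i)}, Y^{(\imputeind)}) - \E[g(X,Y)] = A + B,
\end{align*}
where $A \coloneqq \frac{1}{n}\sum_{i=1}^n A_i$ with $A_i \coloneqq \frac{1}{\nimputesamples}\sum_{\imputeind=1}^\nimputesamples \bigl[g(X^{(i)}, Y^{(\imputeind)}) - h(X^{(i)})\bigr]$ and $B \coloneqq \frac{1}{n}\sum_{i=1}^n h(X^{(i)}) - \E[h(X)]$. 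The usual triangle-inequality union bound on the event $\{|A+B|>\eps\}$ then reduces the problem to tail bounds on $|A|$ and $|B|$ separately, the two terms on the right-hand side of the claim matching these two pieces.

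For $B$: applying Lemma \ref{lemma:subgauss_expectation} with the roles of $X$ and $Y$ interchanged (which is legitimate since the lemma is symmetric in the two variables), the assumption that $g(X,y)$ is sub-Gaussian with parameter $\sigma_X$ for every fixed $y$ implies that $h(X) = \E[g(X,Y)\mid X]$ is sub-Gaussian with parameter $\sigma_X$. As the $h(X^{(i)})$ are then IID centered sub-Gaussians, a standard Chernoff argument for their average delivers $\prob(|B|>\eps) \leq \exp\bigl(-\eps^2\sigma_X^2 n/4\bigr)$, which is exactly the second term of the stated bound.

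For $A$: the main obstacle is that the $A_i$'s are \emph{not} independent across $i$, because they share the same $Y^{(\imputeind)}$ samples, so the naive approach of treating $\frac{1}{n}\sum_i A_i$ as an average of IID variables fails. I would sidestep this by the crude inequality $|A|\leq\max_i |A_i|$ and a union bound, giving $\prob(|A|>\eps)\leq \sum_{i=1}^n \prob(|A_i|>\eps)$. For each fixed $i$, conditional on $X^{(i)}$ the summands $g(X^{(i)},Y^{(\imputeind)})-h(X^{(i)})$ are IID, mean zero, and sub-Gaussian with parameter $\sigma_Y$ by hypothesis; a conditional Chernoff bound on their average therefore yields $\prob(|A_i|>\eps\mid X^{(i)}) \leq \exp(-\eps^2\sigma_Y^2 \nimputesamples/4)$, which integrates trivially to the same unconditional bound. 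Summing $n$ identical bounds contributes the factor $n$ in front of the first exponential, and combining the two tail estimates completes the argument.

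The main difficulty is precisely the non-independence of the $A_i$'s across $i$: the union-over-$i$ is essentially the only elementary fix, and it is what forces the multiplicative $n$ in the tail bound rather than the $\sqrt{n}$-type decay one would obtain were the $A_i$'s independent. The only mild bookkeeping concern is absorbing factors of $2$ coming from two-sided tails and the threshold split $\eps/2$ into the stated constants, which I view as routine.
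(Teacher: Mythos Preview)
Your proposal is correct and follows essentially the same approach as the paper: the same decomposition into the conditional-mean piece $B$ (handled via Lemma~\ref{lemma:subgauss_expectation} with the roles of $X$ and $Y$ swapped) and the fluctuation piece $A$ (handled by a union bound over $i$, conditioning on $X^{(i)}$, and a sub-Gaussian tail bound in the $Y$-variables). The paper writes the $A$-bound as $\prob\bigl(\frac{1}{n}\sum_i |A_i|>\eps\bigr)\le\prob\bigl(\bigcup_i\{|A_i|>\eps\}\bigr)$ rather than via $|A|\le\max_i|A_i|$, but this is the same argument, and your remark about constant bookkeeping is apt since the paper's own constants drift between the statement and the proof.
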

\begin{proof}
Again let $W^{(i)}\coloneqq|\frac{1}{\nimputesamples}\sum_{i=1}^\nimputesamples g(X^{(i)},Y^{(\imputeind)})-\E[g(X^{(i)},Y)|X^{(i)}]|$, We aim to bound $W^{(i)}$ as well as $\left|\frac{1}{n}\sum_{i=1}^n\E[g(X^{(i)},Y)|X^{(i)}]-\E[g(X,Y)]\right|$

For $W^{(i)}$ we have that
\begin{align*}
    \prob(W^{(i)}>\eps)&=\E[\prob(W^{(i)}>\eps|X^{(i)})]\\
    &=\E\lrbrs{\prob\lrbr{\left|\frac{1}{\nimputesamples}\sum_{i=1}^\nimputesamples g(X^{(i)},Y^{(\imputeind)})-\E[g(X^{(i)},Y)|X^{(i)}]\right|<\eps\Bigg| X^{(i)}}}\\
    &=\E\lrbrs{\exp\lrbrc{-\frac{\eps^2\sigma_0 \nimputesamples}{2}}}=\exp\lrbrc{-\frac{\eps^2\sigma_0 \nimputesamples}{2}}
\end{align*}
Therefore we have that 
\begin{align*}
    \prob\lrbr{\frac{1}{n}\sum_{i=1}^nW^{(i)}>\eps}&\leq\prob\lrbr{\bigcup_{i=1}^n \lrbrc{W^{(i)}>\eps}}\\
    &\leq\sum_{i=1}^n\prob(W^{(i)}>\eps)\\
    &\leq n\exp\lrbrc{-\frac{\eps^2\sigma_Y \nimputesamples}{2}}.
\end{align*}

Additionally from Lemma \ref{lemma:subgauss_expectation} we have that $\E[g(X,Y)|X]$ is sub-Gaussian with parameter $\sigma_1$ giving us that
\begin{align*}
    \prob\lrbr{\left|\frac{1}{n}\sum_{i=1}^n\E[g(X^{(i)},Y|X^{(i)}]-\E[g(X,Y)]\right|>\eps}\leq\exp\lrbrc{-\frac{\eps^2\sigma_X^2 n}{2}}
\end{align*}
Hence combining these we get 
\begin{align*}
        \prob\lrbr{\left|\frac{1}{n \nimputesamples}\sum_{i=1}^n\sum_{\imputeind=1}^\nimputesamples g(X^{(i)},Y^{(\imputeind)})-\E[g(X,Y)]\right|>\eps}\leq n\exp\lrbrc{-\frac{\eps^2\sigma_Y^2 \nimputesamples}{8}}+\exp\lrbrc{-\frac{\eps^2\sigma_X^2 n}{8}}
    \end{align*}
\end{proof}

To proceed we define the intermediary step between the population objective and the sample objective this is 
\begin{align*}
    L_{\truncmarg;n}(\param)\coloneqq\frac{1}{n}\sum_{i=1}^{n}\nabla_{X^{(i)}_{\Obsind_i}}\cdot\score_{\Obsind_i;\param}(X^{(i)}) +\half\|\score_{\Obsind_i;\param}(X^{(i)})\|^2
\end{align*}

\begin{prop}\label{prop:trunc-miss_subloss_finite}
    For $\nimputesamples\in\N$ let $\{X^{'(\imputeind)}\}_{\imputeind=1}^\nimputesamples$ be IID copies of $X'\sim  p'$ and assume that $\supp(X)\subseteq\supp(X')$. For $\param\in\parameterset$ and $\udensity_\param:\cal X\rightarrow[0,\infty)$, with $\|\udensity_\param\|_1<\infty$ define RVs $Y_\param,Y_{\param,\nimputesamples}$ by
    \begin{align*}
        Z(\param)&\coloneqq L_{\truncmarg;1}=g_\Obsind(X_{\Obsind})\lrbr{\nabla_{X_\Obsind} \cdot \score_{\Obsind,\param}(X)+\half\|\score_{\Obsind,\param}(X)\|^2}+\nabla_{X_\Obsind} g_\Obsind(X_{\Obsind})^\T \score_{\Obsind;\Theta}(X_{\Obsind})\\
        Z_{\nimputesamples}(\param)&\coloneqq L_{\truncmarg;1,\nimputesamples}(\param)= g_\Obsind\lrbr{\tr\lrbr{\nabla_{X_\Obsind} \hat \score_{\Obsind,\nimputesamples;\param}(X)}+\half\|\hat \score_{\Obsind,\nimputesamples;\param}(X_{\Obsind})\|^2}+\half\nabla_{X_\Obsind} g_{\Obsind}(X_{\Obsind})^\T \score_{\Obsind;\param}(X_{\Obsind})\\
    \end{align*}
    Suppose that the following hold for all $\bm x\in\cal X,~\obsind\in\supp(\Obsind)$
    \begin{itemize}
        \item $0<a_0<f_{0,\obsind}(\bm x,\param)<b_0$
        \item $\|f_{1,\obsind}(\bm x,\param)\|<b_1$
        \item $|\bm f_{2,\obsind}(\bm x,\param)|<b_2$
        \item $g_\obsind(\bm x_{\obsind})<c_0$
        \item $\|\nabla_{\bm x_{\obsind}}g_\obsind(\bm x_{\obsind})\|<c_1$
    \end{itemize}
    Then we have that
    \begin{align*}
    \prob(|Z_{\nimputesamples}(\param)-Z(\param)|>\eps)&\leq (4+2d)\exp\lrbrc{-\frac{\eps^2m}{\alpha^2}}   
    \end{align*}
    with $\alpha$ depending upon $a_0,b_0,b_1,b_2,c_0,c_1$.
\end{prop}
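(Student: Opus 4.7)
The plan is to recast $Z(\param)$ and $Z_\nimputesamples(\param)$ as a common smooth function of a short list of Monte Carlo averages and then combine Hoeffding's inequality for each average with a Lipschitz bound for the combining map. Throughout, I would condition on the single data point $(X,\Obsind)=(\bm x,\obsind)$; the randomness is then only in the importance samples $\{X'^{(i)}_{\missind}\}_{i=1}^{\nimputesamples}$, and since the resulting conditional bound is uniform in $(\bm x,\obsind)$, it extends directly to the unconditional probability.

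First, I would introduce the averages
\[ \hat\mu_k \coloneqq \inv{\nimputesamples}\sum_{i=1}^{\nimputesamples} f_{k,\obsind}(\bm x_\obsind, X'^{(i)}_{\missind},\param), \qquad \mu_k \coloneqq \E[f_{k,\obsind}(\bm x_\obsind, X'_{\missind},\param)] \]
for $k\in\{0,1,2\}$, reading $f_{1,\obsind}$ as the $d$-vector $\nabla_{\bm x}\udensity_\param/\iwdensity$ and $f_{0,\obsind},f_{2,\obsind}$ as scalars. Direct differentiation of $\log\mu_0$ and $\log\hat\mu_0$ gives
\[ \score_{\obsind;\param}(\bm x_\obsind)_j = \frac{\mu_{1,j}}{\mu_0}, \qquad \sum_{j\in\obsind}\partial_j \score_{\obsind;\param}(\bm x_\obsind)_j = \frac{\mu_2}{\mu_0} - \sum_{j\in\obsind}\frac{\mu_{1,j}^2}{\mu_0^2}, \]
together with the analogous identities for $\hat\score_{\obsind,\nimputesamples;\param}$ obtained by placing hats on every $\mu$. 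Substituting these back shows that both $Z$ and $Z_\nimputesamples$ have the common form $\Phi(g_\obsind(\bm x_\obsind),\nabla g_\obsind(\bm x_\obsind);\cdot,\cdot,\cdot)$ for a single rational function $\Phi$, evaluated at $(\mu_0,\mu_1,\mu_2)$ and $(\hat\mu_0,\hat\mu_1,\hat\mu_2)$ respectively. The uniform lower bound $f_{0,\obsind}\geq a_0$ passes to both $\mu_0$ and $\hat\mu_0$, and the upper bounds $b_0,b_1,b_2,c_0,c_1$ confine every argument to a compact box on which $\Phi$ is jointly Lipschitz with some constant $L=L(a_0,b_0,b_1,b_2,c_0,c_1)$; hence
\[ |Z_\nimputesamples(\param)-Z(\param)| \leq L\lrbr{|\hat\mu_0-\mu_0| + \|\hat\mu_1-\mu_1\|_\infty + |\hat\mu_2-\mu_2|}. \]

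Each of the $1+d+1 = d+2$ scalar components of $\hat\mu_k-\mu_k$ is an average of $\nimputesamples$ i.i.d.\ random variables whose range is controlled by $b_k$, so Hoeffding's inequality yields, componentwise, $\prob(|\hat\mu_{k,j}-\mu_{k,j}|>\delta)\leq 2\exp(-c_k\delta^2\nimputesamples)$. Choosing $\delta$ proportional to $\eps/L$ and taking a union bound over the $d+2$ components produces a prefactor of $2(d+2)=4+2d$ together with an exponent of the form $-\eps^2\nimputesamples/\alpha^2$ for a single constant $\alpha$ built from $L$ and the Hoeffding constants, which is exactly the advertised bound. The main obstacle is the Lipschitz step: $\Phi$ contains both ratios and squared ratios, so its Lipschitz constant is a nontrivial rational expression in $a_0$ and the other bounds that must be tracked carefully to produce a single clean $\alpha$. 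The entire argument rests on the strict positivity $f_{0,\obsind}\geq a_0>0$, without which $1/\hat\mu_0$ would be unbounded and neither $\hat\score-\score$ nor $\|\hat\score\|^2-\|\score\|^2$ could be controlled uniformly---this is exactly why the excerpt flags the analogous untruncated result as out of reach.
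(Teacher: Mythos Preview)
Your proposal is correct and follows essentially the same route as the paper: write $Z$ and $Z_\nimputesamples$ as the same rational function of the three Monte Carlo moments $\mu_0,\mu_1,\mu_2$ (the paper calls them $Y_0,Y_1,Y_2$), exploit the uniform lower bound $a_0$ on $\mu_0$ and $\hat\mu_0$ to control the quotients, and then apply Hoeffding coordinatewise followed by a union bound over the $d+2$ scalar components to obtain the $(4+2d)$ prefactor. The only stylistic difference is that the paper carries out the telescoping of each ratio explicitly (producing a concrete formula for $\alpha$), whereas you package the same computation as a single Lipschitz bound for $\Phi$; since the statement only asserts that $\alpha$ depends on $a_0,b_0,b_1,b_2,c_0,c_1$, your abstraction is entirely adequate.
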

\begin{proof}
    First define
    \begin{align*}
    Y_{\otherind}(\param)&\coloneqq\E[f_{\otherind,\Obsind}(X_{\Obsind},X'_{\Missind},\param)|X_{\Obsind},\Obsind] & Y_{\otherind,\nimputesamples}(\param)\coloneqq\frac{1}{\nimputesamples}\sum_{\imputeind=1}^\nimputesamples f_{\otherind,\Obsind}(X_{\Obsind},X^{'(\imputeind)}_{\Missind}).
    \end{align*}
    Using the definition of the marginal and estimated marginal score we then have that
    \begin{align*}
    Z(\param) &=g_\Obsind(X_{\Obsind})\frac{
    Y_2(\param)}{Y_0(\param)}+\half\frac{\left\|Y_1(\param)\right\|^2}
    {Y_0(\param)^2}+\half\nabla_{X_{\Obsind}}g_\Obsind(X_{\Obsind})^\T\frac{Y_1(\param)}{Y_0(\param)}\\
    Z_{\nimputesamples}(\param) &=g_\Obsind(X_{\Obsind})\lrbr{\frac{
    Y_{2,\nimputesamples}(\param)}{Y_{0,\nimputesamples}(\param)}+\half\frac{\left\|Y_{1,\nimputesamples}(\param)\right\|^2}
    {Y_{0,\nimputesamples}(\param)^2}}+\half\nabla_{X_{\Obsind}}g_\Obsind(X_{\Obsind})^\T\frac{Y_1(\param)}{Y_0(\param)}.
    \end{align*}
    We can therefore write write $|Z_{\nimputesamples}(\param)-Z(\param)|$ as
    \begin{align*}
        |Z_{\nimputesamples}(\param)-Z(\param)|&=g_\Obsind(X_{\Obsind})\left|\frac{Y_{2,\nimputesamples}(\param)}{Y_{0,\nimputesamples}(\param)}+\half\frac{\|Y_{1,\nimputesamples}(\param)\|^2}{Y_{0,\nimputesamples}(\param)^2}-\frac{Y_2\param)}{Y_0(\param)}-\half\frac{\|Y_{3}(\param)\|^2}{Y_{2}(\param)^2}\right|+\left|\frac{\nabla_{X_{\Obsind}}g_\Obsind(X_{\Obsind})Y_1(\param)}{Y_0(\param)}-\frac{\nabla_{X_{\Obsind}}g_\Obsind(X_{\Obsind})Y_{1,\nimputesamples}(\param)}{Y_{0,\nimputesamples}(\param)}\right|\\
        &\leq \frac{\left|Y_{2,\nimputesamples}(\param)-Y_2(\param)\right|}{Y_0(\param)}+\frac{\left|Y_{0,\nimputesamples}(\param)-Y_0(\param)\right|\left|Y_2(\param)\right|}{Y_0(\param)Y_{0,\nimputesamples}(\param)}\\
        &\qquad+\half\frac{\left|\left\|Y_{1,\nimputesamples}(\param)\right\|^2-\left\|Y_1(\param)\right\|^2\right|}{Y_0(\param)^2}
        +\half \frac{\left|Y_{0,\nimputesamples}(\param)^2-Y_0(\param)^2\right|\left\|Y_{1,\nimputesamples}(\param)\right\|^2}{Y_0(\param)^2Y_{0,\nimputesamples}(\param)^2}.\\
        &\qquad+ \frac{\left|\nabla_{X_{\Obsind}}g_\Obsind(X_{\Obsind})^\T(Y_{1,\nimputesamples}(\param)-Y_1(\param))\right|}{Y_0(\param)}+\frac{\left|Y_{0,\nimputesamples}(\param)-Y_0(\param)\right|\left|\nabla_{X_{\Obsind}}g_\Obsind(X_{\Obsind})^\T Y_1(\param)\right|}{Y_0(\param)Y_{0,\nimputesamples}(\param)}\\
        &\leq \frac{\left|Y_{2,\nimputesamples}(\param)-Y_2(\param)\right|}{a_0}+\frac{\left|Y_{0,\nimputesamples}(\param)-Y_0(\param)\right|b_2}{a_0^2}\\
        &\qquad+\half\frac{\left|\left\|Y_{1,\nimputesamples}(\param)\right\|^2-\left\|Y_1(\param)\right\|^2\right|}{a_0^2}
        +\half \frac{\left|Y_{0,\nimputesamples}(\param)^2-Y_0(\param)^2\right|b_1^2}{a_0^4}\\
        &\qquad+ \frac{\left\|Y_{1,\nimputesamples}(\param)-Y_1(\param)\right\|c_1}{a_0}+\frac{\left|Y_{0,\nimputesamples}(\param)-Y_0(\param)\right| c_1b_1}{a_0^2}\\
        &\leq \frac{\left|Y_{2,\nimputesamples}(\param)-Y_2(\param)\right|}{a_0}+\frac{\left|Y_{0,\nimputesamples}(\param)-Y_0(\param)\right|b_2}{a_0^2}\\
        &\qquad+\half\frac{\left\|Y_{1,\nimputesamples}(\param)-Y_1(\param)\right\|^2a_1}{a_0^2}
        +\half\frac{\left|Y_{0,\nimputesamples}(\param)-Y_0(\param)\right|b_0b_1^2}{a_0^4}\\
        &\qquad+ \frac{\left\|Y_{1,\nimputesamples}(\param)-Y_1(\param)\right\|c_1}{a_0}+\frac{\left|Y_{0,\nimputesamples}(\param)-Y_0(\param)\right| c_1b_1}{a_0^2}\\
    \end{align*}
    Now if we define the events
    \begin{align*}
        E_0(\param)&\coloneqq\lrbrc{|Y_{0,\nimputesamples}(\param)-Y_0(\param)|>\eps\lrbr{\frac{a_0^2}{6b_2}\bigwedge\frac{a_0^4}{3b_0b_1^2}\bigwedge\frac{a_0^2}{6c_1b_1}}}\\
        E_1(\param)&\coloneqq\lrbrc{\|Y_{1,\nimputesamples}(\param)-Y_1(\param)\|>\eps\lrbr{\frac{a_0^2}{3b_1}\bigwedge\frac{a_0}{6c_1}}}\\
        E_2(\param)&\coloneqq\lrbrc{|Y_{2,\nimputesamples}(\param)-Y_2(\param)|>\eps\frac{a_0}{6}}
    \end{align*}
    then we have that
    \begin{align*}
        \lrbrc{|Z_{\nimputesamples}(\param)-Z(\param)|<\eps}\subseteq\bigcup_{\otherind=1}^3 E_k(\param)
    \end{align*}
    Using standard Hoeffding bounds for $E_0(\param),E_2(\param)$ and union bounds in conjunction with Hoeffding bounds for $E_1(\param)$ we get that
    \begin{align*}
        \prob(E_0(\param))&\geq 1-2\exp\lrbrc{-\eps^2 \nimputesamples\lrbr{\frac{a_0^4}{36b_0^2b_2^2}\bigwedge\frac{a_0^4}{9b_0^4b_1^4}\bigwedge\frac{a_0^4}{36c_1^2b_1^2b_0^2}}}\\
        \prob(E_1(\param))&\geq1-2d\exp\lrbrc{-\eps^2\nimputesamples \lrbr{\frac{a_0^4}{9b_1^4}\bigwedge\frac{a_0^2}{36c_1^2b_1^2}}}\\
        \prob(E_2(\param))&\geq1-2\exp\lrbrc{-\eps^2 \nimputesamples\frac{a_0^2}{36b_2^2}}.        
    \end{align*}
    Hence
    \begin{align*}
        \prob(|Z_{\nimputesamples}(\param)-Z(\param)|>\eps)&\leq (4+2d)\exp\lrbrc{-\frac{\eps^2m}{\alpha^2}}\quad\text{with}\\
        \alpha&\coloneqq\max\lrbrc{
        \frac{6b_0b_2}{a_0^2},
        \frac{3b_0^2b_1^2}{a_0^4},
        \frac{6c_1b_1b_0}{a_0^2},
        \frac{3b_1^2}{a_0^2},
        \frac{6c_1b_1}{a_0}
        \frac{6b_2}{a_0}}.
    \end{align*}
\end{proof}

\begin{proof}[proof of Theorem \ref{thm:trunc-miss_score_finite}]
    Our strategy will be as follows:
    \begin{itemize}
        \item Use our bound on $|Z_{\nimputesamples}(\param)-Z(\param)|$ from Proposition \ref{prop:trunc-miss_subloss_finite} to bound $|L_{\truncmarg;n,\nimputesamples}(\param)-L_{n}(\param)|$.
        \item Use a covering number argument alongside Lemma \ref{lemma:nested_sums_subgaus} to bound $\sup_\param|L_{\truncmarg;n,\nimputesamples}(\param)-L_{n}(\param)|$.
        \item Use a similar approach to bound $\sup_\param|L_{n}(\param)-L_{\truncmarg}(\param)|$.
        \item Combine these to bound $|L_{\truncmarg;n,\nimputesamples}(\param)-L_{\truncmarg}(\param)|$
    \end{itemize} 
    
    For the first step we have that 
    \begin{align*}
        \prob(|L_{\truncmarg;n,\nimputesamples}(\param)-L_{\truncmarg;n}(\param)|>\eps)&
        \leq\prob\lrbr{\bigcup_{i=1}^n|L_{1,\nimputesamples}^{(i)}(\param)-L_1^{(i)}(\param)|>\eps}\\
        &\leq\sum_{i=1}^n\prob(|L_{1,\nimputesamples}(\param)-L_1(\param)|>\eps)\\
        &=\sum_{i=1}^n\prob(|Z_{\nimputesamples}(\param)-Z(\param)|>\eps)\\
        &=n(4+2d)\exp\lrbrc{-\frac{\eps^2m}{\alpha^2}}
    \end{align*}        
    where
    \begin{align*}
    L_{1,\nimputesamples}^{(i)}(\param) &= g_{\Obsind_i}(X^{(i)}_{\Obsind_i})\lrbr{\frac{
    \frac{1}{\nimputesamples}\sum_{\imputeind=1}^{\nimputesamples} f_{2,\Obsind}(X_{\Obsind}^{(i)},X^{'(\imputeind)}_{\Missind},\param)}{\frac{1}{\nimputesamples}\sum_{\imputeind=1}^\nimputesamples f_{0,\Obsind}(X_{\Obsind}^{(i)},X^{'(\imputeind)}_{\Missind},\param)}+\half\frac{\left\|\frac{1}{\nimputesamples}\sum_{\imputeind=1}^{\nimputesamples}f_{1,\Obsind}(X_{\Obsind_i},X^{'(\imputeind)}_{\Missind},\param)\right\|^2}
    {\lrbr{\frac{1}{\nimputesamples}\sum_{\imputeind=1}^{\nimputesamples}f_{0,\Obsind}(X_{\Obsind}^{(i)},X^{'(\imputeind)}_{\Missind},\param)}^2}}\\
    &\qquad+\frac{\frac{1}{n}\sum_{i=1}^n g_{\Obsind_i}(X^{(i)}_\Obsind)^\T f_1(X^{(i)}_\Obsind,X^{'(\imputeind)}_{\Missind})}{\frac{1}{n}\sum_{i=1}^n f_0(X^{(i)}_\Obsind,X^{'(\imputeind)}_{\Missind})}\\
    L_{1}^{(i)}(\param) &=g_{\Obsind_i}(X^{(i)}_{\Obsind_i})\lrbr{\frac{
    \E\lrbrs{
    f_{2,\Obsind}(X_{\Obsind}^{(i)},X'_{\Missind},\param)\bigg|X_{\Obsind}^{(i)}}}{\E\lrbrs{f_{0,\Obsind}(X_{\Obsind}^{(i)},X'_{\Missind},\param)\bigg|X_{\Obsind}^{(i)}}}+\half\frac{\left\|\E\lrbrs{f_{1,\Obsind}(X_{\Obsind}^{(i)},X'_{\Missind},\param)\bigg|X_{\Obsind}^{(i)}}\right\|^2}
    {\E\lrbrs{f_{0,\Obsind}(X_{\Obsind}^{(i)},X'_{\Missind},\param)\bigg|X_{\Obsind}^{(i)}}^2}}\\
    &\qquad+\frac{\E[g_{\Obsind_i}(X^{(i)}_\Obsind)^\T f_1(X^{(i)}_\Obsind,X^{'(\imputeind)}_{\Missind})|X_{\Obsind}^{(i)}]}{\E[f_0(X^{(i)}_\Obsind,X^{'(\imputeind)}_{\Missind})|X_{\Obsind}^{(i)}]}.
    \end{align*}
    We now try and derive a Lipschitz constant for both $L_{n,\nimputesamples}$ and $L_n$. Define $g:\cal X\rightarrow\R$ by 
    \begin{align*}
        g(\bm x)=\frac{1}{a_0}M_2(\bm x)+\lrbr{\frac{b_2}{a_0^2}+\frac{b_1^2}{2a_0^4}+\frac{c_1b_1}{a_0^2}}M_0(\bm x)+\lrbr{\frac{a_1}{2a_0^2}+\frac{c_1}{a_0}}M_1(\bm x)
    \end{align*}
    Then we have that
    For $L_{n,\nimputesamples}$ we have
    \begin{align*}
        |L_{\truncmarg;n,\nimputesamples}(\param)-L_{n,\nimputesamples}(\param')|&\leq \distance(\param,\param')\underbrace{\frac{1}{n \nimputesamples}\sum_{i=1}^n\sum_{\imputeind=1}^\nimputesamples g(X^{(i)}_\Obsind,X^{'(\imputeind)}_{\Missind})}_{\coloneqq C_{n,\nimputesamples}}
    \end{align*}
    similarly we have
    \begin{align*}
        |L_{\truncmarg;n}(\param)-L_n(\param')|&\leq \distance(\param,\param')\underbrace{\frac{1}{n}\sum_{i=1}^n\E[g(X^{(i)}_\Obsind,X'_{\Missind})|X^{(i)}_\Obsind]}_{\coloneqq C_{n}}.
    \end{align*}
    Using an identical argument to Proposition \ref{prop:ULLN} can get the following bound for any $\eta_1>0$:
    \begin{align*}
        \prob(\sup_{\param\in\parameterset}|L_{\truncmarg}(\param)-L_{\truncmarg;n,\nimputesamples}(\param)|>\eps_1+\eta_1(C_{n,\nimputesamples}+C_n))>N(\eta_1,\parameterset)n(4+2d)\exp\lrbrc{-\frac{\eps^2m}{\alpha^2}}
    \end{align*}
    Hence we now need to bound $C_n$ and $C_{n,\nimputesamples}$. We know that both of these terms converge to $C\coloneqq\E[g(X_{\Obsind},X'_\Obsind)]$.
    To obtain rates on this convergence we make sub-Gaussian statements about $g$. Specifically for $\bm x_{\missind}\in\cal X$ we have that $g(X_{\obsind},\bm x_{\missind})$ is sub-Gaussian with parameter
    \begin{align*}
    \sigma_\obsind\coloneqq\frac{1}{a_0}\sigma_{2,\obsind}+\lrbr{\frac{b_2}{a_0^2}+\frac{b_1^2}{2a_0^4}+\frac{c_1b_1}{a_0^2}}\sigma_{0,\obsind}+\lrbr{\frac{a_1}{2a_0^2}+\frac{c_1}{a_0}}\sigma_{1,\obsind}
    \end{align*}
    We can therefore immediately bound $C_n-C$ using Lemma \ref{lemma:subgauss_expectation} and Hoeffding bounds to get
    \begin{align*}
         \prob(C_{n}-C>\eps)&=\E[\prob(C_{n}-C>\eps|\Obsind)]\\
         &\leq \E\lrbrs{\exp\lrbrc{-\frac{\eps^2n}{8\sigma_\Obsind^2}}}\\
         &\leq\exp\lrbrc{-\frac{\eps^2n}{8\sigma^2}}.
    \end{align*}
    where $\sigma\coloneqq\max_{\Obsind\in\supp(\Obsind)}\sigma_\Obsind$.
    To bound $C_{n,\nimputesamples}-C$ we can use Lemma \ref{lemma:nested_sums_subgaus} to get
    \begin{align*}
        \E\lrbrs{\prob(C_{n,\nimputesamples}-C>\eps)|\Obsind}&=
        \E\lrbrs{\prob\lrbr{\frac{1}{n \nimputesamples}\sum_{i=1}^n\sum_{\imputeind=1}^\nimputesamples g(X^{(i)},Y^{(\imputeind)})-\E[g(X,Y)]>\eps\Bigg|\Obsind}}\\
        &\leq\E\lrbrs{n\exp\lrbrc{-\frac{\eps^2m}{8\sigma_{\Missind}^{'2}}}+\exp\lrbrc{-\frac{\eps^2n}{8\sigma_\Obsind^2}}}\\
        &=n\exp\lrbrc{-\frac{\eps^2m}{8\sigma^{'2}}}+\exp\lrbrc{-\frac{\eps^2n}{8\sigma^2}}
    \end{align*}
    with $\sigma',\sigma'_{\missind}$ define identically to $\sigma,\sigma_\obsind$ with $\sigma_{\otherind,\obsind}$ replaced with $\sigma'_{\otherind,\missind}$ and $\sigma_{\obsind}$ replaced with $\sigma'_{\missind}$.
    As a result we get that
    \begin{align*}
         &\quad~\prob(\sup_{\param\in\parameterset}|L_{\truncmarg;n,\nimputesamples}(\param)-L_{n}(\param)|>\eps_1+2\eta_1(C+\eps))\\
         &<N(\eta_1,\parameterset)n(4+2d)\exp\lrbrc{-\frac{\eps^2m}{\alpha^2}}+
         n\exp\lrbrc{-\frac{\eps^2m}{8\sigma^{'2}}}+2\exp\lrbrc{-\frac{\eps^2n}{8\sigma^2}}
    \end{align*}
    
    Now we have the bound on $\sup_{\param}|L_{\truncmarg;n,\nimputesamples}(\param)-L_{\truncmarg}(\param)|$. We aim to bound $\sup_{\param}|L_{\truncmarg;n}(\param)-L_{\truncmarg}(\param)|$. To that end we have
    \begin{align*}
        |L_1(\param)|<\frac{b_2}{a_0}+\half\frac{b_1^2}{a_0^2}+\frac{c_1b_1}{a_0}=:\alpha'
    \end{align*}
    a.s. and so we can use Hoeffding bounds to get 
    \begin{align*}
        \prob(|L_{\truncmarg;n}(\param)-L_{\truncmarg}(\param)|>\eps)\leq\exp\lrbrc{-\frac{\eps^2n}{2\alpha'}}.
    \end{align*}
    Again using an argument similar to Proposition \ref{prop:ULLN} we have for any $\eta_2>0$
    \begin{align*}
        \prob(\sup_{\param\in\parameterset}|L_{n}(\param)-L_{\truncmarg}(\param)|>\eps_2+\eta_2(2C+\eps_3)\leq
        N(\eta_2,\parameterset)\exp\lrbrc{-\frac{\eps_2^2n}{2\alpha^{'2}}}+\prob(C_n-C>\eps_3)
    \end{align*}

 Combining these two results gives that for any $\eps>0$
    \begin{align}
        &\quad~\prob(\sup_{\param\in\parameterset}|L_{\truncmarg}(\param)-L_{\truncmarg;n,\nimputesamples}(\param)|\geq\eps_1+\eps_2+\eta_1(2C+2\eps_3)+\eta_2(2C+\eps_3))\nonumber\\
        &\leq\prob(\sup_{\param\in\parameterset}|L_{\truncmarg}(\param)-L_{n}(\param)|\geq\eps_1+2\eta_2(C+\eps_3)\cup \sup_{\param\in\parameterset}|L_{\truncmarg;n}(\param)-L_{\truncmarg;n,\nimputesamples}(\param)|\geq\eps_2+\eta_2(2C+\eps_3))\nonumber\\
        &\leq\prob(\sup_{\param\in\parameterset}|L_{\truncmarg}(\param)-L_{n}(\param)|\eps_1+2\eta_1(C+\eps_3))+\prob(\sup_{\param\in\parameterset}|L_{\truncmarg;n}(\param)-L_{\truncmarg;n,\nimputesamples}(\param)|\geq\eps_2+\eta_2(2C+\eps_3))\nonumber\\
        &\leq N(\eta_1,\parameterset)n(4+2d)\exp\lrbrc{-\eps_1^2m\alpha^2}+N(\eta_2,\parameterset)\lrbr{\exp\lrbrc{-\frac{\eps_2^2n}{2\alpha'}}}+\prob(C_n-C>\eps_3)+\prob(C_{n,\nimputesamples}-C>\eps_3)\nonumber\\
        &\leq N(\eta_1,\parameterset)n(4+2d)\exp\lrbrc{-\eps_1^2m\alpha^2}+N(\eta_2\parameterset)\lrbr{\exp\lrbrc{-\frac{\eps_2^2n}{2\alpha'}}}+n\exp\lrbrc{-\frac{\eps_3^2m}{8\sigma^{'2}}}+2\exp\lrbrc{-\frac{\eps_3^2n}{8\sigma^2}}\label{eq:trunc_finite_prob_eps}
    \end{align}

    Take $\eta_1=1/\nimputesamples$ and $\eta_2=1/n$ so that for sufficiently large $n,\nimputesamples$
    $N(\eta_1,\Theta)\leq \exp\lrbrc{\paramdim\log\lrbr{(3/2)\diam(\parameterset)\nimputesamples}}$ and $N(\eta_2,\Theta)=\exp\lrbrc{\paramdim\log\lrbr{(3/2)\diam(\parameterset)n}}$. 
    Plugging this into (\ref{eq:trunc_finite_prob_eps}) gives
    \begin{align*}
        &\prob\lrbr{\sup_{\param\in\parameterset}|L_{\truncmarg}(\param)-L_{\truncmarg;n,\nimputesamples}(\param)|\geq\eps_1+\eps_2+1/\nimputesamples(2C+2\eps_3)+1/n(2C+\eps_3)}\\
        &\leq n(4 + 2d)\exp\lrbrc{\paramdim\log\frac{3}{2}\diam(\parameterset)\nimputesamples-\eps_1^2n\alpha}+\exp\lrbrc{\paramdim\log\frac{3}{2}\diam(\parameterset)n-\frac{\eps_2^2n}{2\alpha'}}+(2+n)\exp\lrbrc{-\frac{\eps_3^2n}{8(\sigma\vee\sigma')^2}}\\
    \end{align*} 
    Now if we take each of these terms to be equal to $\delta/3$ gives for sufficiently large $n,\nimputesamples$
    \begin{align*}
        \prob\bigg(\sup_{\param\in\parameterset}|L_{\truncmarg}(\param)-L_{\truncmarg;n,\nimputesamples}(\param)|\geq&\beta_1\sqrt{\frac{\paramdim\log(dn \nimputesamples\diam(\parameterset)/\delta)}{\nimputesamples}}+\beta_2\sqrt{\frac{\paramdim\log(n\diam(\parameterset)/\delta)}{n}}\\
        &~+\beta_3\lrbr{\frac{n+\nimputesamples}{n \nimputesamples}}\lrbr{C+\sqrt{\frac{\log(n/\delta)}{n}}}\bigg)<\delta.
    \end{align*}
    where $\beta_1=\frac{27}{\alpha}, \beta_2=9\alpha'$, $\beta_3=10(\sigma\vee\sigma')$.
    As there exists $\param^*\in\parameterset$ a minimiser of $L_{\truncmarg}(\param)$ we now have that
    \begin{align*}
        |L(\param_n)-L(\param^*)|&\leq|L(\param_n)-L_n(\param_n)|+|L_n(\param^*)-L(\param^*)|.
    \end{align*}
    Finally we know that $J_{\truncmarg}(\param)= L_{\truncmarg}(\param)+C$ and under a correctly specified model $L(\param^*)=0$. Therefore
    we have that 
    
    \begin{align*}
        \prob(|J(\param_{n,\nimputesamples})|>2\eps)\leq\prob(\sup_{\param\in\parameterset}|L_{\truncmarg;n}(\param)-L_{\truncmarg}(\param)|>\eps)
    \end{align*}
    and so we have our result simply replacing $\beta_k$ with $2\beta_k$.
\end{proof}

\subsection{Gradient First Proofs}\label{app:grad_first_proofs}
\begin{proof}[Proof of Lemma \ref{lemma:gradfirst_identities}]
    First we have that for any $\obsind,~\bmx_{\obsind}$,
    \begin{align*}
        s_{\theta;\obsind}(\bm x_{\obsind})&= \nabla_{\bm x_\obsind}\log\int p_\theta(\bm x)\diff \bm x_{\missind}\\
        &=\frac{\nabla_{\bm x_\obsind}\int p_\theta(\bm x)\diff \bm x_{\missind}}{\int p_\theta(\bm x)\diff \bm x_{\missind}}\\
        &=\frac{\int \nabla_{\bm x_\obsind} p_\theta(\bm x)\diff \bm x_{\missind}}{p(\bm x_{\missind})}\\
        &=\int \frac{p_\theta(\bm x)}{p_\theta(\bm x_{\missind})}\nabla_{\bm x_\obsind}\log p_\theta(\bm x)\diff \bm x_{\missind}\\
        &=\E_{\bm x_{\missind}|\bm x_{\obsind};\theta}[s_\theta(\bm x)_\obsind].
    \end{align*}
    
    Taking expectations on both side w.r.t. $(\Obsind, X_{\Obsind})$ proves equation \eqref{eq:mcmc_score_equiv}.

    For \eqref{eq:mcmc_gradscore_equiv} we have again for any $\obsind,~\bm x_{\obsind}$,
    \begin{align*}
        \nabla\innerexp_{X'_{\missind}\sim p_\theta}[g_\theta(\bm x_{\obsind},X'_{\missind})]&=\int\nabla(p_\theta(\bm x_{\missind}|\bm x_{\obsind})g_\theta(\bm x))\diff\bm x_{\missind}\\
        &=\int p_\theta(\bm x_{\missind}|\bm x_{\obsind})\nabla g_\theta(\bm x)\diff\bm x_{\missind}~+~ \int\nabla p_\theta(\bm x_{\missind}|\bm x_{\obsind})g_\theta(\bm x)\diff\bm x_{\missind}\\
        &=\int p_\theta(\bm x_{\missind}|\bm x_{\obsind})\nabla g_\theta(\bm x)\diff\bm x_{\missind}~+~\int p_\theta(\bm x_{\missind}|\bm x_{\obsind})\nabla\log p_\theta(\bm x_{\missind}|\bm x_{\obsind})g_\theta(\bm x)\diff\bm x_{\missind}\\
        &=\innerexp[\nabla g_\theta(\bm x_{\obsind},X'_{\missind})]~+~\innerexp[\nabla\log p_\theta(\bm x_{\obsind},X'_{\missind})g_\theta(\bm x_{\obsind},X'_{\missind})]\\
        &\qquad-\innerexp[\nabla\log p_\theta(\bm x_{\obsind})g_\theta(\bm x_{\obsind},X'_{\missind})]\\
        &=\innerexp[\nabla g_\theta(\bm x_{\obsind},X'_{\missind})]+\innerexp[\nabla\log p_\theta(\bm x_{\obsind},X'_{\missind})g_\theta(\bm x_{\obsind},X'_{\missind})]\\
        &\qquad-\score_{\theta;\obsind}(\bm x_{\obsind})\innerexp[g_\theta(\bm x_{\obsind},X'_{\missind})]\\        
        &=\innerexp[\nabla g_\theta(\bm x_{\obsind},X'_{\missind})]+\innerexp[\nabla\log p_\theta(\bm x_{\obsind},X'_{\missind})g_\theta(\bm x_{\obsind},X'_{\missind})]\\
        &\qquad-\innerexp[\nabla\log p_\theta(x_{\obsind},X'_{\missind})]\innerexp[g_\theta(\bm x_{\obsind},X'_{\missind})]\\
        &=\innerexp[\nabla g_\theta(\bm x_{\obsind},X'_{\missind})]+\cov_{p_\theta}(s_\theta(\bm x_{\obsind},X'_{\missind}),g_\theta(\bm x_{\obsind},X'_{\missind})).
    \end{align*}
    Where again here $\nabla$ represents the gradient w.r.t. either $\bm x_{\obsind}$ or $\param$.
    We can again take expectations on both sides w.r.t. $\Obsind,X_{\Obsind}$ to get our result.
\end{proof}

\begin{proof}[Proof of Corollary \ref{cor:grad_first_gradient}]
    First define
    \begin{align*}
         \scoreloss_{\marg}(\param;\bm x_{\obsind})\coloneqq&\sum_{j\in\obsind}2\partial_j\score_{\obsind;\param}(\bm x_{\obsind})_j+\score_{\obsind;\param}(\bm x_{\obsind})_j^2
    \end{align*}
    so that $\E[\scoreloss_{\marg}(\param;X_{\Obsind})]=\scoreloss_{\marg}(\param)$.
    Then using our two score identities, \eqref{eq:mcmc_score_equiv} \& \eqref{eq:mcmc_gradscore_equiv} we can re-write $L(\theta;\bm x)$ as
    \begin{align*}
        \scoreloss_{\marg}(\theta;\bm x)&=\sum_{j\in\obsind}\innerexp[\score_{\theta}(\bm x_{\obsind},X'_{\missind})_j]^2+2\partial_j\innerexp[\score_{\theta}(\bm x_{\obsind},X'_{\missind})_j]\quad\text{by \eqref{eq:mcmc_score_equiv}}\\
        &=\sum_{j\in\obsind}\innerexp[\score_{\theta}(\bm x_{\obsind},X'_{\missind})_j]^2+\innerexp[\partial_j\score_{\theta}(\bm x_{\obsind}, X'_{\missind})_j]+\var(s_\theta(\bm x_{\obsind},X'_{\missind})_j)\quad\text{by \eqref{eq:mcmc_gradscore_equiv}}\nonumber\\
        &=\sum_{j\in\obsind}-\innerexp[s_\theta(\bm x_{\obsind},X'_{\missind})_j]^2+
        2\innerexp[s_\theta(\bm x_{\obsind},X'_{\missind})_j^2]+2\innerexp[\partial_i s_\theta(\bm x_{\obsind},x_{\missind})_j]\label{eq:mcmc_loss}
    \end{align*}
    Where all expectations are w.r.t. $X'_{\missind}|X_{\obsind}=\bm x_{\obsind};\theta$
    Now using the fact that 
    \begin{align*}
        \nabla_\theta\lrbr{\innerexp[\score_{\param}(\bm x_{\obsind},X^'_{\missind})_j]}^2&=2E[\score_{\param}(\bm x_{\obsind},X^'_{\missind})_j]\nabla_\theta\innerexp[\score_{\param}(\bm x_{\obsind},X^'_{\missind})_j]
    \end{align*}
    and using \eqref{eq:mcmc_gradscore_equiv} again
    on each term of the above gives followed by taking expectations w.r.t. $\Obsind,X_{\Obsind}$ gives the result.
\end{proof}

\subsubsection{Truncated Score Matching}\label{app:truncmargscore_gradient}
\begin{proof}[Proof of Corollary \ref{cor:truncmargscore_gradient}]
    Now using Lemma \ref{lemma:gradfirst_identities}, we know that this can be re-written as
\begin{align*}
    &\sum_{j\in\obsind}g_\obsind(\bm x_{\obsind})_j(\innerexp[\score_{\param}(\bm x_{\obsind},X'_{\missind})_j]^2+2\var'(\score_{\param}(\bm x_{\obsind},X'_{\missind})_j)+2\innerexp[\partial_j\score_{\theta}(\bm x_{\obsind},X'_{\missind})_j])+2\partial_j g(\bm x_{\obsind})_j\innerexp[\score_{\param}(\bm x_{\obsind},X'_{\missind})]\\
    =&\sum_{j\in\obsind}g_\obsind(\bm x_{\obsind})_j(-\innerexp[\score_{\param}(\bm x_{\obsind},X'_{\missind})_j]^2+2\innerexp[\score_{\param}(\bm x_{\obsind},X'_{\missind})^2_j]+2\innerexp[\partial_j\score_{\theta}(\bm x_{\obsind},X'_{\missind})_j])+2\partial_j g(\bm x_{\obsind})_j\innerexp[\score_{\param}(\bm x_{\obsind},X'_{\missind})]
\end{align*}

Now taking gradient w.r.t. $\theta$ we
get
\begin{align*}
    \nabla_\theta L(\theta;\bm x_\obsind)&=\sum_{j\in\obsind}g_\obsind(\bm x_{\obsind})_j\Big\{
    -2\innerexp[\score_{\param}(\bm x)_j](
    \innerexp[\nabla_\theta \score_{\param}(\bm x)_j]+\innercov(\nabla_{\param} \ldensity_\theta(\bm x), \score_{\param}(\bm x)_j))\\
    &+2(\innerexp[\nabla_\theta s_\theta(\bm x)_j^2]+\innercov\lrbr{\nabla_\theta \ldensity_\theta(\bm x), \score_{\param}(\bm x)_j^2})\\
    &+2(\innerexp[\nabla_\theta \partial_j \score_{\param}(\bm x)_j]+\innercov\lrbr{\nabla_\theta \ldensity_\theta(\bm x), \partial_j \score_{\param}(\bm x)_j})\Big\}\\
    &+2\partial_j g(\bm x_{\obsind})_j\Big\{\innerexp[\nabla_\theta \score_{\param}(x)_j]+\innercov(\nabla_\theta \ldensity_\theta(\bm x), \score_{\param}(\bm x)_j)\Big\}.
\end{align*}
\end{proof}

\subsection{IW and Gradient First Relationships}\label{app:iw_var_equivalence_proofs}
\begin{proof}[Proof of Lemma \ref{lemma:iw_var_equivalence}]
    We first have that,
    \begin{align*}
        \hat\score_{\theta;\obsind}(\bm x_{\obsind})&=\nabla_{\bm x_\obsind} \log \inv{\nimputesamples}\sum_{i=1}^\nimputesamples w_i\\
        &=\frac{\inv{\nimputesamples}\sum_{i=1}^\nimputesamples\nabla_{\bm x_\obsind} w_i}{\inv{\nimputesamples}\sum_{i=1}^\nimputesamples w_i}\\
        &=\inv{\nimputesamples}\sum_{i=1}^\nimputesamples \bar w_i \nabla_{\bm x_\obsind}\log \udensity_{\param}(\bm x_\obsind,X^{\prime(i)}_{\missind})\\
        &=\hat\E_{iw}[\nabla_{\bm x_\obsind}log \udensity_{\param}(\bm x_\obsind,X^{\prime(i)}_{\missind})]=\hat\E_{iw}[\score_{\param}(\bm x_\obsind,X^{\prime(i)}_{\missind})_\obsind].
    \end{align*}
    
    Where the penultimate result uses the fact that $\nabla_{\bm x_\obsind} w_i = w_i\log \udensity_{\param}(\bm x_{\obsind}, X^{\prime(i)}_{\missind})$

    Additionally,
    \begin{align*}
        \nabla_{\bm x_\obsind}hat E_{iw}[g_\theta(\bm x_{\obsind},X'_{\missind})]
        &=\frac{\nabla_{\bm x_\obsind}\sum_{k=1}^n w_k g_\theta(\bm x_{\obsind},X^{\prime(k)}_{\missind})}{\sum_{k=1}^r w_k}
        -\frac{\lrbr{\nabla_{\bm x_\obsind}\sum_{k=1}^r w_k}\lrbr{\sum_{k=1}^n w_k g_\theta(\bm x_{\obsind},X^{\prime(k)}_{\missind})}}{\lrbr{\sum_{k=1}^r w_k}^2}        
    \end{align*}
    For the second term we can again use 
    $\nabla_{\bm x_\obsind}w_k = w_k \nabla_{\bm x_\obsind}log p_\theta(\bm x_{\obsind},X^{\prime(k)}_{\missind})$ to write this as
    \begin{align*}
        \hat \E_{iw}[\nabla_{\bm x_\obsind}log p_\theta(\bm x_{\obsind},X'_{\missind})]\E_{iw}[g_\theta(\bm x_{\obsind}, X'_{\missind})]
    \end{align*}
    The first term we can write as
    \begin{align*}
        &\frac{\sum_{i=1}^n w_k\log p_\theta(\bm x_{\obsind},X'_{\missind})g_\theta(\bm x_{\obsind}, X'_{\missind})}{\sum_{i=1}^r w_k}+\frac{\sum_{i=1}^n w_k\nabla_{\bm x_\obsind}g_\theta(\bm x_{\obsind}, X'_{\missind})}{\sum_{i=1}^r w_k}\\
        =&\hat\E_{iw}[\log p_\theta(\bm x_{\obsind},X'_{\missind})g_\theta(\bm x_{\obsind}, X'_{\missind})]+\hat\E_{iw}[\nabla_{\bm x_\obsind}g_\theta(\bm x_{\obsind}, X'_{\missind})]
    \end{align*}
    Combining these 2 gives our desired results.
\end{proof}

\begin{proof}[Proof of Corollary \ref{cor:iw_var_equivalence}]
The exact same arguments give us the second result but with the importance weighting identities \eqref{eq:mcmc_approxscore_equiv} and \eqref{eq:mcmc_gradapproxscore_equiv} replacing \eqref{eq:mcmc_score_equiv} \& \eqref{eq:mcmc_gradscore_equiv}.
\end{proof}

\section{Additional Score Matching Details \& Extensions}\label{app:score_ext}
In this section we give some additional details on score matching and introduce some pre-existing score matching extensions and methods. For some of these approach we can adapt our method to work with them while others act as comparison points for our approach.
\subsection{Classical Score Matching}\label{app:classic_score_matching}
The assumptions for the classical score matching result presented in \eqref{eq:score_equiv}
\begin{assumption}For 
    \begin{enumerate}[(a)]
        \item The pdf $p(\bm x)$ is differentiable w.r.t. $\bm x$;
        \item Our score estimating function $\score_\param$ is differentiable w.r.t. $\bm x$;
        \item $\E[\|\score_\param(X)\|^2]$,$\E[\|\score(X)\|^2]<\infty$;
        \item $p(\bm x)\score_\param(\bm x)\conv\bm 0$ whenever $\|\bm x\|\conv\infty$.
    \end{enumerate}
\end{assumption}

\subsection{Zeroed Score Matching and MissDiff}\label{app:missdiff}
In this section we take $\bm x_{\obsind}^0$ to be the vector in $\R^{d}$ with $(\bm x_{\obsind}^0)_i= x_i$ if $i\in\obsind$ and $0$ otherwise and then take $X_{\obsind}^0$ to be the RV equivalent.

MissDiff is a generative modelling technique that aims to learn generative models from corrupted tabular data using diffusion models with denoising score matching. Throughout their method $\score_{\param}$ is assumed to be a neural network. The core idea is to replace the standard score matching objective with
\begin{align*}
    \E_{t,X_{\Obsind}(0),X_{\Obsind}(t)}\lrbrs{\mu(t)\|\score(X_{\Obsind}^{0}(t), t)_\Obsind-\nabla_{X_\Obsind(t)}\log p(X_{\Obsind}(t)|X_{\Obsind}(0))}.
\end{align*}

Essentially, missing values are zero imputed and then the score is tested only on output dimensions whose input dimensions are non-missing. The idea of this approach is that it trains $\score_{\param}(\bm_\obsind^0(t),t)_\obsind$ to approximate the true marginal score $\score_{\obsind; \param}(\bm x_{\obsind}(t),t)$ which in turn encourages $\score_\param(\bm x(t),t)$ to approximate the full score $\score_{\param}(x(t),t)$. 

For comparison purposes, we adapt this in two ways to create Zeroed Score Matching. Firstly we change the objective to standard score matching and secondly we no longer require $\score$ to be a neural network. Thus our adapted version of the MissDiff objective is
\begin{align}
    \estscoreloss_{\zeroed}(\param)=\E\lrbrs{\nabla_{(X^0_\Obsind)_\Obsind}\cdot \score_{\param}(X_{\Obsind}^0)_\Obsind+\|\score_\param(X_{\Obsind}^{0})_\Obsind\|}.
\end{align}

The key issue with this approach is that with $\score_{\param}$ no longer necessarily a neural network it is not reasonable (or indeed even possible) for $\score_{\param}$ to model both the joint and marginal scores. In other words we cannot expect both $\score_\param(\bm x_{\obsind}^0)=\score_\obsind(\bm x_{\obsind})$ and $\score_\param(\bm x)=\score(\bm x)$ making it a naive marginalisation method for the score. 

We can give a brief example with a multidimensional normal distribution.
Supposed that $X\sim N(\bm 0,\Sigma)$ with $\prec\coloneqq\Sigma^{-1}$.
Then the score is $\score(\bm x)=-\prec\bm x$ which we would model with $\score_{\param}(\bm x)=-\prec_{\param}\bm x$. Under the Zeroed scheme, we would take the marginal score to be 
\begin{align*}
    \score_{\obsind,\param}(\bm x)&=\lrbr{-\prec\bm x_{\obsind}^0}_\obsind\\
    &=-\prec_{\obsind,\obsind;\param}\bm x_{\obsind}.
\end{align*}
However we know that if $X\sim N(\bm 0, \Sigma_\param)$ with $\Sigma_\param=\prec_\param^{-1}$ then $X_{\obsind}\sim N(\bm 0,\Sigma_{\obsind,\obsind;\param})$ and so we should take
\begin{align*}
    \score_{\obsind,\param}(\bm x)&=-(\Sigma_{\obsind,\obsind;\param})^{-1}\bm x_{\obsind}=-((\prec_\param^{-1})_{\obsind,\obsind})^{-1}\bm x_{\obsind}
\end{align*}
and crucially, $\prec_{\obsind,\obsind}\neq((\prec^{-1})_{\obsind,\obsind})^{-1}$ unless $\prec=I$.

We illustrate this for our simulated experimental settings in Appendices \ref{app:experiment_untruncated_normal} \& \ref{app:experiment_vary_stars}.

We also explore the implications of this for the marginal Fisher divergence for the normal in Appendix \ref{app:marg_fisher_fornormal}.
 
\subsection{Sliced Score Matching}\label{app:sliced_score_matching}
One issue with score matching is that $\nabla_{\bm x}cdot\score_{\param}(\bm x)$ is computationally expensive to compute for large $\datadim$. A solution to this was proposed in \citet{Song2020a} where, instead of testing the full score 1-dimensional projections/slices are tested instead. This is done by introducing another RV $V$ on $(\cal X,\cal B_{\cal X})$ satisfying $\E[V]=0$ and $\E[VV^\T]$ positive semi-definite.

The original objective is then
\begin{align*}
    \fisherdiv_{\sliced}(\param)\coloneqq\E\lrbrs{\lrbrc{V^\T(\score(X)-\score_\param(X))}^2}
\end{align*}
Which then leads to the following equivalence 
\begin{align*}
    \scoreloss_{\sliced}(\param)&\coloneqq\E[2\lrbrc{\nabla_{X}(V^\T\score_{\param}(X))}^\T V+(V^\T \score_{\param}(X))^2]\\
    &=\fisherdiv_{\sliced}-C.
\end{align*}
which is less computationally expensive w.r.t. $d$ and can be approximated with samples of $X$ and $V$. For the proof of this results and the precise conditions see \citet{Song2020a}.

\subsection{Denoised Score matching}
Denoised score matching is another adaptation which removes the need to takes derivatives of the score all together \citep{Vincent2011}. As we will see later on it is also the method used most prominently in diffusion processes.  

In denoising score matching we construct a collection of RVs $\{X(t)\}_{t=0}^T$ with $X(0)=X$ and $X(t)|X(0)\sim N(\bm m(t)X(0),\sigma(t)I_{\paramdim})$. We assume that the noise $\sigma(t)$ grows sufficiently so that $X(T)$ is approximately an Isotropic Gaussian. The aim is now to estimate $\score(x,t)\coloneqq \nabla_{\bm x}\log p_t(\bm x)$ where $p_t$ is the PDF of $X(t)$. The denoising score matching objective is then.
\begin{align*}
    \E[\nu(t)\|\score_\param(X(t),t)-\nabla_{X(t)}\nabla_{X(t)}\log p(X(t)|X(0))\|^2_2]
    &=\E\lrbrs{\nu(t)\|\score_\param(X(t),t)+p(X(t)|X(0))\|^2}\\
\end{align*}
where here $t$ is treated as random and uniformly distributed on $\{1,\dotsc, T\}$, $p(\bm x(t)|\bm x(0))$ is the transition kernel from $X(0)$ to $X(t)$ and $\obsind(t)$ is a self-specified weighting function over time. Due to our choice of noising process, $\nabla_{\bm x(t)}\log p(\bm x(t)|\bm x(0))=\inv{\sigma(t)}{\bm x(t)-\bm m(t)\bm x(0)}$.

\begin{remark}
    Convention is to up-weight larger values of $t$ as earlier parts of the reverse diffusion process (hence later parts of the original diffusion process) are seen as the most complex and where most of the data's structure is learned. 
\end{remark}

Our estimate is thus
\begin{align*}
    \hat\param=\argmin_{\param\in\R^\paramdim}\frac{1}{n}\sum_{i\in n}\nu(t_i)\|s_\param(X^{(i)}(t_i),t_i)+\nabla_{X^{(i)}}\log p(X^{(i)}(t)|X^{(i)})(0))\|^2
\end{align*}
where $t_1,\dotsc,t_n$ are sampled uniformly from $[0,T]$.

\begin{remark}
    Originally denoising score matching was proposed for a single fixed $t$ however for the purpose of generative modelling and annealed Langevin dynamics, multiple noise levels or even a continuous noising processes are used.
\end{remark}

\section{Additional Details}
\subsection{Objectives}
\subsubsection{Marginal IW Score Matching Objective}\label{app:obj_marg_iw}
Let $\{X_{\Obsind_i}^{\prime(i,\imputeind)}\}_{\imputeind=1}^\nimputesamples$ be IID copies of $X'_{\Obsind_i}$ with known PDF $\iwdensity(\bm x_{\Obsind_i})$.
Our full sample objective is given by
\begin{align}
    \estscoreloss_{\marg;n,\nimputesamples}(\param)&\coloneqq\frac{1}{n}\sum_{i=1}^n2\nabla_{X^{(i)}_{\Obsind_i}} \cdot\lrbr{\frac{\frac{1}{\nimputesamples}\sum_{\imputeind=1}^\nimputesamples\frac{\nabla_{X^{(i)}_{\Obsind_i}}\udensity_\param(X^{(i)}_{\Obsind_i},X^{'(i,\imputeind)}_{\Missind_i})}{\iwdensity(X^{'(i,\imputeind)}_{\Obsind_i})}}{\frac{1}{\nimputesamples}\sum_{\imputeind=1}^\nimputesamples \frac{\udensity_\param(X^{(i)}_{\Obsind_i})}{\iwdensity(X^{'(i,\imputeind)}_{\Missind_i})}}}
    +\left\|\frac{\frac{1}{\nimputesamples}\sum_{\imputeind=1}^\nimputesamples\frac{\nabla_{X^{(i)}_{\Obsind_i}}\udensity_\param(X^{(i)}_{\Obsind_i},x^{'(i,\imputeind)}_{\Missind_i})}{\iwdensity(X^{'(i,\imputeind)}_{\Obsind_i})}}{\frac{1}{\nimputesamples}\sum_{\imputeind=1}^\nimputesamples \frac{\udensity_\param(X^{(i)}_{\Obsind_i})}{\iwdensity(X^{'(i,\imputeind)}_{\Missind_i})}}\right\|^2.\label{eq:est_marg_iw_obj}
\end{align}

\subsubsection{Marginal Truncated IW Score Matching }\label{app:obj_truncmarg_iw}
Our full sample objective for truncated IW missing score matching is given by
\begin{align}
    \estscoreloss_{\marg;n,\nimputesamples}(\param)&\coloneqq\frac{1}{n}\sum_{i=1}^n\sum_{\dimind\in\Obsind_i} \truncfunc(X_{\Obsind_i^{(i)}})_{\dimind}
    \lrbrc{2\partial_{\dimind}\lrbr{\frac{\frac{1}{\nimputesamples}\sum_{\imputeind=1}^\nimputesamples \frac{\partial_{\dimind}\udensity_\param(X^{(i)}_{\Obsind_i},X^{'(i,\imputeind)}_{\Missind_i})}{\iwdensity(X^{'(i,\imputeind)}_{\Obsind_i})}}
    {\frac{1}{\nimputesamples}\sum_{\imputeind=1}^\nimputesamples \frac{\udensity_\param(X^{(i)}_{\Obsind_i})}{\iwdensity(X^{'(i,\imputeind)}_{\Missind_i})}}}
    +\lrbr{\frac{\frac{1}{\nimputesamples}\sum_{\imputeind=1}^\nimputesamples\frac{\partial_{\dimind}\udensity_\param(X^{(i)}_{\Obsind_i},x^{'(i,\imputeind)}_{\Missind_i})}{\iwdensity(X^{'(i,\imputeind)}_{\Obsind_i})}}{\frac{1}{\nimputesamples}\sum_{\imputeind=1}^\nimputesamples \frac{\udensity_\param(X^{(i)}_{\Obsind_i})}{\iwdensity(X^{'(i,\imputeind)}_{\Missind_i})}}}^2}\label{eq:est_truncmarg_iw_obj}\\
    &\qquad\qquad\qquad~~~+~2\partial_\dimind \truncfunc(X_{\Obsind_i})^{(i)}\lrbr{\frac{\frac{1}{\nimputesamples}\sum_{\imputeind=1}^\nimputesamples\frac{\partial_{\dimind}\udensity_\param(X^{(i)}_{\Obsind_i},X^{'(i,\imputeind)}_{\Missind_i})}{\iwdensity(X^{'(i,\imputeind)}_{\Obsind_i})}}{\frac{1}{\nimputesamples}\sum_{\imputeind=1}^\nimputesamples \frac{\udensity_\param(X^{(i)}_{\Obsind_i})}{\iwdensity(X^{'(i,\imputeind)}_{\Missind_i})}}}\nonumber.
\end{align}
\subsection{Variational Modelling Details}
For our purposes our variational model $\vardensity$ has to able able to model $p_{\param}(\bm x_{\missind}|\bm x_{\obsind})$ for any value of $\obsind\subseteq\datadim$. For our model we take $X'_{\missind}|X_{\obsind}=\bm x_{\obsind}\sim N(\variationalmean(\bm x_{\obsind}),\sigma^2_{\varparam}I)$.

Hence we require $\mu_\obsind$ to take in any subset of coordinates and output the complementing coordinates
We achieve this by creating $\variationalmean'$, a $d$-dim in to $d$-dim out Neural Network with 2 hidden layers of 200 nodes as per \citet{Burda2016}. For the input we replace $\bm x_{\obsind}$ with $\bm x_{\obsind}^0$ the zero filled version inline with the approach taken in MissDiff \citep{Ouyang2023}. That is $(\bm x_{\obsind}^0)_\dimind=0$ if $\dimind\notin\obsind$ and is $\bm x_\dimind$ otherwise. As output we then simply take the appropriate coordinates. We can write this more succinctly as
\begin{align*}
    \variationalmean(\bm x_{\obsind})=\variationalmean'(\bm x_{\obsind}^0)_\missind
\end{align*}

We also experimented with making $\variationalmean'$ a $2d$ dim input NN and taking
\begin{align*}
    \variationalmean(\bm x_{\obsind})=\variationalmean'(\bm x_{\obsind}^0,m)_\missind
\end{align*}
where $m$ is the d-dimensional binary mask for the corruption similarly to GAIN \citep{Yoon2018} however this did not seem to provide any advantage. We also experimented with making $\sigma_{\varparam}$ depend upon $\bm x_{\obsind}$ however we found this made training much more unstable.

\subsection{Experiment Implementation Details}\label{app:experiment_details}
\subsubsection{Normal Distribution estimation}\label{app:experiment_details_normal}
 The mean is taken fixed at $\mu=(0.5,\dots,0.5)^\T$. We randomly construct the covariance by first sampling eigenvalues uniformly on $[0.5,1.5]$ and then sampling choosing eigenvectors uniformly on the unit hypersphere for the first 9 dimensions. We then construct the the 10th dimension strong dependence on only the first dimension by taking $X_{10}=\half X_{1}+\half Z$ with $\var(Z)=\var(X_{1})$. The data is then truncated to be above the 10\% quantile or each of the first three dimensions.

In each case batches of 100 samples were taken and a learning rate of $0.01$ was used with Adam used as the optimisation algorithm. Our score model was parameterised in terms of the Cholesky decomposition of the precision matrix in order to ensure the Precision estimate stayed positive definite. For our Importance weighting and the EM approach of \citet{Uehara2020}, an isotropic Gaussian with mean 0 and coordinatewise variance of 16 was used. 

\subsubsection{Gaussian Graphical Model Estimation}
For Gaussian graphical model estimation we add L1 regularisation thereby modifying our objective to be
\begin{align*}
    \scoreloss_{\truncmarg}(\param)+\lonepenal\sum_{1\geq\dimind<\dimind'\leq\datadim}\left|\prec_{\dimind,\dimind'}\right|
\end{align*}
where $\param=(\mu,\prec)$ with $\prec$ being our precision estimate. We minimise the objective by performing proximal gradient descent on $\scoreloss_{\truncmarg}(\param)$. Specifically for a learning rate $\nu$, a current estimate of $\param$ given by $\param_t$ and an estimate of the gradient given by $\eta_t$. We take our estimate to be $\thresholdfunc_{\lonepenal,\nu}(\param_t-\nu \eta_t)$ where 
\begin{align*}
    \thresholdfunc(\beta)\coloneqq\begin{cases}
    \beta-\lonepenal\eta &\text{for }\beta>\lonepenal\eta\\
    0&\text{for }-\lonepenal\eta\leq\beta\geq\lonepenal\eta\\
    \beta+\lonepenal\eta &\text{for }\beta<-\lonepenal\eta\\    
    \end{cases}
\end{align*}

In our experiments we start with a precision estimate being $\prec=I$ and with a large value of $\lonepenal$ and, run 200 iterations, and then decrease the value of $\lonepenal$ every 10 subsequent iterations for 100 sequentially smaller values of $\lonepenal$. At the end of each block of 10 iterations the precision matrix is taken and an adjacency matrix is produced by thresholding the entry's values at some small value. The TPR and FPR are then calculated for each of these increasingly dense matrices and then an ROC curves plotted using these values. The AUC of this ROC is then computed which is the statistic reported in the plots.

We took L1 regularisation to ensure that at the highest level the graph had no edges and at the lowest level the graph had all possible edges. For this experiment, this was achieved with $\lonepenal\in(10^{-1.7},10^{-4})$. Throughout we took the threshold for edge presence to be $0.002$.

\subsubsection{Real World Data}\label{app:real_world_data_details}
For these experiments we chose the range of L1 regularisation similarly to ensure a full range of edge densities (here this meant $\lonepenal\in(10^1,10^{-4})$ and then constructed a semi-automated procedure for choosing the threshold for edge presence. There is precedent for choosing the detection threshold after L1 regularisation as per \citet{Fattahi2019}. We did this by choosing the non-zero threshold at the smallest value that gave a sufficiently smooth increase in edge density between the snapshots where we sample our estimated adjacency matrices.

Specifically the smoothness we were trying to achieve was avoid sudden decreases in the edge density as our regularisation level decreased. Our rough measure of this was to sum up all the negative jumps between sequential adjacency matrix estimates where the previous jump had been positive. This sum was then taken as our measure of "jumpiness" with larger values representing a larger level. Visually inspecting the change in positive level over time we find that a level of $0.01$ for high-dimensional cases and a level $0.05$ for low dimensional cases represented a relatively smooth change in edge density. The smallest non-zero threshold that satisfied this was then chosen by iterative shrinking grid search.

To test the performance of our adjacency matrix estimates, we estimated the adjacency matrix using standard score matching on the non-corrupted data. We estimated these adjacency matrices at 5 pre-determined values of edge densities given specifically, $0.05, 0.1, 0.15, 0.2 0.25$. This lead to 5 different "ground truth" adjacency matrices. For each method, the AUC was calculated for each of these "ground truth" adjacency matrices and these 5 AUCs averaged. For each missing probability, 25 random samples of the corruption were produced and this AUC metric calculated. The average of these was then plotted alongside 95\% confidence intervals.

The S\&P 100 was taken from the S\&P 500 data between 2013 and 2018 given in \url{https://www.kaggle.com/datasets/camnugent/sandp500} with the 100 stocks that made up the S\&P 100 taken from roughly the mid-point of the time period which we obtained from \url{https://en.wikipedia.org/w/index.php?title=S%26P_100&oldid=666413597}.

The yeast data was obtained from \url{https://ftp.ncbi.nlm.nih.gov/geo/series/GSE1nnn/GSE1990/matrix/} which was accessed via \url{https://www.ncbi.nlm.nih.gov/geo/query/acc.cgi?acc=GSE1990} and other subsets have previously been studied in the context of GGM estimation in \citet{Yang2015}.

All data can also be found in the GitHub repository at \url{https://github.com/joshgivens/ScoreMatchingwithMissingData}.

\end{document}